\documentclass{article}

\usepackage{microtype}
\usepackage{graphicx}
\usepackage{booktabs} % for professional tables
\usepackage{multirow}
\usepackage{wrapfig}
\usepackage{hyperref}
\usepackage{tabularx}
\usepackage{array}
\newcolumntype{C}{>{\centering\arraybackslash}X}
\usepackage{tcolorbox}

\usepackage[accepted]{ICML/icml2026}

\usepackage{amsmath}
\usepackage{amssymb}
\usepackage{mathtools}
\usepackage{amsthm}

\usepackage[capitalize,noabbrev]{cleveref}

\usepackage{bm}
\usepackage{amsfonts}
\usepackage{amssymb}
\usepackage{algorithmic,algorithm}
\usepackage{caption}
\usepackage{subcaption}
\usepackage{booktabs}
\usepackage{xcolor}
\usepackage{pifont}
\usepackage{hyperref}
\usepackage{amsmath}

\usepackage{natbib}

\crefname{equation}{}{}
\crefname{figure}{Figure}{Figures}
\creflabelformat{equation}{\textup{(#2#1#3)}}
\crefname{assumption}{Assumption}{Assumptions}
\crefname{condition}{Condition}{Conditions}
\crefname{condition}{Condition}{Conditions}
\usepackage{arydshln}
\usepackage{enumitem}
\setlist[enumerate]{leftmargin=1.3em, itemindent=0em, itemsep=0em, topsep=0em, label = {\bfseries \arabic*.}} 
\newcommand{\entry}[2]{$#1 \pm${\tiny $#2$}}

\newif\ifsolution 
\newif\iflongversion
\longversionfalse
\newif\ifgrad

\usepackage{pifont}% http://ctan.org/pkg/pifont
\usepackage{accents}

\usepackage{xspace}

\usepackage{accents}

\usepackage{stackengine}
\stackMath
\newcommand\tsup[2][2]{%
	\def\useanchorwidth{T}%
	\ifnum#1>1%
	\stackon[-.5pt]{\tsup[\numexpr#1-1\relax]{#2}}{\scriptscriptstyle\sim}%
	\else%
	\stackon[.5pt]{#2}{\scriptscriptstyle\sim}%
	\fi%
}
\makeatletter
\newcommand{\longdash}[1][2em]{%
	\makebox[#1]{$\m@th\smash-\mkern-7mu\cleaders\hbox{$\mkern-2mu\smash-\mkern-2mu$}\hfill\mkern-7mu\smash-$}}
\makeatother
\newcommand{\omitskip}{\kern-\arraycolsep}

\newcommand{\real}{\mathbb{R}}
\newcommand{\complex}{\mathbb{C}}

\DeclareMathOperator*{\argmin}{arg\,min}

\makeatletter
\newcommand*{\transpose}{%
	{\mathpalette\@transpose{}}%
}
\newcommand*{\@transpose}[2]{%
	\raisebox{\depth}{$\m@th#1\intercal$}%
}
\makeatother

\newcommand{\sD}{\mathcal{D}}

\newcommand{\sF}{\mathcal{F}}

\newcommand{\sN}{\mathcal{N}}

\newcommand{\sY}{\mathcal{Y}}
\newcommand{\sZ}{\mathcal{Z}}

\renewcommand {\AA}  { {\mathbf{A}} }

\newcommand {\CC}  { {\mathbf{C}} }

\newcommand {\KK}  { {\mathbf{K}} }
\newcommand {\bMM}  { {\mathbf{M}} }
\newcommand {\VV}  { {\mathbf{V}} }
\newcommand {\WW}  { {\mathbf{W}} }
\newcommand {\UU}  { {\mathbf{U}} }
\newcommand {\LL}  { {\mathbf{L}} }

\newcommand {\bSS}  { {\mathbf{S}} }
\newcommand {\JJ}  { {\mathbf{J}} }
\newcommand {\XX}  { {\mathbf{X}} }
\newcommand {\YY}  { {\mathbf{Y}} }
\newcommand {\ZZ}  { {\mathbf{Z}} }

\newcommand{\eye}{\text{I}}

\newcommand {\ff}  { {\bf f} }

\newcommand {\yy}  { {\bf y} }

\newcommand {\rr}  { {\bf r} }

\newcommand {\pp}  { {\bf p} }

\newcommand {\ww}  { {\bf w} }
\newcommand {\xx}  { {\bf x} }
\newcommand {\zz}  { {\bf z} }

\newcommand {\btheta} {\bm \theta}

\newcommand {\Tr}  { {\textnormal{Tr}} }
\newcommand {\spec}  { {\textnormal{spec}} }

\newcommand {\KCK} {\KK^{\text{CK}}}

\newcommand {\KNTK} {\KK^{\text{NTK}}}

\newcommand {\mck} {m_{\text{CK}}}
\newcommand {\mntk} {m_{\text{NTK}}}

\newcommand{\km}{ {\KK} }

\definecolor{forestgreen}{rgb}{0.13, 0.55, 0.13}

\definecolor{amber}{rgb}{1.0, 0.75, 0.0}

\definecolor{bananayellow}{rgb}{.8, 0.6, 0}

\newcounter{comment}

\usepackage{amsthm}
\usepackage[framemethod=TikZ]{mdframed}

\newtheorem*{example*}{Example}

\newtheorem{theorem}{Theorem}
\newtheorem{lemma}{Lemma}

\newtheorem{assumption}{Assumption}
\newtheorem{definition}{Definition}
\newtheorem{remark}{Remark}

\usepackage{tikz}
\usepackage{xparse}% So that we can have two optional parameters

\NewDocumentCommand\DownArrow{O{2.0ex} O{black}}{%
	\mathrel{\tikz[baseline] \draw [<-, line width=0.5pt, #2] (0,0) -- ++(0,#1);}
}

\usepackage{listings} % to inser code

\definecolor{mygreen}{rgb}{0,0.6,0}
\definecolor{mygray}{rgb}{0.5,0.5,0.5}
\definecolor{mymauve}{rgb}{0.58,0,0.82}
\definecolor{codegreen}{rgb}{0,0.6,0}
\definecolor{codegray}{rgb}{0.5,0.5,0.5}
\definecolor{codepurple}{rgb}{0.58,0,0.82}
\definecolor{backcolour}{rgb}{0.95,0.95,0.92}

\lstdefinestyle{mystyle}{
	backgroundcolor=\color{backcolour},   
	commentstyle=\color{codegreen},
	keywordstyle=\color{magenta},
	numberstyle=\tiny\color{codegray},
	stringstyle=\color{codepurple},
	basicstyle=\ttfamily\footnotesize,
	breakatwhitespace=false,         
	breaklines=true,                 
	captionpos=b,                    
	keepspaces=true,                 
	numbers=left,                    
	numbersep=5pt,                  
	showspaces=false,                
	showstringspaces=false,
	showtabs=false,                  
	tabsize=2
}
\newcommand\bbR{\ensuremath{\mathbb{R}}} % Real numbers
\newcommand\bbE{\ensuremath{\mathbb{E}}} % Expectation

\usepackage{dsfont}

\icmltitlerunning{Is the Last Layer Sufficient for Uncertainty Quantification?}

\begin{document}

\twocolumn[
\icmltitle{Is the Last Layer Sufficient for Uncertainty Quantification?}
% \icmltitle{Uncertainty through the Last-Layer: \\
% A Theoretical \& Empirical Investigation}

\icmlsetsymbol{equal}{*}

\begin{icmlauthorlist}
\icmlauthor{Joseph Wilson}{uq}
\icmlauthor{Chris van der Heide}{melbelec}
\icmlauthor{Liam Hodgkinson}{melb}
\icmlauthor{Fred Roosta}{uq,cires}
\end{icmlauthorlist}

\icmlaffiliation{uq}{
School of Mathematics and Physics, University of Queensland, Australia}
\icmlaffiliation{melb}{
School of Mathematics and Statistics, University of Melbourne, Australia}
\icmlaffiliation{melbelec}{
Department of Electrical and Electronic Engineering, University of Melbourne, Australia}
\icmlaffiliation{cires}{ARC Training Centre for Information Resilience (CIRES),
Brisbane, Australia}

\icmlcorrespondingauthor{Joseph Wilson}{joseph.wilson1@uqconnect.edu.au}

% You may provide any keywords that you
% find helpful for describing your paper; these are used to populate
% the "keywords" metadata in the PDF but will not be shown in the document
\icmlkeywords{Random Matrix Theory, Gaussian Process, Neural Tangent Kernel, Conjugate Kernel, Uncertainty Quantification, Bayesian}

\vskip 0.3in
]

% this must go after the closing bracket ] following \twocolumn[ ...

% This command actually creates the footnote in the first column
% listing the affiliations and the copyright notice.
% The command takes one argument, which is text to display at the start of the footnote.
% The \icmlEqualContribution command is standard text for equal contribution.
% Remove it (just {}) if you do not need this facility.

%\printAffiliationsAndNotice{}  % leave blank if no need to mention equal contribution
\printAffiliationsAndNotice{}%\icmlEqualContribution} % otherwise use the standard text.

\begin{abstract}
Epistemic uncertainty quantification (UQ) for deep neural networks (DNNs) is a requirement for safe adoption of AI in mission-critical settings. Several leading methods for UQ linearize DNNs to form Bayesian Generalized Linear Models (GLMs), where epistemic uncertainty is modeled via the predictive posterior distribution. Linearizing around the parameters of the \textit{final connected layer} of a DNN is a commonly used approximation for reducing the computational burden of such GLMs, though it is often believed to come at the cost of degraded performance. 
In this work, we compare GLMs arising from full-network and last-layer linearization using both theoretical and empirical approaches. We first employ tools from random matrix theory to conduct a theoretical comparison; this analysis reveals no meaningful improvement in the UQ capabilities of full linearization. Coupled with a large-scale empirical evaluation across a range of modern machine learning tasks, we arrive at the following conclusion: a last-layer approximation yields comparable UQ performance while offering substantially improved computational efficiency.
\end{abstract}

\section{Introduction}

% \ch{Can we split the figures on p6 onto 5 so that there are max 2 per page? Maybe on p7 we can have algo1 and fig5 on the left, then table1 on the right. For p8 maybe put fig6 on the left and table2 on the right. The exact layout will change with the final version so it's kind of a waste of time, but it helps with reviewer perception. I've also added tcolorbox to the punchlines and sharpened them a bit, but I am open to feedback/recalibration. Comment this out to see what I mean.}

While the predictive performance of deep neural networks (DNNs) continues to improve \citep{attention,ray2023chatgpt}, a key missing component of modern DNNs is uncertainty quantification (UQ) \citep{uncertainty_review}. The lack of ability to accurately quantify how certain a network is about its prediction is currently a roadblock to the use of DNNs in mission-critical settings \citep{uncertain_health_engineering}. There has been significant progress in the field of UQ in recent years, mainly in three key areas: conformal predictions \citep{vovk2005algorithmic,papadopoulos2002inductive,lei2014distribution}, feature-space methods \cite{ beyondTheNorms, dadalto2024data, granese2021doctor}, and Bayesian methods \citep{neal2012bayesian}. While conformal predictions and feature-space methods display impressive results, Bayesian methods provide a very natural framework for modeling uncertainty, through the lens of the posterior predictive distribution.

\begin{figure}[t]
    \centering
    \begin{subfigure}[b]{0.49\textwidth}
         \centering
         \includegraphics[width=1\linewidth]{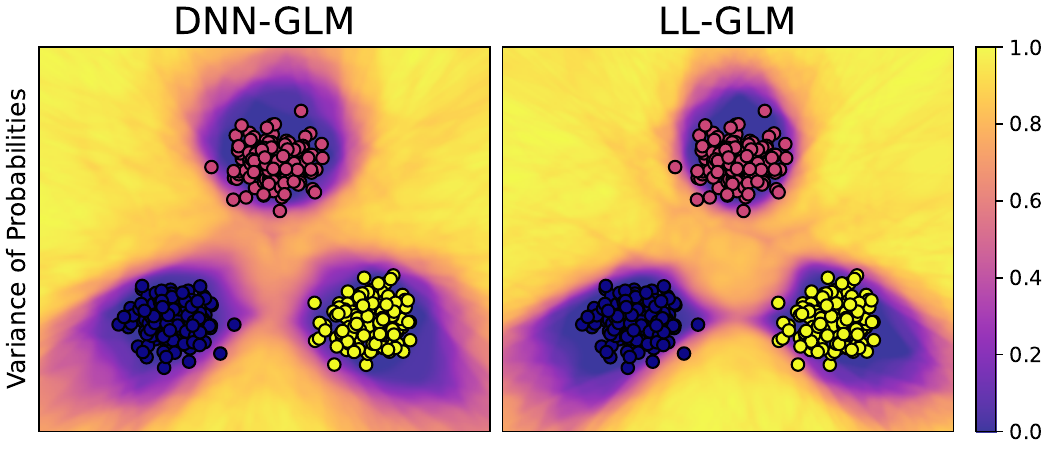}
     \end{subfigure}
     \hfill
     \begin{subfigure}[b]{0.49\textwidth}
         \centering
         \includegraphics[width=\textwidth]{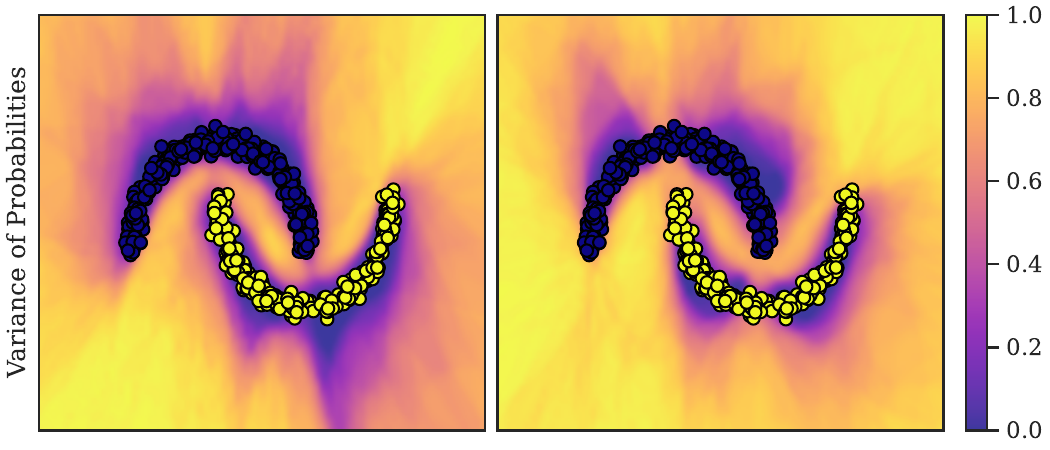}
     \end{subfigure}
    \caption{Variance of the maximum softmax probability, on the (top) Three Islands dataset and (bottom) Two Moons dataset. We see that a last-layer approximation does not affect quality of UQ. Excerpt from \cref{fig:three_islands}.}
    \vspace{-4mm}
    \label{fig:three_islands_excerpt}
\end{figure}

There exist many Bayesian methods, such as the Laplace Approximation \citep{mackay1992bayesian,ritter2018scalable}, Variational Inference \citep{hinton1993keeping,graves2011practical}, SWAG \citep{maddox2019simple}, MC-Dropout \citep{gal2016dropout}, and Deep Ensembles (DE) \citep{lakshminarayanan2017simple}. Unfortunately, these methods often suffer from poor performance and/or extremely high computational load. For example, the popular DE is currently considered the gold-standard for performance; however, it requires a prohibitive computational budget. Attempts to find methods that are competitive with DE in performance, yet require marginal cost, have only been partly successful.

A recent, promising line of work has focused on first linearizing a DNN around its parameters, and then applying Bayesian methods to the linearized model \citep{immer2021improving, maddox2021fast, wilson2025uncertainty}. This approach improves tractability, while retaining competitive performance. However, an open question remains: \textit{around which subset of parameters should one linearize?} Taking the subset to be parameters in the final connected layer of a DNN (termed a \textit{last-layer} approximation) is a common approach \citep{snoek2015scalable, kristiadi2020being} that \textit{significantly} increases computational efficiency of these Bayesian methods \citep{daxberger2021laplace}. This approach was shown in \citet{brosse2020last} to perform comparably to a \textit{full Bayesian Neural Network} (BNN), i.e. forming the posterior for the non-linear function.
% when posterior samples were computed using Stochastic Gradient Langevin Dynamics (SGLD) \citep{welling2011bayesian}, and MC-Dropout, though the accuracy of these methods for posterior sampling from BNNs is in doubt \citep{li2016preconditioned, folgoc2021mc}. 
However, compared to linearizing around \textit{all} parameters of the DNN, a last-layer approach is often thought to lead to decreased performance. In this work, we put this assumption to the test. 

Under a Bayesian framework, these linearized models amount to Bayesian Generalized Linear Models (GLMs) \citep{immer2021improving}. By the equivalence of Bayesian GLMs in \textit{weight} space to a Gaussian Process (GP) in \textit{function} space \citep{williams2006gaussian}, a natural way to compare the quality of these approximations is through the corresponding \textit{Bayes Free Energy} (BFE), i.e. the negative log of the marginal likelihood and a key measure of model quality \citep{hodgkinson2023monotonicity, hodgkinson2023interpolating}, of the equivalent GP using the induced kernel functions. Specifically, we note that the last-layer approximation amounts to the Conjugate Kernel (CK), while taking the full set of parameters induces the widely-studied Neural Tangent Kernel (NTK). We denote the GLM induced by a last-layer linearization of a DNN as an LL-GLM (last-layer GLM), and the GLM induced by the full linearization of a DNN as a DNN-GLM. 

To analyze the BFE for both the CK and NTK, we take a randomly initialized DNN, with mild assumptions on the network components and data distributions. From this, we employ the tools from Random Matrix Theory (RMT) \citep{couillet2022random, fan2020spectra} to characterize the deterministic limiting BFE, under the double-asymptotic regime where both DNN layer-width and number of training data points tend to infinity proportionally. This regime is appropriate for the large models \& datasets used in modern machine learning. Note that DNNs at initialization provide important information about their trained counterparts. Firstly, the NTK at initialization converges to a deterministic limit as DNN width grows infinitely large, and remains constant thereafter during training \citep{jacot2018neural}. Secondly, the underlying DNNs converge to a GP in this same limit, with kernel function given by the expected CK \citep{neal2012bayesian, matthews2018gaussian, lee2018deep}. Further, randomly initialized DNNs display double-descent \citep{liao2020random, mei2022generalization}. Finally, the \textit{strong lottery ticket hypothesis} \citep{ramanujan2020s} proposes that randomly initialized DNNs contain \textit{sub-networks} that at initialization attain the same accuracy as their trained super-network. 

From the limiting BFE, we are able to directly contrast the LL-GLM against the DNN-GLM, by comparing the induced kernel functions. We tie the comparison of these kernels to two current areas of research in DNN theory: \textit{robustness} (insensitivity to input perturbations) and \textit{descent phenomena} (including double- and triple-descent behavior). 
Strikingly, we identify only a single regime in which the LL-GLM underperforms the DNN-GLM: when the number of data points is \textit{much} larger than the network width. Even then, our experiments suggest this gap may be an artifact of random initialization rather than a fundamental limitation.
\begin{tcolorbox}
\begin{center}
%    Our theoretical analysis predicts that DNN-GLM shouldn't out-perform LL-GLM.
Our theory predicts no intrinsic performance advantage of the DNN-GLM over the LL-GLM.
\end{center}
\end{tcolorbox}

Motivated by this, we conduct a large-scale empirical investigation of the LL-GLM versus DNN-GLM in the context of UQ. Note that na\"{i}vely forming these GLMs is often still prohibitive for modern machine learning tasks, and many further approximations are required \citep{immer2021improving, daxberger2021laplace, antoran2022sampling, ortega2023variational, deng2022accelerated}. Motivated by the NUQLS method \citep{wilson2025uncertainty}, we provide a simple algorithm to sample from these Bayesian GLMs, that scales to large language models, implemented in a lightweight package, \href{https://github.com/josephwilsonmaths/LinearSampling}{\textit{LinearSampling}}. We then compare these GLMs on a series of regression, classification, and language processing tasks. Across all domains, the LL-GLM consistently matches the UQ performance of the DNN-GLM, often rivaling that of the gold-standard DE. Crucially, the LL-GLM achieves this with substantially lower computational and memory costs. 
% We therefore advocate last-layer approximations as a practical default for future UQ applications. 
This work suggests that epistemic uncertainty in a DNN arises from uncertainty in how the network \textit{features} are used, rather than in the features themselves. % CK only models uncertainty from the last-layer weights, i.e. the linear combination of the features, or the L-1 previous layers. NTK includes a linear combination of activation at each layer, i.e. the intermediary features, so there is uncertainty in the features. 

% \vspace{2mm}
% \paragraph{Contributions}
% \begin{enumerate}
%     \item Under mild conditions, we derive the limiting BFE for GPs with randomly initialized CK and NTK, given as the unique solution of a set of implicit fixed-point equations. 
%     \item Theoretically, we show that both kernel functions display robustness and descent curves.
%     \item We highlight that the randomly initialized NTK provides superior model fit when training data is proportionally massive, and empirically show that this effect is negated after training.
%     \item We provide a large-scale empirical study of LL-GLM vs. DNN-GLM for UQ, through which we demonstrate that these models provide equivalent UQ performance. 
%     \item We provide a lightweight, scalable, simple PyTorch package for sampling from such GLMs: \href{https://anonymous.4open.science/r/LinearSampling-DF26/README.md}{\textit{LinearSampling}}.
% \end{enumerate}
\vspace{-3mm}
\paragraph{Contributions}
\begin{enumerate}[nosep]
    \item We derive, under mild conditions, the limiting BFE for GPs with randomly initialized CK and NTK, characterized as the unique solution to a system of implicit fixed-point equations.
    \item We show theoretically that both kernels exhibit robustness to input perturbations as well as  descent phenomena.
    \item We demonstrate that the randomly initialized NTK yields superior fit in the highly over-sampled regime, and empirically show that this advantage disappears after training.
    \item We conduct a large-scale empirical evaluation of LL-GLMs and DNN-GLMs for UQ, demonstrating equivalent performance across tasks.
    \item We release a lightweight, scalable PyTorch package for sampling from these GLMs: \href{https://github.com/josephwilsonmaths/LinearSampling}{\textit{LinearSampling}}.
\end{enumerate}

\section{Background}\label{sec:background}
\subsection{Bayesian UQ}
For the sake of brevity, we do not include a comprehensive review of Bayesian UQ methods; rather, we refer the reader to \citet{wilson2025uncertainty} for a description of leading Bayesian UQ methods. Several important works to note are the LLA methods, such as base LLA \citep{immer2021improving, daxberger2021laplace}, Sampling-LLA \citep{antoran2022sampling}, VaLLA \citep{ortega2023variational}, and ELLA \citep{deng2022accelerated}. These are the leading methods for forming LL-GLM and DNN-GLM posteriors. However, the difference lies in the approximations used. Specifically, for regression, and moderate model-size, the base LLA method can sample from both the LL-GLM and DNN-GLM posteriors. For classification, a Laplace Approximation results in independent posteriors for each class. Further, when model-size or number of classes is large, LLA requires further approximations, on top of the last-layer approximations, such as KFAC and Diagonal approximations, which hinder performance. Sampling LLA uses the sample-then-optimize framework \citep{matthews2017sample} to be able to sample from the DNN-GLM for larger models, though run-time can be prohibitive (see \citet{ortega2023variational} for practical run-times). VaLLA and ELLA employ variational learning and Nystrom \citep{martinsson2020randomized} techniques, respectively, to further approximate the DNN-GLM posterior. However, LLA and Sampling-LLA cannot efficiently sample from the posterior of a LL-GLM and DNN-GLM for larger models. Further, for classification, independent posteriors are required. VaLLA and ELLA, while showing good performance, do not sample from the posterior of a LL-GLM or DNN-GLM. Hence, we require novel frameworks to sample from these posteriors for large classification tasks. 

\paragraph{Last-Layer Approximation}
Several works in the Bayesian UQ literature discuss the fidelity-cost trade-off of last-layer approximations. In \citet{watson2021latent}, the authors posit that last-layer features can overfit the feature space, limiting predictive UQ ability. They propose to amend Type-II maximum likelihood overfitting with an augmented marginal likelihood. Further, \citet{calvo2026richer} claim that the last-layer approximation misses "uncertainty induced by earlier layers"; a low-rank transformation between the full Jacobian and the last-layer features is proposed as a new ad-hoc set of features. Interestingly, a direct comparison of LLA (DNN-GLM) vs. last-layer LLA (LL-GLM) on a 1-
toy-regression problem is provided in \citep{ortega2023variational}, the only direct comparison we are aware of in the literature. Not all works are pessimistic; in \citet{fiedler2023improved}, the authors discuss the "promising compromise" of using Bayesian last-layer methods for DNNs, while \citet{kristiadi2020being} demonstrate that a last-layer approximation "already gives desirable benefits". However, our work is the first to explicitly compare a last-layer approximation (LL-GLM) with the full-linearization in a large empirical evaluation, across a range of tasks.

\subsection{DNNs}\label{sec:dnn}
We primarily consider a supervised-learning problem, given some training data $(\xx_i,\yy_i)_{i=1}^n \in \real^{d_0} \times \real^c$, where $(\xx_i,\yy_i) \sim \mathcal{D}$, for some data-generating distribution $\mathcal{D}$. We concatenate the inputs as $\XX = [\xx_1 | \dots | \xx_n] \in \real^{d_0 \times n}$ and outputs as $\YY = [\yy_1,\dots,\yy_n] \in \real^{c \times n}$. We define a fully-connected, feedforward, neural network $\ff_{\btheta} : \real^{d_0} \to \real^c$, with NTK scaling \citet{jacot2018neural} and $L$-layers as
\begin{align*}
    \ff_{\btheta}(\xx) = \ww^T \frac{1}{\sqrt{d_L}} \sigma \left( \cdots \sigma \left( \WW_2 \frac{1}{\sqrt{d_1}} \sigma \left ( \WW_1 \xx \right ) \cdots \right ) \right).
\end{align*}
Here, $\sigma : \bbR \to \bbR$ is the activation function, $\btheta = \mathrm{vec}\{ \WW_1,\dots,\WW_L,\ww \}$ is the vectorized collection of all layer-weights, where the weights are given by $\WW_l \in \bbR^{d_l \times d_{l-1}}$ for $1 \leq l \leq L$, and $\ww \in \bbR^{d_L \times c}$, with $d_1,\dots,d_L$ the number of neurons in each hidden layer. For some input vector $\xx \in \real^{d_0}$, we denote the post-activations vectors as $\xx_l = \sigma \left( W_l \xx_{l-1} \right)/\sqrt{d_l}, \text{ for} ~1 \leq l \leq L$, and $\XX_l = [\xx_{1,l}, \dots \xx_{n,l}]$. Let $\JJ(\btheta, \xx) \in \real^{c \times p}$ be the Jacobian of the network with respect to  $\btheta$, where $p = \text{dim}(\btheta)$. Let $\JJ(\btheta, \XX) \in \real^{nc \times p}$ be the concatenation of Jacobian matrices. If $\ff$ is scalar-valued, we write the function as $f$. \\

We define the \textbf{Conjugate Kernel} function as 
\begin{align*}
    \kappa^{\text{CK}}(\xx_i,\xx_j) &= \xx_{i,L}^T \xx_{j,L}.
\end{align*}
The Gram matrix of the CK at the final layer is thus given by $\KCK_\XX = \XX_L^T \XX_L \in \real^{nc \times nc}$. \\

For a DNN undergoing gradient flow, the functional dynamics of the DNN undergo kernel gradient flow, where the kernel is the \textbf{Neural Tangent Kernel}. As shown in \citet{jacot2018neural}, the NTK becomes deterministic and constant during gradient flow as the width of the DNN approaches infinity. In this regime, the NTK is termed the analytic NTK. The finite-width NTK, or the \textit{empirical} NTK, has garnered significant attention in recent years \citep{fort2020deep, arora2019harnessing}, with roles in generalization \citep{hodgkinson2023interpolating}, training \citep{du2019gradient}, robustness \citep{bombari2023beyond}, and uncertainty quantification \citep{immer2021improving, wilson2025uncertainty}. The NTK function is defined as 
\begin{align*}
    \kappa^{\text{NTK}}(\xx_i,\xx_j) = \JJ(\btheta, \xx_i) \JJ(\btheta, \xx_j)^T.
\end{align*}
The Gram matrix of the NTK is given by $\KNTK_\XX = \JJ(\btheta, \XX) \JJ(\btheta, \XX)^T \in \real^{nc \times nc}$. 

\paragraph{Kernel Machines}
Both the CK and NTK have been used in kernel machines. Specifically, \citet{arora2019harnessing} employed an NTK Support-Vector Machine for small-data tasks, in which it outperformed competitors. Interestingly, \citet{qadeer2023efficient} compared the predictive performance of the CK and the NTK kernel machines, on a series of tasks. They find that in these experiments, the CK often performs as well as, or better, than the NTK.

\subsection{Bayesian Models}\label{sec:bayesian_models}
\paragraph{Gaussian Process}
A GP is a distribution over functions that is determined by its mean and kernel functions, $m: \real^{d_0} \to \real$ and $\kappa : \real^{d_0} \times \real^{d_0} \to \real$.  We take a Gaussian likelihood $y_i = f(\xx_i) + \epsilon_i$, $\epsilon_i \sim \sN(0,\tau)$, for noise $\tau > 0$, and prior $f \sim \mathcal{GP}(m, \lambda^{-1}k)$ for some regularization parameter $\lambda > 0$. The Gram matrix $\km_\XX \in \bbR^{n \times n}$ has elements $\km_\XX^{ij} = \kappa(\xx_i,\xx_j)$.  We also define $\km_{\xx^*} = [\kappa(\xx^*, \xx_1), \dots, \kappa(\xx^*, \xx_n)] \in \real^n$, $\bm\kappa_{\xx^*} = \kappa(\xx^*,\xx^*) \in \real$ and $m(\XX) = [m(\xx_1), \dots, m(\xx_n)] \in \real^n$. Under these settings, the posterior predictive distribution of $f$ for some test point $\xx^*$, given the training data $\sD = \{\XX, \YY\}$, is
% \vspace{-2mm}
\begin{align}\label{eq:pred_gp}
    f(\xx^*) |\sD \sim &\sN\bigg(\km_{\xx^*}^T (\km_\XX + \lambda \tau \eye)^{-1} \left(\YY - m(\XX) \right) + m(\xx^*), \nonumber\\
    & \lambda^{-1} (\bm\kappa_{\xx^*} - \km_{\xx^*}^T (\km_\XX + \lambda \tau \eye)^{-1} \km_{\xx^*}) \bigg). 
\end{align}
% \vspace{-6mm}

\paragraph{GLMs}
Consider the linearization of $\ff_{\hat{\btheta}}$ around parameters $\hat{\btheta}$ with respect to a subset $\btheta_S \subseteq \btheta$, $\btheta_S \in \real^{p_S}$,
\begin{align}\label{eq:linearization}
    \Tilde{\ff}_{\hat{\btheta}}(\btheta_S, \xx) = \ff_{\hat{\btheta}}(\xx) + \JJ_{S}(\hat{\btheta}_S, \xx) \left(\btheta_S - \hat{\btheta}_S\right),
\end{align}
where $\JJ_{S}(\btheta, \xx) \in \real^{c \times p_S}$ is the Jacobian of $\ff_{\btheta}(\xx)$ w.r.t the parameters $\btheta_S$, evaluated at input $\xx$ and parameters $\btheta$, and $\hat{\btheta}_S \subseteq \hat{\btheta}$ contains the corresponding subset of parameters in $\hat{\btheta}$ as $\btheta_S$. If we place a prior $\btheta_S \sim p(\btheta_S)$, then $\Tilde{\ff}_{\hat{\btheta}}(\btheta_S, \xx)$ is a GLM. By the equivalence of Bayesian GLMs in \textit{weight} space to a GP in \textit{function} space \citep{williams2006gaussian}, if we set $p(\btheta_S) = \sN(0,\lambda^{-1} \eye)$, then the \textit{predictive posterior distribution} of $\Tilde{\ff}_{\hat{\btheta}}(\btheta_S, \xx)$ is equal to \cref{eq:pred_gp}, for mean function $m(x) = \ff_{\hat{\btheta}}(\xx)$ and kernel function $\kappa(x, x') = \JJ_{S}(\hat{\btheta}_S, \xx) \JJ_{S}(\hat{\btheta}_S, \xx')^T$. Hence, if $\btheta_S = \btheta$, $\kappa = \kappa^{\text{NTK}}$. If we let $\btheta_S = \btheta_L$, the parameters in the final-connected layer\footnote{$\btheta_L = \ww$ in the notation of \cref{sec:background}.}, $\JJ_{S}(\hat{\btheta}_S, \xx) = \xx_{i,L}^T$, and hence $\kappa = \kappa^{\text{CK}}$.

\paragraph{BFE}
The marginal likelihood, or model evidence, is the normalizing constant of the posterior distribution $\sZ_n^{\tau, \lambda} = \int_f p(\YY | f,\XX)~p(f)~df$. The \textbf{Bayes Free Energy (BFE)} is defined as $\sF_n^{\tau, \lambda} := - \log \sZ_n^{\tau, \lambda}$, and under \cref{eq:pred_gp} is
\begin{align*}
    \sF_n^{\tau, \lambda} =& \frac{1}{2} \lambda \left(\YY - m(\XX) \right)^T (\km_\XX + \lambda \tau \eye)^{-1} \left(\YY - m(\XX) \right) \\
    +& \frac{1}{2} \log \det (\km_\XX + \lambda \tau \eye) - \frac{n}{2}\log\bigg(\frac{\lambda}{2\pi}\bigg).
\end{align*}
The marginal likelihood is the canonical measure of model fit for Bayesian systems. It is the probability of generating the data under the prior choice; for GPs, the prior choice corresponds to the choice in kernel, for a fixed mean function. It has been linked to generalization ability of model classes \citep{hodgkinson2023interpolating}. Interestingly, exhaustive leave-$k$-out cross-validation using the predictive log-likelihood $p(D_{\text{k}} \mid D_{\text{n-k}})$ under the posterior distribution as the score is equivalent to the marginal likelihood \citep{fong2020marginal}. Further, the choice of prior that maximizes the marginal likelihood is equivalent to the prior choice that minimizes the celebrated PAC-Bayes bound \citep{germain2016pac, hodgkinson2023pacbayes}. For GPs, selection of kernel hyper-parameters by maximization of the marginal likelihood (a procedure called \textit{empirical Bayes}) generally gives impressive predictive performance \citep{krivoruchko2019evaluation}. Hence, the marginal likelihood, or equivalently the BFE, is an effective quantity to consider for contrasting the modeling performance of different kernel functions. There are also explicit connections between BFE and UQ ability of a Bayesian model; see \cref{sec:appendix:bfe_uq}. The limiting BFE\footnote{Under the double-asymptotic limit.} was characterized in \citet{hodgkinson2023monotonicity} for inner-product and radial-basis kernels. 

\subsection{Robustness}
A \textit{robust} function is a function that has some lack of sensitivity to input manipulation, i.e. an adversarial attack \citep{carlini2017towards}. While previous theoretical works have attempted to quantify robustness through Lipschitzness of the function \citep{bubeck2021universal, bombari2023beyond}, we do not consider this the appropriate metric. Works from the benign overfitting literature \citep{bartlett2021deep, haas2023mind} have demonstrated that successful, over-parameterized functions that interpolate the training data can be decomposed into two additive components; a \textit{smooth} component, which maps the underlying data curve, and is responsible for generalization, and a \textit{spike} component that deviates from the \textit{smooth} component at training points in order to interpolate, often using near-discontinuous deviations with unbounded derivatives. Hence, these high-performing functions may have large Lipschitz constants; minimizing the Lipschitz constants of these functions may be inappropriate. Probabilistic interpretations of robustness also make use of the Lipschitz-ness of the function \citep{mangal2019robustness}. 

As discussed in \cref{sec:bayesian_models}, the BFE is a key quantity for measuring the probabilistic performance of a kernel function for a given data distribution. Minimizing the BFE is equal to finding the kernel function that gives the maximum likelihood of generating the given dataset\footnote{This is true under a zero-mean prior.}. This will lead to both good predictive performance and good UQ. Consequently, inputs that deviate from the training distribution induce higher epistemic uncertainty, enabling detection of adversarial or out-of-distribution examples. 

\begin{tcolorbox}
\begin{center}
We quantify robustness of a \textbf{kernel function} by BFE.
\end{center}
\end{tcolorbox}

We emphasize that we are considering the robustness of a kernel function that induces a distribution over functions, rather than the robustness of a predictive function. 

\subsection{Descent Curves}
A related phenomena in machine learning is the double-descent curve \citep{belkin2019reconciling}. This curve states that contrary to the classical bias-variance trade-off, heavily over-parameterized models may generalize better than under-parameterized models. This has been proved for a variety of models, such as linear regression \citep{bartlett2020benign}, random trees \citep{belkin2019reconciling}, random features \citep{liao2020random}, etc.. Importantly, it has been empirically observed in large-scale neural networks \citep{nakkiran2021deep}. Extension to this have been observed, such as the triple-descent curve \citep{adlam2020neural} for NTK machines.  While double-descent is characterized by a local-maxima around the point of interpolation, ($p = n$), we consider a generalization of this: \textit{descent} curves. These are curves for which a generalization metric, here BFE, decreases in some manner with respect to number of parameters.

\section{Theory}\label{sec:theory}
To contrast the performance of LL-GLM and DNN-GLM for UQ, we compare the BFE for GPs that employ the induced kernel functions of these two approaches, i.e. the CK and the NTK. To aid the tractability of an analysis, we consider the network in \cref{sec:dnn}, for scalar outputs ($c=1$), randomly initialized. Consider a zero-mean prior $f \sim \mathcal{GP}(0, \lambda^{-1}\kappa)$ for some regularization parameter $\lambda > 0$, where $\kappa \in \{\kappa^{\text{CK}}, \kappa^{\text{NTK}}\}$, and $\KK_\XX \in \{\KCK_\XX, \KNTK_\XX\}$.
The limiting BFE is given as
\begin{align}\label{eq:limiting_mean_energy}
     \lim_{n \to \infty} \frac{1}{n} \bbE \sF^{\tau, \lambda}_n &= \lim_{n \to \infty} \bigg[ \frac{\lambda}{2n}  \bbE \YY^T (\KK_\XX + \lambda \tau \eye)^{-1} \YY \\
    &+ \frac{1}{2n} \bbE \log \det (\KK_\XX + \lambda \tau \eye) - \frac{1}{2}\log\bigg(\frac{\lambda}{2\pi}\bigg) \bigg] \nonumber .
\end{align}

We consider the limit $n, d_0, d_1, \dots, d_L \to \infty$ such that $\lim n / d_l = \gamma_l$, for $\gamma_l \in (0,\infty)$, and $l=0,1,\dots,L$. We define $\bm\gamma = (\gamma_0,\dots,\gamma_L)$. Assumptions underlying our theoretical analysis are given in \cref{sec:appendix:free_energy} (\cref{as:main}). 
%We take \cref{as:main} in \cref{sec:appendix:free_energy}. 
Note assumptions on the network are mild; the constraint on the first and second moments of the nonlinearity function can be achieved through shifting and scaling. Assumptions on input data covers inputs $\XX = [\xx_1,\dots,\xx_n]$, where $\xx_i = \zz_i / \sqrt{d_0}$, $\zz_i \sim \sN(\bm0, \eye)$, and $\mu_0$ corresponds to the Marchenko-Pastur \citep{marvcenko1967distribution} distribution. Further, we note that our results can be extended to a non-zero mean prior by shifting our targets as $\tilde{\YY} = \YY - m(\XX)$, where $m(\XX) = [m(\xx_1) | \dots | m(\xx_n)] \in \real^{d_0 \times n}$.

% \begin{theorem}\label{thm:limiting_free_energy}
% Under \cref{as:main}, 
% \begin{align*}
%     n^{-1} \bbE \sF^{\tau, \lambda}_n \to \sF^{\tau, \lambda}_{\infty},
% \end{align*} 
% where $\sF^{\tau, \lambda}_{\infty}$ is well-defined, and is the unique solution of a set of fixed-point equations, appearing as \cref{eq:bfe_general} in \cref{sec:appendix:freeenergy_evaluate}. If the input $\XX = [\xx_1,\dots,\xx_n]$, where $\xx_i = \zz_i / \sqrt{d_0}$, $\zz_i \sim \sN(\bm0, \eye)$, then $\sF^{\tau, \lambda}_{\infty} > (1 + \ln2\pi)/2$, and
% \begin{align*}
%     \lim_{\bm\gamma \to \bm\gamma^*} \sF^{\tau, \lambda^*}_{\infty} = \frac{1 + \ln 2\pi}{2}
% \end{align*}
% \textbf{only} under the following conditions: for the \textbf{CK} kernel, $\bm\gamma^* = (0,\dots,0)$, $\tau < 1$, and $\lambda^* = 1 / (1 - \tau)$, and for the \textbf{NTK} kernel, either $\bm\gamma^* = (0,\dots,0)$, $\tau < 1$, and $\lambda^* = \sum_{i=0}^L a_{\sigma}^i / (1 - \tau)$, or $\bm\gamma^* = (\infty,\dots,\infty)$, $\tau < 1$, and $\lambda^* = r_+ / (1 - \tau)$. Further, for the \textbf{CK} kernel,
%     \begin{align*}
%         \lim_{\tau \to 0} \lim_{\bm\gamma \to \infty} \sF^{\tau, \lambda}_{\infty} = \infty.
%     \end{align*}
% \end{theorem}

\begin{theorem}\label{thm:limiting_free_energy}
Under \cref{as:main}, 
\begin{align*}
    n^{-1} \mathbb{E} \mathcal{F}^{\tau, \lambda}_n \to \mathcal{F}^{\tau, \lambda}_{\infty},
\end{align*} 
where $\mathcal{F}^{\tau, \lambda}_{\infty}$ is well-defined and is the unique solution to a system of fixed-point equations (see \cref{eq:bfe_general} in \cref{sec:appendix:freeenergy_evaluate}).  

If the input $\mathbf{X} = [\mathbf{x}_1,\dots,\mathbf{x}_n]$ with $\mathbf{x}_i = \mathbf{z}_i / \sqrt{d_0}$ and $\mathbf{z}_i \sim \mathcal{N}(\mathbf{0}, \mathbf{I})$, then $\mathcal{F}^{\tau, \lambda}_{\infty} > (1 + \ln 2\pi)/2$, and
\begin{align*}
    \lim_{\boldsymbol{\gamma} \to \boldsymbol{\gamma}^*} \mathcal{F}^{\tau, \lambda^*}_{\infty} = \frac{1 + \ln 2\pi}{2}
\end{align*}
% \fred{I asked ChatGPT what if this number is significant and it said it is the differential entropy of a univariate standard normal so this number appears naturally whenever we compute the ``baseline uncertainty'' of a standard Gaussian.}
\textbf{only} under the following conditions:  

\begin{itemize}[nosep]
    \item For the \textbf{CK} kernel: $\boldsymbol{\gamma}^* = (0,\dots,0)$, $\tau < 1$, and $\lambda^* = 1/(1 - \tau)$.
    \item For the \textbf{NTK} kernel, either 
    \begin{itemize}[nosep]
        \item $\boldsymbol{\gamma}^* = (0,\dots,0)$, $\tau < 1$, and $\lambda^* = \sum_{i=0}^L a_\sigma^i / (1 - \tau)$, or
        \item $\boldsymbol{\gamma}^* = (\infty,\dots,\infty)$, $\tau < 1$, and $\lambda^* = r_+ / (1 - \tau)$.
    \end{itemize}
\end{itemize}

Further, for the \textbf{CK} kernel,
\begin{align*}
    \lim_{\tau \to 0} \lim_{\boldsymbol{\gamma} \to \infty} \mathcal{F}^{\tau, \lambda}_{\infty} = \infty.
\end{align*}
\end{theorem}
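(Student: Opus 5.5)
We split the BFE in \cref{eq:limiting_mean_energy} into a quadratic-form term and a log-determinant term and show that each converges to a deterministic limit fixed by the limiting spectral distribution of $\KK_\XX$.

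\textbf{Reduction to spectral functionals.} Under \cref{as:main} we take the targets to be independent of $\XX$, with identity covariance and unit second moment; this is what the constant $(1+\ln 2\pi)/2$ reflects. Then
\[
\tfrac{1}{n}\bbE\, \YY^T(\KK_\XX+\lambda\tau\eye)^{-1}\YY = \tfrac{1}{n}\bbE\,\Tr(\KK_\XX+\lambda\tau\eye)^{-1} \to -m_\mu(-\lambda\tau),
\]
where $m_\mu$ is the Stieltjes transform of the limiting spectral measure $\mu$ of $\KK_\XX$. For the log-determinant we use
\[
\tfrac{1}{n}\log\det(\KK_\XX+\lambda\tau\eye) = \int \log(x+\lambda\tau)\,\mu_n(dx).
\]
Its limit follows from weak convergence of $\mu_n$, together with uniform integrability, which holds because the spectral norm of $\KK_\XX$ is bounded almost surely. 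Alternatively, write it as $\log(\lambda\tau)$ plus the integral of $m$ from $\lambda\tau$ to $\infty$. Passing from almost-sure limits to limits of expectations is handled by dominated convergence, since the resolvent term is bounded by $1/(\lambda\tau)$.

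\textbf{Identifying $\mu$.} For the CK we invoke the nonlinear random-matrix results of \citet{fan2020spectra}. Layer by layer, $\mu^{\text{CK}}$ is an iterated Marchenko--Pastur map: $\mu_l$ is the free multiplicative convolution of $b_\sigma^2\mu_{l-1}+(1-b_\sigma^2)$ with MP$(\gamma_l)$, started from $\mu_0$. For the NTK we use the linear-pencil representation
\[
\KNTK_\XX=\sum_{l}\bigl(\textstyle\prod_{k>l} a_\sigma\text{-factors}\bigr)\XX_l^T\XX_l + \text{lower order}.
\]
This yields coupled fixed-point equations for its Stieltjes transform. Uniqueness of the solution, and hence well-definedness of $\mathcal F_\infty$, follows from the standard argument that the fixed-point map is a contraction on the upper half plane $\mathbb{C}^+$, or preserves Stieltjes transforms with a unique analytic continuation. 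Collecting these gives \cref{eq:bfe_general}.

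\textbf{The Gaussian-input bound and its equality cases.} Fix $\mu$ and minimise over $\lambda$ the per-sample function
\[
g(\lambda)=\tfrac{\lambda}{2}\int\frac{d\mu(x)}{x+\lambda\tau}+\tfrac{1}{2}\int\log(x+\lambda\tau)\,d\mu(x)-\tfrac12\log\tfrac{\lambda}{2\pi}.
\]
Writing $x+\lambda\tau=\lambda s$, each integrand has the form $\tfrac12\bigl(1/s+\log s\bigr)+\tfrac12\log 2\pi$. Since $1/s+\log s\ge 1$ with equality iff $s=1$, we get $\mathcal F_\infty\ge(1+\ln2\pi)/2$. Equality requires $x+\lambda\tau=\lambda$ for $\mu$-almost every $x$, that is, $\mu=\delta_{\lambda(1-\tau)}$. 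This forces $\tau<1$ and a degenerate spectrum.

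It therefore remains to show that $\mu$ collapses to a point mass only in the stated $\boldsymbol\gamma$-limits, and to compute the atom. For Gaussian inputs $\mu_0$ is MP$(\gamma_0)$, which tends to $\delta_1$ iff $\gamma_0\to0$. Every layer map also tends to the identity iff $\gamma_l\to0$. Hence the CK atom is at $1$, giving $\lambda^*=1/(1-\tau)$, while the NTK atom is at $\sum_i a_\sigma^i$. When all $\gamma\to\infty$, the CK spectrum loses mass to $0$ (a rank deficit), so it is nondegenerate and the bound is strict. For the NTK, the summed Gram matrices with added identity-like parts concentrate at a point $r_+$, which we read off from the fixed-point equations in this limit.

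Strictness for finite $\boldsymbol\gamma$ holds because MP laws with positive ratio are nondegenerate and free convolution preserves nondegeneracy. For the final CK claim: as $\gamma\to\infty$, $\mu$ has an atom at $0$ of mass $1-\sum_l 1/\gamma_l$-type, bounded away from zero. The log-determinant then contributes a $\tfrac12\log(\lambda\tau)$ term on that atom, and the resolvent term contributes $\lambda/(2\lambda\tau)$. As $\tau\to0$ the latter dominates, giving $\mathcal F\to\infty$.

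\textbf{Main obstacle.} The hardest step is the NTK at $\boldsymbol\gamma\to\infty$: we must show that the NTK limiting measure converges to $\delta_{r_+}$ and identify $r_+$ from the coupled fixed-point system. Our first approach would be to rescale and show that the variance of $\mu^{\text{NTK}}$ tends to zero, using explicit moment recursions (first two moments) of the pencil.
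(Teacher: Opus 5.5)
Your per-eigenvalue reduction of the lower bound is correct, and it is a more elementary route than the paper's. Writing the normalized BFE as $\int\tfrac12\bigl(1/s+\log s\bigr)\,d\mu+\tfrac12\log 2\pi$ with $s=(x+\lambda\tau)/\lambda$ is Gibbs' inequality for $\mathcal{N}(\mathbf{0},\eye)$ against $\mathcal{N}(\mathbf{0},\lambda^{-1}(\KK_\XX+\lambda\tau\eye))$, so it holds for any PSD kernel. The paper instead rewrites \cref{eq:bfe_mp} through $t_0$ and the $s_l$ and shows $C_l>0$. Your CK divergence argument also works: an atom at $0$ of mass at least $1-\gamma_L^{-1}$, with every other eigenvalue contributing at least $\tfrac12(1+\ln 2\pi)$.

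The gap is the $\boldsymbol\gamma\to\infty$ regime, and the moment method you propose for it cannot work. Since $\tfrac1n\Tr\XX_l^T\XX_l\to1$ for every $l$ regardless of $\gamma_l$, the mean of $\mu^{\text{NTK}}$ is $r_++1+\sum_{l<L}q_l=\sum_{i=0}^L a_\sigma^i$ for \emph{every} $\boldsymbol\gamma$, not $r_+$. Because all cross terms $\Tr(\XX_l^T\XX_l\XX_k^T\XX_k)$ are nonnegative, the second moment of the spectrum shifted by $r_+$ is at least $\lim\tfrac1n\Tr(\XX_L^T\XX_L)^2\ge1+\gamma_L$, so the variance diverges. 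Convergence to $\delta_{r_+}$ is only weak: a fraction of order $\sum_l\gamma_l^{-1}$ of the eigenvalues escapes to scale $\asymp\gamma$. That escaping mass dominates every moment of order $\ge1$, rescaled or not.

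What works is a rank argument plus explicit control of the escaping mass. Since $\mathrm{rank}\bigl(\sum_{l\le L} q_l\XX_l^T\XX_l\bigr)\le\sum_l d_l$ (with $q_L=1$), there is an atom at $r_+$ of mass at least $1-p$, where $p=\sum_l\gamma_l^{-1}$. The escaping mass contributes $O(p)$ to the resolvent term. By Jensen with the bounded normalized trace $C$, it contributes at most $p\log(r_++\lambda\tau+C/p)\to0$ to the log term. Weak convergence alone does not control $\int\log(x+\lambda\tau)\,d\mu$. The paper instead substitutes $s_l\to\infty$ into the fixed-point equations, obtaining $m_K(z)\to1/(z_{-1}-z)$ and $C_l\to0$.

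Your pencil is also misstated. The Gram coefficients are $q_l=(b_\sigma^2)^{L-l}$, not $a_\sigma$-factors, and all $a_\sigma$-dependence sits in the shift $r_+\eye$. That shift is not lower order; it is exactly what survives in this limit.

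The same mechanism shows your CK claim is false. $\KCK_\XX=\XX_L^T\XX_L$ has an atom at $0$ of mass at least $1-\gamma_L^{-1}$, so $\mu^{\text{CK}}\to\delta_0$ weakly. That limit is degenerate, not nondegenerate, and your own divergence argument relies on it. For $\tau<1$ the bound is still missed, but because $\delta_0\neq\delta_{\lambda(1-\tau)}$. At $\tau=1$ your criterion, and the paper's closed form $\tfrac12\bigl(1/\tau-\ln(1/\tau)+\ln2\pi\bigr)$, show the bound \emph{is} attained for every $\lambda$. So the ``only'' clause has to exclude that edge case.

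Similarly, as $\gamma_l\to0$ the layer map does not tend to the identity but to $\mu\mapsto(1-b_\sigma^2)+b_\sigma^2\mu$, which fixes $\delta_1$ but not $\delta_0$. For example, $\gamma_0\to\infty$ with $\gamma_1,\dots,\gamma_L\to0$ collapses the CK spectrum to $\delta_{1-b_\sigma^{2L}}$. Hence ``$\mu$ collapses only in the stated limits'' is not provable as written; you must restrict to the two uniform limits, as the paper implicitly does.

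Finally, ``collecting these gives \cref{eq:bfe_general}'' skips the Shannon-transform antiderivative of \cref{lemma:shannon} and the layer recursion of \cref{theorem:log_determinant}. Your $\int\log(x+\lambda\tau)\,d\mu$ characterization gives well-definedness, but it is not that formula.
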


The form of $\sF^{\tau}_{\infty}$ is given in \cref{eq:bfe_general} for inputs according to \cref{as:main}, and in \cref{eq:bfe_mp} for inputs $\XX = [\xx_1,\dots,\xx_n]$, where $\xx_i = \zz_i / \sqrt{d_0}$, $\zz_i \sim \sN(\bm0, \eye)$. The constants in \cref{thm:limiting_free_energy} are defined in \cref{eq:constants} in \cref{sec:appendix:stieltjes}. For the proof of \cref{thm:limiting_free_energy}, and empirical validation of convergence, see \cref{sec:appendix:free_energy}. We now discuss consequences of \cref{thm:limiting_free_energy}, as well as further results. All figures in \cref{sec:theory} and \cref{sec:appendix:free_energy} can be reproduced with the following \href{https://github.com/josephwilsonmaths/bfe}{code}.

\subsection{Robustness}
\begin{figure}[t!]
    \centering
    \includegraphics[width=1\linewidth]{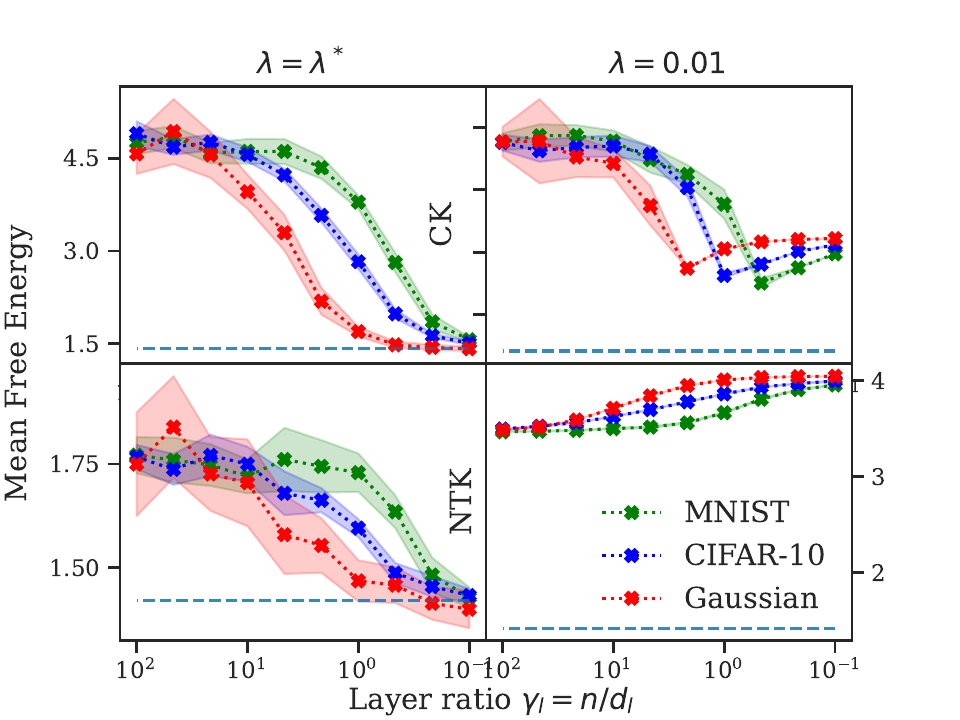}
    \caption{BFE as a function of $\bm\gamma$ for the (top) CK and (bottom) NTK. Top left has $\lambda^* = 1 / (1-\tau)$, bottom left has $\lambda^* = \sum_{i=0}^L a_\sigma^i/(1 - \tau)$, and the right plots have $\lambda = 0.01$. The dotted blue line is the minimum BFE, $(1+\ln 2\pi)/2$. We observe that at $\lambda*$, strong descent occurs.} 
    % \fred{maybe we can move this plot later to be close to where we define strong descent?}}
    \label{fig:strong_descent_realdata}
    % \vspace{-4mm}
\end{figure}

From \cref{thm:limiting_free_energy}, we see that as the network widths grow large proportionally to training data (i.e. parameters tend to infinity), and for noise $\tau < 1$, both the CK and the NTK approach the lower-bound on the BFE for appropriate regularization. For the CK, this result differs from the result given in \citet{bombari2023beyond}, i.e. CK models can never be robust. This is due to the differing definition of robustness, that is, robustness of the kernel function as measured by the induced UQ ability (corresponding to BFE), rather than robustness of the mean predictor. 
% \fred{Do they defined robustness like we do? If not, then we are talking over them.}
Further, as training points grow large proportionally to width, the NTK can also approach the lower-bound on BFE for appropriate regularization. Hence, we diverge from the “universal law of robustness” for mean predictors given in \citet{bubeck2021universal}, which required functions to be sufficiently over-parameterized to be robust: kernel functions do not need to arise from over-parameterized DNNs to be robust.
% \fred{Again, are they defining robustness in a similar manner?}
Note that the CK does not approach this lower-bound for $n / d_l \to \infty$, for any regularizer. We will discuss this point in a later section.

% \fred{I am still unsold why we call this robustness...Are we talking about sensitivity of BFE with respect to $\xx$?...If so, then let's clarift this better earlier when we talk about robustness...also maybe instead of simple robustness, we can call it Bayesian Robustness or use some other qualifier before the word robustness}

\subsection{Descent}
We first consider \textit{strong descent}, where BFE reaches its minimum as the width of the DNN grows large:

\begin{definition}[Strong Descent]
    Let $\tau = \tau(\bm\gamma)$ and $\lambda = \lambda(\bm\gamma)$ be curves parameterized by the layer ratios. Strong descent occurs for the BFE $\sF_\infty^{\tau,\lambda}$ when
    \begin{align*}
        \min_{\bm\gamma} \sF_\infty^{\tau(\bm\gamma),\lambda(\bm\gamma)} = \lim_{\bm\gamma \to 0} \sF_\infty^{\tau(\bm\gamma),\lambda(\bm\gamma)}.
    \end{align*}
\end{definition}

\begin{theorem}[Strong Descent]\label{theorem:strong_descent}
For $\tau < 1$, $\sF_{\infty}^{\lambda,\tau}$ will undergo \textbf{strong descent} if $\lambda(\bm\gamma)$ is a curve parameterized by $\bm\gamma$ such that for the CK, $\lim_{\bm\gamma \to 0} \lambda(\bm\gamma) = 1 / (1 - \tau)$, and for the NTK, $\lim_{\bm\gamma \to 0} \lambda(\bm\gamma) = \sum_{i=0}^L a_\sigma^i / (1 - \tau)$ and $\lim_{\bm\gamma \to \infty} \lambda(\bm\gamma) \neq r_+ / (1 - \tau)$. For $\tau \geq 1$, $\sF_{\infty}^{\lambda,\tau}$ will \textit{not} undergo strong descent, for neither the CK nor the NTK. 
\end{theorem}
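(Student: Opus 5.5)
Strong descent follows almost directly from \cref{thm:limiting_free_energy}. I will use two facts from that theorem: the universal lower bound $\sF_\infty^{\tau,\lambda} > (1+\ln 2\pi)/2$, valid for every finite $\bm\gamma$, every $\tau$ and every $\lambda$; and the characterisation of exactly when the limit $\lim_{\bm\gamma\to\bm\gamma^*}\sF_\infty^{\tau,\lambda^*} = (1+\ln 2\pi)/2$ is attained. The plan is to read the minimum over $\bm\gamma$ as an infimum over the closure of $(0,\infty)^{L+1}$. The minimum then equals the limit at $\bm\gamma\to 0$ exactly when that limit equals the lower bound, and the curve does not reach the same bound elsewhere, in particular at $\bm\gamma\to\infty$.

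\textbf{Sufficiency for $\tau<1$.} First I need $\sF_\infty^{\tau,\lambda}$ to be jointly continuous in $(\bm\gamma,\lambda)$ near $\bm\gamma=0$. I will get this from the fixed-point system \cref{eq:bfe_general}: its solution depends continuously on the parameters, via the implicit function theorem or the contraction property that gives uniqueness. Continuity lets me replace the fixed regularizer $\lambda^*$ in \cref{thm:limiting_free_energy} by a curve $\lambda(\bm\gamma)\to\lambda^*$. Then $\lim_{\bm\gamma\to0}\sF_\infty^{\tau,\lambda(\bm\gamma)} = (1+\ln2\pi)/2$, with $\lambda^* = 1/(1-\tau)$ for the CK and $\lambda^*=\sum_i a_\sigma^i/(1-\tau)$ for the NTK. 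Combined with the strict lower bound at every finite $\bm\gamma$, this makes the $\bm\gamma\to0$ limit the infimum, so strong descent holds.

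For the NTK there is a second place the bound can be approached, namely $\bm\gamma\to\infty$ with $\lambda\to r_+/(1-\tau)$. The hypothesis $\lim_{\bm\gamma\to\infty}\lambda(\bm\gamma)\neq r_+/(1-\tau)$ rules this out, by the ``only'' clause of \cref{thm:limiting_free_energy}. The minimiser is therefore unique and sits at $\bm\gamma\to0$.

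\textbf{Necessity failure for $\tau\ge1$.} Here \cref{thm:limiting_free_energy} says the bound is approached under none of its conditions, since all of them require $\tau<1$. I still need to exclude the case where the $\bm\gamma\to0$ limit is above the bound yet remains the infimum along the curve. To do this I will compute the $\bm\gamma\to0$ limit from the explicit Marchenko--Pastur form \cref{eq:bfe_mp}. As $\bm\gamma\to0$ the kernel Gram matrix concentrates to a multiple $c\,\eye$ of the identity, with $c=1$ for the CK and $c=\sum_i a_\sigma^i$ for the NTK. This gives
\begin{align*}
\sF_\infty^{0} = \tfrac{\lambda}{2(c+\lambda\tau)} + \tfrac12\ln(c+\lambda\tau) - \tfrac12\ln\tfrac{\lambda}{2\pi}.
\end{align*}
I will then show, for every fixed $\lambda$ and $\tau\ge1$, that some finite $\bm\gamma>0$ along the curve gives a strictly smaller value. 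To do so I will differentiate the fixed-point solution in $\bm\gamma$ at $0$, expecting a sign governed by $1-\tau$ (the first-order correction to the spectrum being the MP spread), and check that the derivative of $\sF$ is negative there when $\tau\ge1$. This perturbative sign computation is where I expect the main obstacle. It requires an expansion of the Stieltjes-transform fixed point to first order in $\bm\gamma$ for both kernels, including the NTK's sum over layers.
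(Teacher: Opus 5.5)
Your $\tau<1$ half is the paper's argument. It combines the strict bound $\sF_\infty^{\tau,\lambda}>(1+\ln 2\pi)/2$ of \cref{lemma:bound} with the closed-form $\bm\gamma\to0$ limit of \cref{theorem:limiting_cases}, evaluated at the $\lambda_0^*$ of \cref{lemma:optimal_lambda}. Your continuity step fills in something the paper skips, namely passing from a fixed $\lambda^*$ to a curve. It is easier to get from the spectral form $\sF_\infty^{\tau,\lambda}=\frac12\int h_\lambda\,d\mu_{\bm\gamma}-\frac12\ln\frac{\lambda}{2\pi}$, with $h_\lambda(t)=\frac{\lambda}{t+\lambda\tau}+\ln(t+\lambda\tau)$ and $\mu_{\bm\gamma}\Rightarrow\delta_c$. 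The implicit function theorem is awkward here, because $\bm\gamma=0$ lies outside the domain of the fixed-point system and the $C_l$ carry a factor $1/\gamma_l$.

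The gap is in the $\tau\ge1$ half. Your first sentence there (the bound is never reached) is the \emph{entire} proof in the paper, which tacitly reads strong descent as descent to $(1+\ln 2\pi)/2$. The extra step you add would fail. Each $\frac1n\Tr\,\XX_l^T\XX_l\to1$, so the mean of $\mu_{\bm\gamma}$ equals $c$ for every $\bm\gamma$. The leading correction to $\sF_\infty$ is then $\frac14 h_\lambda''(c)\,\mathrm{Var}(\mu_{\bm\gamma})$, and $\mathrm{sign}\,h_\lambda''(c)=\mathrm{sign}\bigl(\lambda(2-\tau)-c\bigr)$, not $\mathrm{sign}(1-\tau)$. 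For $1\le\tau<2$ and $\lambda>c/(2-\tau)$ (e.g.\ the CK with $\tau=1$, $\lambda=2$), the BFE initially \emph{increases} in $\bm\gamma$. So no expansion at $\bm\gamma=0$ can exhibit a smaller value.

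The missing idea is a global comparison of the two endpoints, which the paper already has in closed form. With $g(x)=x-\ln x$, \cref{theorem:limiting_cases} gives $\sF_{\infty,0}^{\tau,\lambda}=\frac12\bigl(g(\tfrac{\lambda}{c_0+\lambda\tau})+\ln 2\pi\bigr)$ and $\sF_{\infty,\infty}^{\tau,\lambda}=\frac12\bigl(g(\tfrac{\lambda}{c_\infty+\lambda\tau})+\ln 2\pi\bigr)$. Here $(c_0,c_\infty)=(1,0)$ for the CK and $(\sum_{i=0}^{L}a_\sigma^i,\,r_+)$ for the NTK, so $c_0$ is your $c$. In both cases $c_0>c_\infty$; for the NTK, $c_0-c_\infty=1+\sum_{j=1}^{L}b_\sigma^{2j}$. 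For $\tau\ge1$ both arguments of $g$ lie in $(0,1/\tau]\subseteq(0,1]$, where $g$ is decreasing. Hence $\sF_{\infty,\infty}^{\tau,\lambda}<\sF_{\infty,0}^{\tau,\lambda}$, and some large finite $\bm\gamma$ beats the $\bm\gamma\to0$ limit; this is the argument in the proof of \cref{lemma:weak_descent}.

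That settles constant $\lambda$, but not a genuinely $\bm\gamma$-dependent curve, and for such curves the claim you set out to prove is false. For $\tau\ge1$ and every $\bm\gamma$, $\partial_\lambda\sF_\infty^{\tau,\lambda}=\frac12\int\frac{t(\lambda(1-\tau)-t)}{\lambda(t+\lambda\tau)^2}\,d\mu_{\bm\gamma}(t)<0$, and $\sF_\infty^{\tau,\lambda}\to\infty$ as $\lambda\to0$. Fix any $\lambda_0>0$ and take $\lambda(\bm\gamma)\le\lambda_0$ as large as possible subject to $\sF_\infty^{\tau,\lambda(\bm\gamma)}\ge\sF_{\infty,0}^{\tau,\lambda_0}$. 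This gives a curve with $\lambda(\bm\gamma)\to\lambda_0$ whose infimum is its $\bm\gamma\to0$ limit. The $\tau\ge1$ clause therefore holds only in two readings. One is the paper's sense: for a finite limiting $\lambda$, the $\bm\gamma\to0$ limit stays strictly above the bound, by \cref{lemma:optimal_lambda}. The other is for constant $\lambda$.
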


\begin{figure}[t]
    \centering
    \includegraphics[width=1\linewidth]{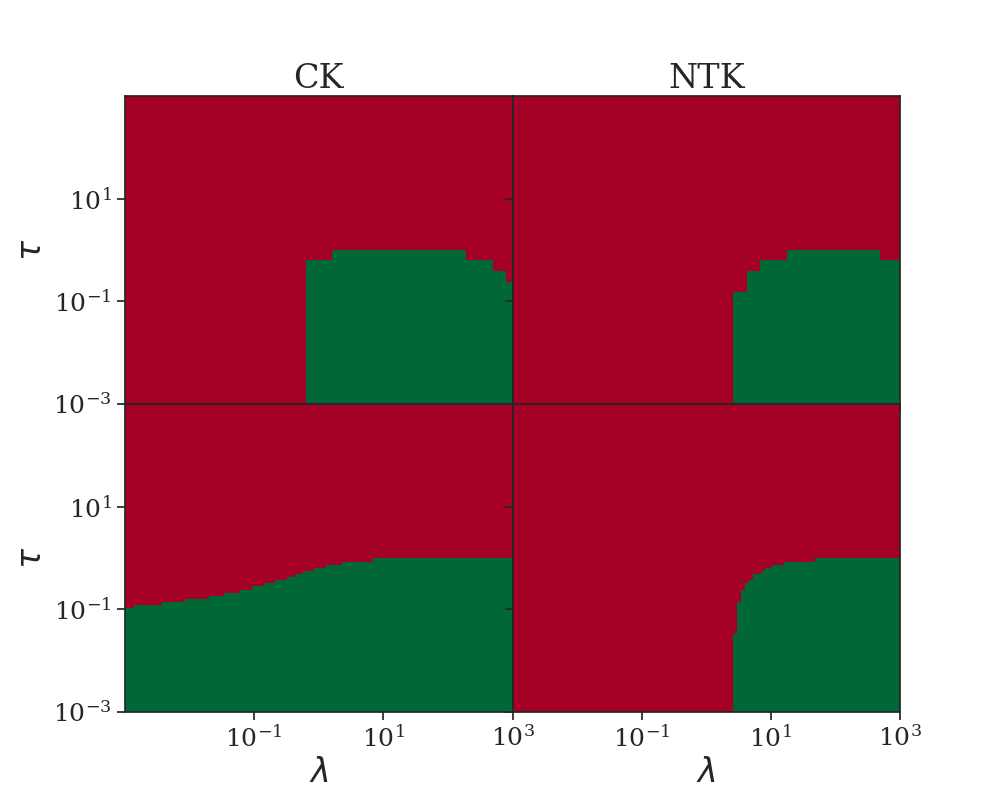}
    \caption{Heatmap of when (top) strong and (bottom) weak descent occurs for the CK and NTK, for varying values of $\tau, \lambda$. A green coloring shows a positive for descent occurring, while red indicates the negative.}
    \label{fig:strong_weak_descent}
    % \vspace{-3mm}
\end{figure}

Hence, strong descent will occur for the both the CK and the NTK, for correctly chosen parameters. We provide empirical validation in \cref{fig:strong_descent_realdata}, for both Gaussian data, and for MNIST, CIFAR-10, after normalization and whitening.

% \begin{figure}[t!]
%     \centering
%     \includegraphics[width=1\linewidth]{Graphics/strongdescent_realdata.pdf}
%     \caption{BFE as a function of $\bm\gamma$ for the (top) CK and (bottom) NTK. Top left has $\lambda^* = 1 / (1-\gamma)$, bottom left has $\lambda^* = \sum_{i=0}^L a_\sigma^i/(1 - \gamma)$, and the right plots have $\lambda = 0.01$. The dotted blue line is the minimum BFE, $(1+\ln 2\pi)/2$. We observe that at $\lambda*$, strong descent occurs. }
%     \label{fig:strong_descent_realdata}
%     \vspace{-5mm}
% \end{figure}

We can also define a notion of \textit{weak descent}, where we simply require that complete over-parameterization gives lower BFE than complete under-parameterization:

\begin{definition}[Weak Descent]
    Let $\tau = \tau(\gamma_0,\dots,\gamma_L) = \tau(\bm\gamma)$ and $\lambda = \lambda(\gamma_0,\dots,\gamma_L) = \lambda(\bm\gamma)$ be curves parameterized by the layer ratios. Weak descent occurs for the BFE $\sF_\infty^{\tau,\lambda}$ when
    \begin{align*}
        \lim_{\bm\gamma \to 0} \sF_\infty^{\tau(\bm\gamma),\lambda(\bm\gamma)} < \lim_{\bm\gamma \to \infty} \sF_\infty^{\tau(\bm\gamma),\lambda(\bm\gamma)}.
    \end{align*}
\end{definition}

We can show that weak descent occurs for a greater subset of parameters $\tau, \lambda$ for the CK, than for the NTK:

\begin{lemma}[Weak Descent]\label{lemma:weak_descent_volume}
For $K \in \{\text{CK, NTK}\}$, let $\bar{\lambda}(\tau)_K:=\{\lambda~|~\lim_{\bm\gamma \to 0} \sF_\infty^{\tau,\lambda} < \lim_{\bm\gamma \to \infty} \sF_\infty^{\tau,\lambda}\}$. Then, $\bar{\lambda}(\tau)_{\text{NTK}} \subset \bar{\lambda}(\tau)_{\text{CK}}$.
\end{lemma}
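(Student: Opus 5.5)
Our plan is to compute the two endpoint limits $\lim_{\bm\gamma\to0}\sF_\infty^{\tau,\lambda}$ and $\lim_{\bm\gamma\to\infty}\sF_\infty^{\tau,\lambda}$ in closed form for each kernel, starting from the explicit fixed-point characterization in \cref{eq:bfe_mp}. We then write $\bar\lambda(\tau)_K$ as the solution set of a scalar inequality in $\lambda$ and compare the two sets directly.

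The first step is the limit $\bm\gamma\to0$, where widths dominate $n$. In this regime the CK Gram matrix concentrates to a multiple of the identity; with the normalisation of \cref{as:main} we expect $\KCK_\XX\to\eye$. The NTK Gram matrix concentrates to $\sum_{i=0}^L a_\sigma^i\,\eye$, consistent with the values $\lambda^*$ in \cref{thm:limiting_free_energy}. Writing $s_K$ for this scalar ($s_{\text{CK}}=1$, $s_{\text{NTK}}=\sum_i a_\sigma^i$), and using a unit-variance target normalisation, the limit is
\begin{align*}
G_K(\lambda):=\frac{\lambda}{2(s_K+\lambda\tau)}+\frac12\log(s_K+\lambda\tau)-\frac12\log\frac{\lambda}{2\pi}.
\end{align*}

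The second step is the limit $\bm\gamma\to\infty$, which is where the two kernels really differ. For the CK, the rank is bounded by $d_L\ll n$, so the spectral measure puts mass tending to one at zero. The free energy therefore behaves like that of the kernel $0$ plus noise, namely $\lambda/(2\lambda\tau)+\frac12\log(\lambda\tau)-\frac12\log(\lambda/2\pi)=\frac1{2\tau}+\frac12\log(2\pi\tau)$. We expect the remaining contribution from the nonzero eigenvalues to vanish. For the NTK, the Jacobian has $p\gg d_l$ columns, and the fixed-point equations degenerate so that the relevant spectrum concentrates at the edge value $r_+$ from \cref{eq:constants}. The limit is then $G$ with $s=r_+$ (this is why $\lambda^*=r_+/(1-\tau)$ appears in \cref{thm:limiting_free_energy}), and we denote it $H_{\text{NTK}}(\lambda)$. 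We expect justifying these two $\bm\gamma\to\infty$ limits from the fixed-point system to be the main obstacle. The concrete route is to take asymptotics of the Stieltjes-transform equations in \cref{sec:appendix:stieltjes}, using dominated convergence for the $\log\det$ term.

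The third step compares the sets. For the CK, $\bar\lambda(\tau)_{\text{CK}}=\{\lambda:G_{\text{CK}}(\lambda)<\frac1{2\tau}+\frac12\log 2\pi\tau\}$. Here the right side is the worst case, since it is the $s\to0$ limit of $G$. For the NTK, $\bar\lambda(\tau)_{\text{NTK}}=\{\lambda:G_{\text{NTK}}(\lambda)<H_{\text{NTK}}(\lambda)\}$. Note that $G(s,\lambda)$ depends on $(s,\lambda)$ only through $\lambda/s$ up to an additive $\frac12\log s$. We will therefore reparametrise $\mu=\lambda/s$ and study $g(\mu)=\frac{\mu}{2(1+\mu\tau)}+\frac12\log(1+\mu\tau)-\frac12\log\mu$.

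The NTK condition becomes an inequality between $g$ at two rescaled arguments, $g(\lambda/s_{\text{NTK}})<g(\lambda/r_+)$. The CK condition is $g(\lambda)<\frac1{2\tau}+\frac12\log\tau$, and the right side is $\lim_{\mu\to0}$ of the relevant comparison. Using that $g$ is unimodal (decreasing then increasing, with its minimum at $\mu=1/(1-\tau)$ for $\tau<1$), we will show that any $\lambda$ satisfying the NTK inequality lies in the sublevel set of the CK condition. Finally, we exhibit a $\lambda$, for instance one near $r_+/(1-\tau)$ where the NTK's large-$\bm\gamma$ limit is optimal, that lies in $\bar\lambda(\tau)_{\text{CK}}$ but not in $\bar\lambda(\tau)_{\text{NTK}}$. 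This gives strict inclusion.
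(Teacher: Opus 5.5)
\paragraph{Comparison with the paper.} Your route to the inclusion differs from the paper's and is correct once tidied. Your endpoint values coincide with \cref{theorem:limiting_cases}. The paper gets them by guessing $s_l\to 1/z_l$ (resp.\ $s_l\to\infty$) in the fixed-point equations and invoking uniqueness, so you can cite that result rather than treat step two as the main obstacle.

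The paper then argues through thresholds. It writes $\bar\lambda(\tau)_{\text{CK}}=(\lambda(\tau)_{\text{CK}},\infty)$ via Lambert $W$, argues the NTK set is also an up-set in $\lambda$, and checks that the NTK inequality fails at $\lambda(\tau)_{\text{CK}}$ (\cref{lemma:weak_descent}). You instead prove a pointwise implication, which needs no half-line structure for the NTK set. The paper's route, in exchange, yields the explicit thresholds used for $\Delta A$.

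Three things need fixing for your step three to go through.
\begin{enumerate}
\item $G(s,\lambda)=g(\lambda/s)+\tfrac12\log 2\pi$ exactly, because the $\log s$ terms cancel. With your extra $\tfrac12\log s$, the NTK condition would not reduce to $g(\lambda/A)<g(\lambda/r_+)$, where $A:=\sum_{i=0}^L a_\sigma^i$.
\item The CK right-hand side is $g(\infty):=\lim_{\mu\to\infty}g(\mu)=\frac{1}{2\tau}+\frac12\log\tau$, not the $\mu\to0$ limit (there $g\to+\infty$). This is what makes $\{g<g(\infty)\}=(\mu_-,\infty)$, with $\mu_-=\lambda(\tau)_{\text{CK}}$.
\item You must state the orderings $A-r_+=1+\sum_{j=1}^L b_\sigma^{2j}>0$ and $A\ge 1$; the argument uses both.
\end{enumerate}
With these, the chain is as follows. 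First, $\lambda/A<\lambda/r_+$ together with $g(\lambda/A)<g(\lambda/r_+)$ forces $\lambda/r_+>1/(1-\tau)$. Hence $g(\lambda/A)<g(\lambda/r_+)<g(\infty)$. Therefore $\lambda\ge\lambda/A>\mu_-$.

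\paragraph{The gap: strictness.} The strictness step fails as written. At $\lambda=r_+/(1-\tau)$ the NTK inequality does fail. But this $\lambda$ lies in $\bar\lambda(\tau)_{\text{CK}}$ only if $r_+/(1-\tau)>\lambda(\tau)_{\text{CK}}$. The right side depends only on $\tau$, while $r_+=\sum_{j=1}^L(a_\sigma^j-b_\sigma^{2j})\to 0$ for nearly linear activations. So your witness can lie outside both sets.

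Use instead a point $\lambda_0$ with $g(\lambda_0/A)=g(\lambda_0/r_+)$. It exists for $\tau<1$ by continuity: the difference $g(\lambda/A)-g(\lambda/r_+)$ tends to $\tfrac12\ln(A/r_+)>0$ as $\lambda\to0$, and it is negative once both arguments exceed $1/(1-\tau)$. At $\lambda_0$, strict unimodality gives $\lambda_0/A<1/(1-\tau)<\lambda_0/r_+$. Hence $g(\lambda_0/A)=g(\lambda_0/r_+)<g(\infty)$, so $\lambda_0>\mu_-$ and $\lambda_0\in\bar\lambda(\tau)_{\text{CK}}\setminus\bar\lambda(\tau)_{\text{NTK}}$.

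Finally, strictness can only hold for $\tau<1$. For $\tau\ge1$, $g$ is decreasing and both sets are empty, so there the statement holds only as a non-strict inclusion.
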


We empirically validate this by plotting a heatmap of parameters for which the CK and NTK display \textit{weak} and \textit{strong descent}, in \cref{fig:strong_weak_descent}. We see that the area of parameters for the CK is larger than for the NTK, for both descent types. \\

\subsection{Under-Parameterized Regime}
A notable component of \cref{thm:limiting_free_energy} is that as $\bm\gamma \to \infty$, then for certain $\tau, \lambda$, the BFE for the NTK reaches the lower-bound. In comparison, the CK never reaches this lower bound as $\bm\gamma \to \infty$, and in fact diverges in BFE as $\tau \to 0$. Hence, it seems that at initialization, the NTK is expressive enough to model proportionally infinite training-data, while the CK is not. This is an important use case for the NTK, as small-parameter models are much cheaper to compute. This regime is also common in practice; note that ResNet50 has a final connected layer of $2048$ parameters, with $1.28$ million training points for ImageNet ($\gamma_L \approx 626$). 

\begin{figure}[t]
    \centering
    \includegraphics[width=1\linewidth]{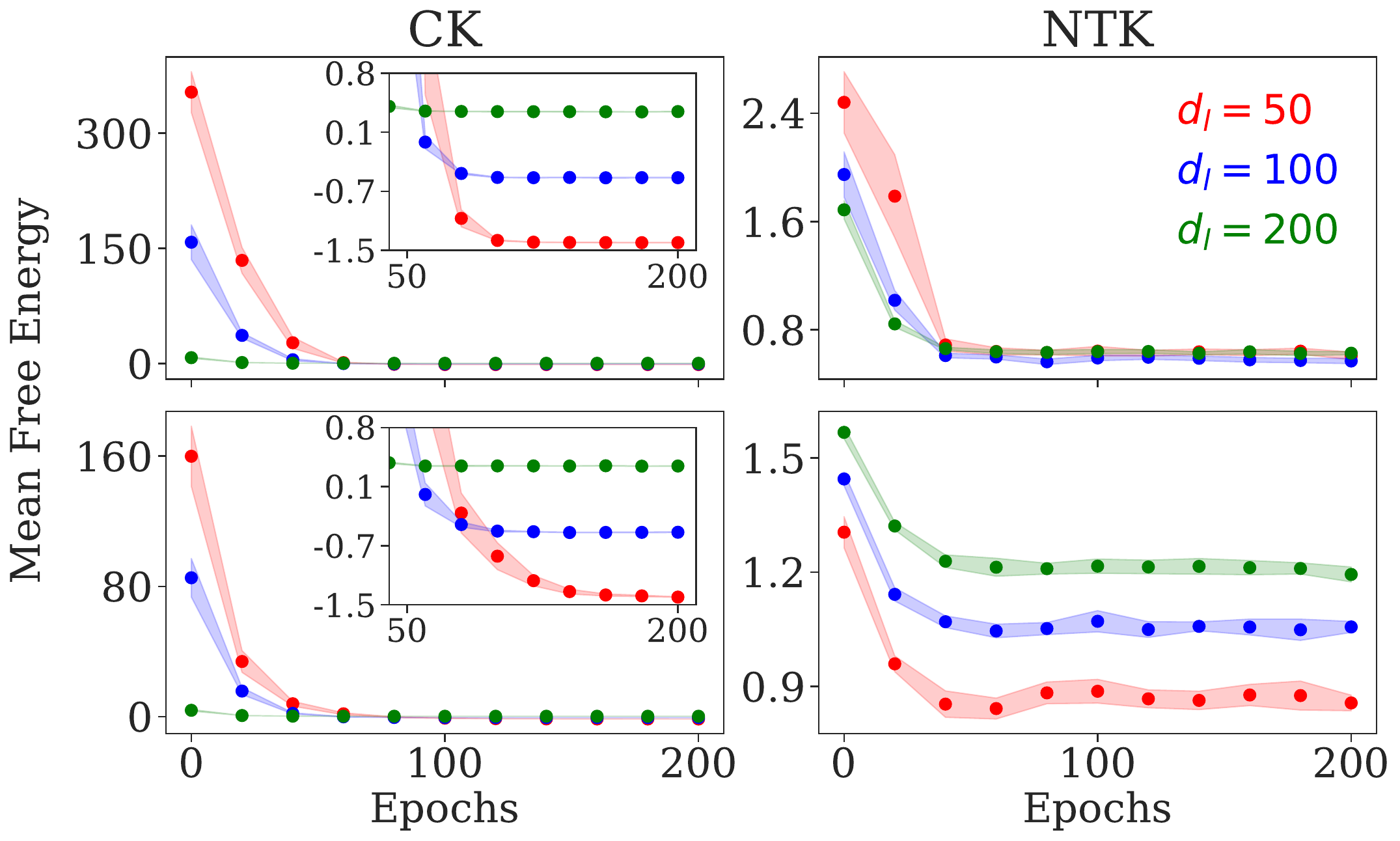}
    \caption{BFE as a function of epochs of training, for (top) Gaussian data with $\lambda = \lambda^*$ and (bottom) a teacher network $y = \sin(w^Tx)$ with $\lambda = 1$, and $\tau = 10^{-3}$.}
     \label{fig:bfe_trained}
     % \vspace{-4mm}
\end{figure}

However, we now empirically demonstrate that this inability of the CK to model noiseless data in this regime is an artifact of initialized networks. Specifically, we generate training data, and then train an MLP to minimize the empirical risk using a squared-error loss. We form the CK and NTK from this trained MLP. Note that once the MLP has been trained, the mean function is $m(x) = f_\theta(x)$. We plot the BFE of the subsequent GP in \cref{fig:bfe_trained}. We see that during training, the free energy for both kernels decreases. Incredibly, after training, the free energy for the CK is \textit{lower} than that of the NTK, though it was initially much larger. \cref{fig:bfe_trained:ld} and \cref{fig:bfe_trained:datafit} shows that this arises due to the data-fit term tending to zero, with the log-determinant remaining fixed at a lower value than that of the NTK. See \cref{fig:bfe_trained:realdata} for results with MNIST \& CIFAR-10.

\subsection{Connection to Trained Networks}
While our results are derived for a randomly-initialized network, we can in certain instances make direct connection to a trained network. By \citet[Corollary 5.3]{wang2023spectral}, for a 2-layer DNN undergoing gradient descent for a student-teacher dataset, with a sufficiently small step-size, the spectrum of the kernel matrices $\KK^{\text{CK}}_{\XX, t}, \KK^{\text{NTK}}_{\XX, t}$ at gradient step $t$ converge to the spectrum of $\KK^{\text{CK}}_{\XX, 0}, \KK^{\text{NTK}}_{\XX, 0}$ (i.e. kernels at initialization) under the double-asymptotic regime we consider in our work. We can then consider the BFE for the trained kernel matrices on a down-stream task, i.e. the data setting we currently consider in \cref{as:main}. In this setting, as the spectral distributions after training are equivalent to those at initialization in the limiting regime, almost surely, our theoretical results would remain unchanged.
This theory does not cover SGD/Adam. However, we can extend the result in \cref{fig:bfe_trained} to UCI regression datasets, for networks trained with Adam for $1500$ epochs with a learning rate of $0.01$, with $\lambda = 1, \tau = 0.01$. We see in \cref{fig:bfe_trained:uci} that for each dataset, the free energy for the CK eventually decreases below the free energy for the NTK, during training.

% \subsection{Summary}
% % We have found the limiting BFE, for a GP using either the CK or the NTK. From this deterministic limit, we arrived at several findings:
% % \begin{itemize}
% %     \item Both the CK and the NTK can provide \textit{robust} fits to the data, as measured by attaining the lower bound on the BFE.
% %     \item Both the CK and the NTK exhibit descent curves, both strong and weak. Furthermore, the parameters for which these descent curves occur are more numerous for the CK than for the NTK.
% %     \item At initialization, when training data is proportionally much larger than network width, the NTK is expressive enough to attain the lower bound on the BFE, while the CK cannot. However, this may only be an artifact of networks at initialization, and the CK may provide a more expressive kernel once the network is trained. 
% % \end{itemize}
% Hence, in the context of UQ, we see that our theoretical analysis struggles to find a regime where the DNN-GLM is preferred over the LL-GLM. We now provide an empirical evaluation of these approaches, for large machine-learning tasks.

\section{Practical Consideration}\label{sec:eval:sampling}
To compare LL-GLM versus DNN-GLM on a series of machine learning tasks, we provide a lightweight schema for sampling from such GLMs. Consider a DNN of arbitrary structure, with output dimension $c \in \mathbb{N}_{\geq 0}$. We take the training dataset $(\XX, \YY) \sim \sD$, where we no longer assume $\XX, \YY$ are Gaussian. Consider the trained parameters $\hat{\btheta}$, 
\begin{align*}
    \hat{\btheta} = \argmin_{\btheta} \left \{ \sum_{i=1}^n \ell(\ff_{\btheta}(\xx_i), \yy_i) + \mathcal{R}(\btheta) \right \},
\end{align*}
for some loss-function $\ell : \real^c \times \real^c \to \real_{\geq 0}$, and regularizer $\mathcal{R} : \real^p \to \real_{\geq 0}$. Employing the linearized function in \cref{eq:linearization}, we define two optimization problems:
\begin{align}
    \mathcal{L}_{\text{R}} :\min_{\btheta_S} \sum_{i,j=1}^{n,c} \left(\Tilde{\ff}_{\hat{\btheta}}(\btheta_S, \xx_i)_j - \yy_{i,j}\right)^2 \tag{Regression}\label{eq:regression_loss}\\
    \mathcal{L}_{\text{C}} :\min_{\btheta_S} \sum_{i,j=1}^{n,c} \left(\Tilde{\ff}_{\hat{\btheta}}(\btheta_S, \xx_i)_j - \ff_{\hat{\btheta}}(\xx_i)_j\right)^2 \tag{Classification}\label{eq:classification_loss}
\end{align}
We now present the \textit{LinearSampling} framework:
\begin{algorithm}[]
        \caption{\textit{LinearSampling}}\label{alg:linearsampling}
        \begin{algorithmic}
        \STATE \textbf{Input:} number of samples $T$,  weights $\widehat{\btheta}$, parameters $\btheta_S$, optimization problem $\mathcal{L}$, scale $\eta^2$.
        \vspace{1mm}
        \FOR{$t=1$ \TO $T$}
        \vspace{1mm}
        \STATE Initialize $\btheta_{S, t}^0 \leftarrow \mathcal{N}(\widehat{\btheta}_S,\eta^2\eye)$
        \vspace{1mm}
        \STATE $\btheta_{S, t}^* \leftarrow$ Run (S)GD, initialized at $\btheta_{S, t}^0$, to
        \vspace{1mm}
        \STATE $\quad$ (approximately) solve $\mathcal{L}$ and obtain $\btheta_{S, t}^*$
        \vspace{1mm}
        \ENDFOR
        \vspace{1mm}
        \RETURN $\{\Tilde{\ff}_{\hat{\btheta}}(\btheta_{S, t}^*, .)\}_{t=1}^T$
        \end{algorithmic}
\end{algorithm}

\begin{lemma}[GLM Sampling]\label{lemma:glm_sampling}
    If $p_S < nc$, and $\JJ_{S}(\hat{\btheta}, \XX) \in \real^{nc \times p_S}$ is rank-deficient, or if $p_S > nc$, then \cref{alg:linearsampling} for $\btheta_S$, $\mathcal{L} \in \{\mathcal{L}_{\text{R}}, \mathcal{L}_{\text{C}}\}$ returns samples from \cref{eq:pred_gp} for regression or classification respectively, for the kernel induced by the feature map $\JJ_{S}(\hat{\btheta}, \xx)$, with regularizer $\lambda = \eta^{-2}, \tau \to 0$.
\end{lemma}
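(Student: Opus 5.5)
The plan is to use the classical sample-then-optimize argument \citep{matthews2017sample}, specialized to the linear least-squares problems $\mathcal{L}_{\text{R}}$ and $\mathcal{L}_{\text{C}}$. Write $\JJ := \JJ_S(\hat{\btheta},\XX)\in\real^{nc\times p_S}$ and set $\bm\delta := \btheta_S-\hat{\btheta}_S$. Then $\Tilde{\ff}_{\hat{\btheta}}(\btheta_S,\XX) = \ff_{\hat{\btheta}}(\XX) + \JJ\bm\delta$, so both objectives have the form $\|\JJ\bm\delta - \rr\|^2$. For $\mathcal{L}_{\text{R}}$ the residual target is $\rr = \YY - \ff_{\hat{\btheta}}(\XX)$, and for $\mathcal{L}_{\text{C}}$ it is $\rr=\bm 0$. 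The hypothesis on $p_S$, namely that either $p_S>nc$ or $\JJ$ is rank-deficient, ensures that $\JJ$ has a nontrivial null space. Hence the minimizer is not unique, and the point returned by (S)GD depends on the initialization $\bm\delta^0\sim\sN(0,\eta^2\eye)$.

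The first step is the implicit-bias fact for gradient descent on a linear least-squares problem. Every gradient $\JJ^T(\JJ\bm\delta-\rr)$ lies in the row space $\mathrm{range}(\JJ^T)$, so the component of the iterate in $\mathrm{null}(\JJ)$ never changes. With a suitable step size, (S)GD converges to the minimizer whose row-space component equals $\JJ^+\rr$. The limit is therefore
\[
\bm\delta^* = \JJ^+\rr + (\eye - \JJ^+\JJ)\bm\delta^0 ,
\]
where $\JJ^+$ is the Moore--Penrose pseudoinverse. For SGD on a consistent system the same conclusion holds, because each stochastic gradient also lies in $\mathrm{range}(\JJ^T)$. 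In the inconsistent rank-deficient regression case I will state the result for full-batch GD or assume diminishing step sizes. I expect this step to be the most delicate, since it is where the role of the optimizer has to be pinned down.

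The second step is to identify the distribution of $\bm\delta^*$. It is an affine image of a Gaussian, so it is Gaussian with mean $\JJ^+\rr$ and covariance $\eta^2(\eye-\JJ^+\JJ)$, using that $\eye-\JJ^+\JJ$ is a symmetric idempotent projection. Now evaluate the GLM at a test point: $\Tilde{\ff}(\btheta^*_S,\xx^*) = \ff_{\hat{\btheta}}(\xx^*) + \JJ_*\bm\delta^*$ with $\JJ_* = \JJ_S(\hat{\btheta},\xx^*)$. Take the GLM posterior in \cref{eq:pred_gp} with prior $\sN(0,\lambda^{-1}\eye)$ and $\lambda=\eta^{-2}$, and let $\tau\to0$, which is the weight-space posterior limit. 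Use the identities
\[
\lim_{\epsilon\to 0}\JJ^T(\JJ\JJ^T+\epsilon\eye)^{-1} = \JJ^+ , \qquad \lim_{\epsilon\to 0}\JJ^T(\JJ\JJ^T+\epsilon\eye)^{-1}\JJ = \JJ^+\JJ .
\]
With these, the posterior mean becomes $\ff_{\hat{\btheta}}(\xx^*) + \JJ_*\JJ^+\rr$ and the posterior covariance becomes $\eta^2\JJ_*(\eye-\JJ^+\JJ)\JJ_*^T$. These match the mean and covariance of $\JJ_*\bm\delta^*$ (shifted by $\ff_{\hat{\btheta}}(\xx^*)$), which proves the regression case with mean function $m=\ff_{\hat{\btheta}}$.

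For classification, $\rr = \bm 0$, so the samples have mean $\ff_{\hat{\btheta}}(\xx^*)$ and the same covariance. This matches \cref{eq:pred_gp} with the targets taken as the network's own training outputs, i.e. $\YY - m(\XX) = 0$.

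Finally, the condition in the lemma is what makes the result non-vacuous. If $\JJ$ had full column rank, then $\eye-\JJ^+\JJ = 0$ and every sample would collapse to the same point. Under the stated conditions the projection is nonzero and the samples carry the posterior covariance.
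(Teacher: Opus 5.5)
Your proposal is correct and follows essentially the same route as the paper's proof: the paper also writes the (S)GD limit as the minimum-norm least-squares solution plus the null-space projection of the Gaussian initialization, pushes this Gaussian forward to a test point, and matches its mean and covariance to \cref{eq:pred_gp} with $\lambda = \eta^{-2}$ and $\tau \to 0$ via the limit $\lim_{\delta \to 0}\KK_{\xx^*}^T(\KK_\XX + \delta \eye)^{-1}$. The differences are small. You prove the implicit-bias step directly (every gradient lies in the row space), whereas the paper cites Lemma 3.1 and Theorem 3.2 of Wilson et al.\ (2025). Your caveat about SGD in the inconsistent rank-deficient regression case is also more careful than the paper's brief remark on that point.
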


To sample from the predictive posterior of a DNN-GLM, we hence employ \cref{alg:linearsampling} with $\btheta_S = \btheta$ (the regression case was shown in \citet{wilson2025uncertainty}). For an LL-GLM, we use $\btheta_S = \btheta_L$. The proof is given in \cref{sec:appendix:linear_sampling}. Note that for many modern datasets and models, $\JJ_{L}(\hat{\btheta}, \XX)$ is \textit{nearly} rank-deficient, with many small singular-values; in this case, early stopping of the (S)GD scheme in \cref{alg:linearsampling} will return samples from \cref{eq:pred_gp}, for an equivalent Gram matrix with some small singular-values truncated to zero. See \cref{sec:appendix:sampling_distribution_cuqls} for an in-depth discussion of this. Further, note that we are sampling from a GLM with noise $\tau \to 0$; for modern models and datasets, such a noise level may be more appropriate \citep{hodgkinson2023interpolating}. We provide a lightweight \textit{PyTorch} implementation, \href{https://github.com/josephwilsonmaths/LinearSampling}{\textit{LinearSampling}}.

\vspace{-3mm}
\section{Empirical Evaluation}
Employing the sampling scheme from \cref{sec:eval:sampling}, we now provide a large-scale comparison of LL-GLM versus DNN-GLM on a series of machine learning tasks, on regression (\cref{sec:eval:regression}), classification (\cref{sec:eval:classification}), and language modeling (\cref{sec:eval:language}) tasks. Code to reproduce the figures in this section, \cref{sec:appendix:sampling_distribution_cuqls} and \cref{sec:appendix:image_class} can be found \href{https://github.com/josephwilsonmaths/ll_uq}{here}. For implementation details, see \cref{sec:appendix:implementation}.

\subsection{Regression}\label{sec:eval:regression}
We first obtain a pictorial comparison on a toy regression problem. We train a small MLP on a $1$D problem, where training points are taken from the curve $y = x^3 + \epsilon$, where $\epsilon \sim \sN(0,9)$ and $x \in [-4,2] \cup [2,4]$. We plot the mean prediction and confidence intervals for DNN-GLM and LL-GLM in \cref{fig:toy_regression}. Both methods capture the underlying curve within the confidence interval, and show growing uncertainty with distance from training points. 

\begin{figure}[]
    \centering
    \includegraphics[width=1\linewidth]{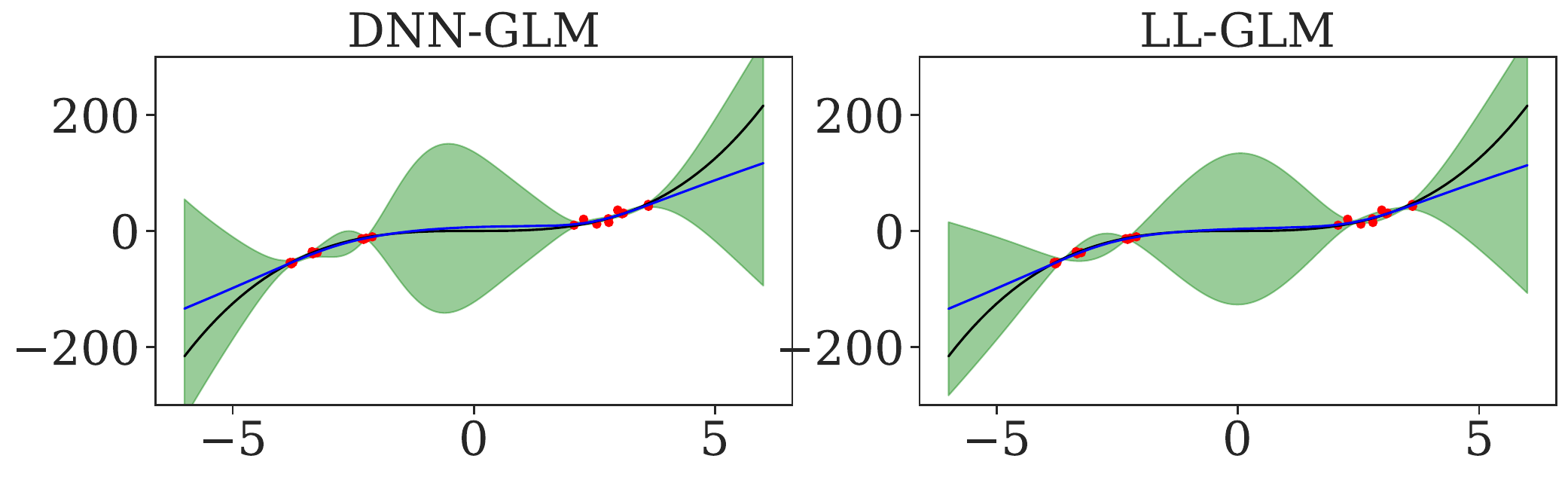}
    \caption{Toy regression problem. Here, the red points are training points, the black line is the underlying curve, the blue line is the mean prediction, and the green shading represents $\pm2 \sigma(x)$, where $\sigma(x)$ is the computed variance.}
    \label{fig:toy_regression}
    % \vspace{-5mm}
\end{figure}

We also compare LL-GLM vs DNN-GLM on a range of UCI regression problems. We provide the values in \Cref{table:uci_regression}.  Surprisingly, we see no benefit gained from using the full DNN-GLM over the LL-GLM. In fact, UQ performance from the LL-GLM is sometimes \emph{better} than from the the full DNN-GLM, as measured by Expected Calibration Error (ECE) and Gaussian Negative-Log Likelihood (NLL).
% (with the exception of the Protein dataset, where the difference is marginal). 

\begin{table}[]
\centering
\caption{Comparison of GLMs on a series of UCI regression tasks. Entries where DNN-GLM outperforms LL-GLM are highlighted in magenta, and teal for vice-versa. Performance of the LL-GLM is as good as that of the DNN-GLM.}
\resizebox{0.49\textwidth}{!}{
\begin{tabular}{ccccc}
\hline
\textbf{Dataset}  & \textbf{Method}       & \textbf{RMSE $\downarrow$}                 & \textbf{NLL $\downarrow$}                   & \textbf{ECE $\downarrow$}                 \\ \hline
\textbf{Energy}   & DNN-GLM        & \entry{0.047}{0.006} & \entry{-2.400}{0.209} & \entry{0.002}{0.002}        \\
         & LL-GLM           & \entry{0.042}{0.005} & \entry{-2.560}{0.186}  & \entry{0.002}{0.002}        \\ \hline
\textbf{Kin8nm}   & DNN-GLM        & \entry{0.252}{0.005} & \entry{-0.796}{0.025} & \entry{0.000}{0.000}        \\
         & LL-GLM           & \entry{0.247}{0.005} & \textcolor{teal}{\entry{-0.832}{0.017}}  & \entry{0.000}{0.000} \\ \hline
\textbf{Protein}  & DNN-GLM        & \entry{0.623}{0.005} & \textcolor{magenta}{\entry{0.209}{0.047}} & \entry{0.002}{0.000} \\
         & LL-GLM           & \entry{0.630}{0.011} & \entry{0.250}{0.025}  & \entry{0.002}{0.000} \\ \hline
\textbf{Concrete} & DNN-GLM        & \entry{0.330}{0.047} & \entry{-0.316}{0.501} & \entry{0.003}{0.001}   \\
         & LL-GLM           & \entry{0.310}{0.020} & \entry{-0.561}{0.223}  & \entry{0.002}{0.002}   \\ \hline
\textbf{Naval}    & DNN-GLM        & \entry{0.049}{0.012} & \entry{-2.546}{0.134} & \entry{0.002}{0.002}   \\
         & LL-GLM           & \entry{0.037}{0.004} & \entry{-2.656}{0.173}  & \entry{0.000}{0.000}   \\ \hline
\textbf{CCPP}     & DNN-GLM        & \entry{0.244}{0.008} & \entry{-0.885}{0.020} & \entry{0.000}{0.000}  \\
         & LL-GLM           & \entry{0.243}{0.004} & \entry{-0.902}{0.030}  & \entry{0.000}{0.000}  \\ \hline
\textbf{Wine}     & DNN-GLM        & \entry{0.789}{0.042} & \entry{0.284}{0.066} & \entry{0.001}{0.000}  \\
         & LL-GLM           & \entry{0.796}{0.044} & \entry{0.309}{0.067}  & \entry{0.001}{0.001}  \\ \hline
\textbf{Yacht}    & DNN-GLM        & \entry{0.042}{0.013} & \entry{-1.561}{2.319} & \entry{0.012}{0.010}  \\
         & LL-GLM           & \entry{0.043}{0.018} & \entry{-2.197}{1.210}  & \entry{0.008}{0.007}   \\ \hline
\textbf{Song}     & DNN-GLM        & \entry{0.839}{0.014} & \entry{0.646}{0.056} & \entry{0.001}{0.000}  \\
         & LL-GLM           & \entry{0.829}{0.005} & \textcolor{teal}{\entry{0.397}{0.016}}  & \textcolor{teal}{\entry{0.000}{0.000}}\\ \hline
\end{tabular}
\label{table:uci_regression}
}
% \vspace{-6mm}
\end{table}

\subsection{Classification}\label{sec:eval:classification}
It was argued in \citet{sngp_2020} that uncertainty measures should be 'distance-aware' for classification. To this end, we compare the LL-GLM and DNN-GLM on a toy example, where we form $C = 3$ clusters of training points in $\real^2$, a dataset we term `Three Islands'. We train a small MLP to classify the points, and perform posterior inference using the GLMs. We plot the predictive variance in \cref{fig:three_islands_excerpt}. We see that both GLMs possess this 'distance-aware' property. See \cref{sec:appendix:distance_aware} for comparison against competing methods.

Evaluation of UQ ability for large-scale classification tasks is notoriously difficult; see \citet{wilson2025uncertainty} for an in-depth discussion. To summarize, common metrics such as Expected Calibration Error (ECE), Negative-Log Likelihood (NLL) or Area-Under the Receiver-Operator Characteristic (AUCROC) are either flawed measurements, based on poor measures of uncertainty, or are measuring predictive ability.
To combat this, we use the following metrics: 
\begin{itemize}
    \item AUCROC with variance of softmax predictions as the score, for correctly predicted vs. incorrectly predicted points (VARROC-ID), and for correctly predicted vs. OOD points (VARROC-OOD).
    \item The same metrics but with Mutual Information (MI) as the score (VARROC-MI-ID and VARROC-MI-OOD).
    \item The log-predictive point density (LPPD), with predictive probabilities computed using the multi-class probit approximation \citep{gibbs1998bayesian}. 
\end{itemize}
See \cref{sec:appendix:varroc_justification} for in-depth discussion and definitions. 

% these metrics evaluate whether the predictive variance, as computed by a UQ method, can be used to predict whether a model will give an incorrect prediction on an unseen test point, or whether the test point if out-of-distribution (OOD). See \cref{sec:appendix:varroc_justification} for an in-depth discussion, derivation and definition of these metrics. \\

We compare LL-GLM and DNN-GLM on a range of large image classification tasks, using these metrics. We compare also to DE, SWAG, LLA, MC-Dropout, and SMS-UBU \citep{paulin2024sampling} to showcase the strong performance of these GLMs as UQ methods. 
We display bar plots of the VARROC / VARROC-MI results in \cref{fig:varroc_bar_plot}, for ResNet9 trained on FashionMNIST, ResNet50 trained on CIFAR-10, ResNet50 trained on CIFAR-100, and ResNet50 trained on ImageNet. The datasets used for OOD were MNIST, CIFAR-100, CIFAR-10, and iNaturalist OOD Plants \citep{huang2021mos} respectively. Note that for ImageNet, DE, LLA, SWAG and SMS-UBU were excluded due to computational constraints. Numerical values for \cref{fig:varroc_bar_plot}, as well as values for LPPD, execution time and memory usage are found in \cref{table:varroc}. We see that both DNN-GLM and LL-GLM perform equally well, and that they are consistently among the strongest performing methods. Note that DE performs the strongest; this is not surprising, considering the huge computational requirement. We provide the relative difference between the GLMs in \cref{table:relative_diff}. We see that across tasks, difference in performance is minimal, and that neither GLM consistently outperforms the other. Further, we observe that reduction in time and memory is significant for LL-GLM vs. DNN-GLM.

\begin{figure}[t]
    \centering
    \includegraphics[width=1\linewidth]{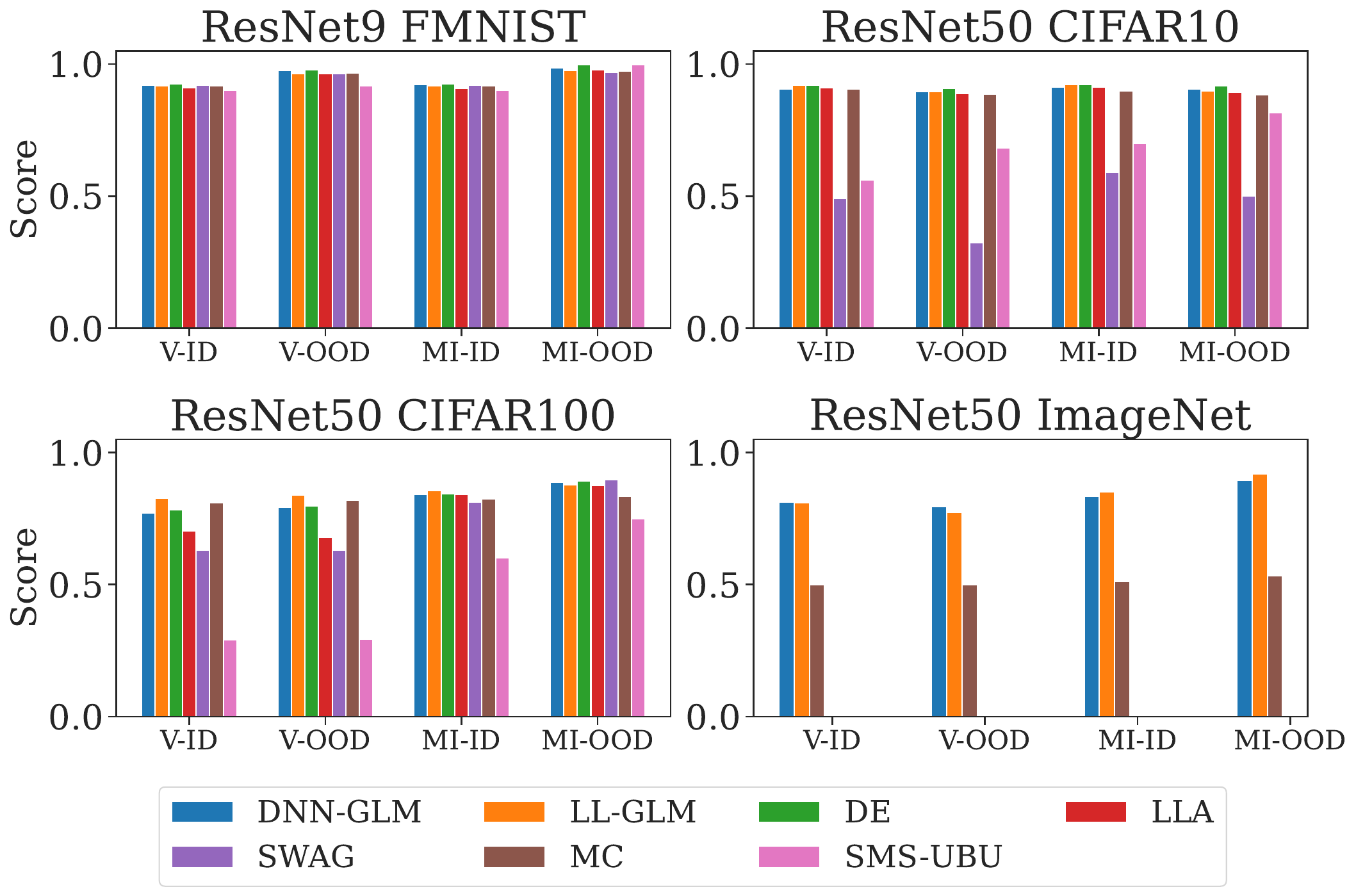}
    \caption{VARROC-ID (V-ID), VARROC-OOD (V-OOD), VARROC-MI-ID (MI-ID) and VARROC-MI-OOD (MI-OOD) results for image classification tasks. }
    \label{fig:varroc_bar_plot}
\end{figure}

\subsection{Language Modeling}\label{sec:eval:language}

% \begin{table}[t]
% \centering
% \caption{Comparison of LL-GLM vs. DNN-GLM on a GPT2 model, fine-tuned on the IMDB dataset.}
% \label{table:gpt2}
% \begin{tabularx}{0.49\textwidth}{l|CC}
%           & DNN-GLM  & LL-GLM \\ \hline
% \textbf{VARROC-ID} & $0.846$ & $0.829$ \\
% \textbf{Wall-time} (hr) & $2.55$ & $0.26$ \\
% \textbf{Memory} (GB) & $71.04$ & $20.97$
% \end{tabularx}
% \end{table}

We now compare the DNN-GLM vs. LL-GLM on a language processing task, using the $124$m parameter GPT2 \citep{radford2019language}, with pre-trained weights. We fine-tune a classification head on the IMDB sentiment dataset \citep{imdb}, sample from the posterior of a LL-GLM and DNN-GLM, and use these samples to compute the VARROC-ID metric. We report this in \cref{table:gpt2}, along with the wall-time on an H100 GPU for inference and sampling, and the maximum memory used. We observe no statistical difference between the methods for VARROC-ID, but observe a $10\times$ speed-up for wall-time and a $3.5\times$ decrease in memory use. 
% Note that we did not compare against competitors as DE is infeasible to compute in this setting, MC-Dropout and SWAG have been shown in \cref{fig:varroc_bar_plot}, \cref{fig:two_moons}, \cref{fig:three_islands} to provide comparably weaker performance, and LLA in this setting requires a large computational budget, and forms only an approximate GLM posterior for classification. 

\begin{table}[t]
\centering
\caption{Comparison of LL-GLM vs. DNN-GLM on a GPT2 model, fine-tuned on the IMDB dataset. Value for VARROC-ID is mean and standard deviation over $5$ independent runs.}
\label{table:gpt2}
\begin{tabular}{cccc}
\hline
\textbf{Method} & VARROC-ID         & Wall-Time     & Memory        \\ \hline
DNN-GLM & $0.842 \pm 0.015$          &   $2.55$ hr        &  $71.23$ GB      \\
LL-GLM  & $0.823 \pm 0.025$          &    $0.26$ hr       &   $20.97$ GB     \\ \hline
\end{tabular}
\vspace{-3mm}
\end{table}

\section{Conclusion}
Last-layer approximations for Bayesian methods bring computational speed-ups, yet have often been thought to hinder performance. From a theoretical point of view, comparing a last-layer to a full linearization can be achieved by considering the BFE of a GP employing the induced kernel functions. By employing tools from RMT, and for a randomly-initialized DNN, we have found no specific regimes where a last-layer approximation decreases model-fit. Through the introduction of a lightweight sampling algorithm, we were able to compare the UQ performance of the LL-GLM and the DNN-GLM on a series of modern machine learning tasks; no practical difference in performance was found. This is a surprising result; specifically, this suggests that epistemic uncertainty in a DNN arises not in the feature representations, but rather in how the features are \textit{used} for prediction. For the UQ community, we hope that these results spur the adoption of the last-layer linearization. 

\paragraph{Limitations \& Future Work}
Note that the conditions in \cref{as:main} prevent direct connection to theoretical results for trained DNNs on realistic datasets. While there is evidence that the theory-practice gap may be small in certain regimes, extending \cref{thm:limiting_free_energy} and \cref{theorem:strong_descent} to trained DNNs under practical assumptions, for example through the spectral distributions derived in \citet{hodgkinson2025models}, is an exciting, yet challenging, avenue for future work. Further, the sampling technique of \cref{alg:linearsampling} only allows sampling from the posterior of a noiseless GLM, and hence in our work we only model the \textit{epistemic} uncertainty of a DNN. Extending \cref{alg:linearsampling} to include noise models on the data (allowing for \textit{aleatoric} uncertainty) would enable a more complete comparison of the DNN-GLM vs. LL-GLM for UQ. Finally, use of MCMC techniques to sample from the last-layer posterior may bring further performance and computational improvements.

\section*{Acknowledgments}
Liam Hodgkinson is supported by the Australian Research Council through a Discovery Early Career Researcher Award (DE240100144).
Fred Roosta was partially supported by the Australian Research Council through an Industrial Transformation Training Centre for Information Resilience (IC200100022).

\section*{Impact Statement}
This paper presents work whose goal is to advance the field of machine learning. There are many potential societal consequences of our work, none of which we feel must be specifically highlighted here.

\bibliography{Bib/biblio}
\bibliographystyle{ICML/icml2026}

%%%%%%%%%%%%%%%%%%%%%%%%%%%%%%%%%%%%%%%%%%%%%%%%%%%%%%%%%%%%%%%%%%%%%%%%%%%%%%%
%%%%%%%%%%%%%%%%%%%%%%%%%%%%%%%%%%%%%%%%%%%%%%%%%%%%%%%%%%%%%%%%%%%%%%%%%%%%%%%
% APPENDIX
%%%%%%%%%%%%%%%%%%%%%%%%%%%%%%%%%%%%%%%%%%%%%%%%%%%%%%%%%%%%%%%%%%%%%%%%%%%%%%%
%%%%%%%%%%%%%%%%%%%%%%%%%%%%%%%%%%%%%%%%%%%%%%%%%%%%%%%%%%%%%%%%%%%%%%%%%%%%%%%
\newpage
\appendix
\onecolumn

\section{Limiting Free Energy}\label{sec:appendix:free_energy}
A key contribution of this work is the characterization of the limit of \cref{eq:limiting_mean_energy} under certain assumptions on the network and the data, for the GP specified in \cref{sec:background}. In the following sections of the appendix, we will detail both the resulting limiting quantity for \cref{eq:asymp_free_energy} as well as the necessary derivations. In this section, we first discuss the general strategy. We then provide the necessary assumptions on the network and the data. In \cref{sec:appendix:stieltjes} we reproduce the work of \citet{fan2020spectra}, who characterized the Stieltjes transform of the Gram matrix for the CK and NTK. From there, we detail in \cref{sec:appendix:logdeterminant} the derivation of the limiting log-determinant term for the limiting energy. Finally, in \cref{sec:appendix:freeenergy_evaluate} we combine the preceding sections to fully characterize the free energy, as well as detail how one would numerically evaluate it, and provide empirical validation of the limit. In \cref{sec:appendix:bound} we characterize a lower bound on the BFE, and in \cref{sec:appendix:limiting_regime} we characterize explicit expression for the limiting BFE when $\bm\gamma \to 0$ or $\bm\gamma \to \infty$. We combine these preceding sections in \cref{sec:appendix:main_proof} to prove \cref{thm:limiting_free_energy}. Finally, we prove results on strong-descent and weak-descent in \cref{sec:appendix:strong_descent} and \cref{sec:appendix:weak_descent}.

We take the following definition from \citep{fan2020spectra} for the inputs we are considering: 
\begin{definition}\label{def:orthonormal}
    Let $\epsilon, B > 0$. A matrix $\XX \in \bbR^{d \times n}$ is $(\epsilon,B)$-orthonormal if its columns satisfy, for every $\alpha \neq \beta \in \{1,\dots,n\}$,
    \begin{align*}
       \left | ||\xx_\alpha ||^2 - 1 \right | \leq \epsilon, \quad \left | \xx_\alpha^T \xx_\beta \right | \leq \epsilon, \quad  ||\XX|| \leq B, \quad \sum_{\alpha=1}^n (||\xx_\alpha||^2 - 1)^2 \leq B^2.
    \end{align*}
\end{definition}

We take the following assumptions, which have been modified from \citet{fan2020spectra}:
\begin{assumption}\label{as:main}
    \begin{enumerate}
        \item The number of layers $L \geq 1$ in the network $f_{\btheta}$ is fixed, and $n,d_0,d_1,\dots,d_L \to \infty$ such that:
        \begin{enumerate}[label*=\alph*.]
            \item The weights are i.i.d. and distributed as $\sN(0,1)$.
            \item The activation $\sigma(\xx)$ is twice-differentiable, with finite first- and second-derivatives. For $\xi \sim \sN(0,1)$, we have $\bbE [\sigma(\xi)] = 0$ and $\bbE [\sigma^2(\xi)] = 1$.
            \item Input $\XX \in \bbR^{d_0 \times n}$ is $(\epsilon_n,B)$-orthonormal, where B is a constant, and $\epsilon_n n^{1/4} \to 0$ as $n \to \infty$. 
            \item As $n \to \infty$, $\lim \text{spec}~\XX^T \XX = \mu_0$ for a probability distribution $\mu_0$ on $[0,\infty)$, and $\lim n/d_l = \gamma_l$ for constants $\gamma_l \in (0,\infty)$ and each $l = 1,2,\dots,L$.
        \end{enumerate}
        \item Outputs $y_1,y_2,\dots$ are distributed as $y_i \overset{i.i.d}{\sim} \sN(0,1)$, $i=1,\dots,n$.
    \end{enumerate}
\end{assumption}

As an example, note that by \cref{def:orthonormal}, \cref{as:main} covers i.i.d input data $X_0 = [\xx_1,\dots,\xx_n]$ where $\xx_i \sim \sN(\mathbf{0},\CC)$ with $\Tr~ \CC = 1$. In this case, $\mu_0$ corresponds to the Marchenko-Pastur distribution. \\

We are interested in the limiting $\sF^{\tau, \lambda}_n$ for large sample size and dimension, and where the kernel covariance function $\kappa$ is defined as either $\kappa = \kappa^{\text{CK}}$ or $\kappa = \kappa^{\text{NTK}}$. We define $\sF^{\tau, \lambda}_{\infty} := \lim_{n \to \infty} n^{-1} \bbE \sF^{\tau, \lambda}_n$. Looking at the normalized expected energy, 
\begin{align}
    \frac{1}{n} \bbE \sF^{\tau, \lambda}_n &= \frac{\lambda}{2n}  \bbE \YY^T (\KK_\XX + \lambda \tau I)^{-1} \YY + \frac{1}{2n} \bbE \log \det (\KK_\XX + \lambda \tau I) - \frac{1}{2}\log(\frac{\lambda}{2\pi}) \nonumber \\ 
    &= \underbrace{\frac{\lambda}{2n} \bbE \Tr (\KK_\XX + \lambda \tau I)^{-1}}_{(a)} + \underbrace{\frac{1}{2n} \bbE \log \det (\KK_\XX + \lambda \tau I)}_{(b)} - \frac{1}{2}\log(\frac{\lambda}{2\pi}) \label{eq:asymp_free_energy},
\end{align}
where we have used that $\bbE[\YY^T \KK_\XX \YY] = \bbE[\Tr~ \KK_\XX]$ when $\KK_\XX$ is independent of $\YY$. To compute the limiting mean free energy, we need to evaluate $(a)$ and $(b)$ as $n,d_0,d_1,\dots,d_L \to \infty$ according to \cref{as:main}. As this will require evaluating functions of the limiting spectrum of the respective random Gram matrices, we first need to characterize the respective limiting spectrum. 

\subsection{Stieltjes Transform of Kernel Gram Matrices}\label{sec:appendix:stieltjes}
For a random matrix $\bMM \in \real^{n \times n}$, the empirical spectral measure of $\bMM$ is $\mu_\bMM = \frac{1}{n}\sum_{i=1}^n \delta_{\lambda_i(\bMM)}$. We define the Stieltjes transform of $\mu_\bMM$ as $m_{\mu_\bMM}(z) := \int 1 / (t - z) \mu_\bMM(dt)$, for $z \not\in \spec(\bMM)$.  The importance of the Stieltjes transform can be seen from the following relationship:
\begin{align}
    m_{\mu_\bMM}(z) = \frac{1}{n}\Tr (\bMM - zI)^{-1}. \label{eq:stieltjes_trace}
\end{align}
We require the Stieltjes transform $m_{\KK}(z)$ for $\KK \in \{\KK_{\XX}^{\text{CK}}, \KK_{\XX}^{\text{NTK}}\}$, for the distribution of $\lim \spec (\KK_\XX)$. This has already been characterized in the work of \citet{fan2020spectra}. We firstly need to define some constants:
\begin{align}\label{eq:constants}
    b_\sigma = \bbE[\sigma'(\xi)], \quad a_\sigma = \bbE[\sigma'(\xi)^2], \quad q_l = (b_\sigma^2)^{L-l}, \quad r_l = a_\sigma^{L-l}, \quad r_+ = \sum_{l=0}^{L-1} r_l - q_l.
\end{align}

We will make use of the following lemma in the proofs:

\begin{lemma}[Lemma 3.5 \citet{fan2020spectra}]
    Under \cref{as:main}, and with $r_+$ and $q_l$ defined as above,
    \begin{align}
        \lim \spec ~ \KNTK_\XX = \lim \spec \left(r_+ I + \XX_L^T\XX_L + \sum_{l=0}^{L-1} q_l \XX_l^T \XX_l \right). \label{eq:ntk_approx}
    \end{align}
\end{lemma}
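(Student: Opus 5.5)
The plan is to write the NTK Gram matrix exactly as a sum of layer-wise Hadamard products, and then show each product is close to a simple deterministic-coefficient matrix in normalized Frobenius norm. Since $\theta=\mathrm{vec}\{\WW_1,\dots,\WW_L,\ww\}$, the Jacobian splits into blocks. The block with respect to $\ww$ gives $\XX_L^T\XX_L$. For $\WW_{l+1}$ ($l=0,\dots,L-1$), the chain rule gives the entry
\begin{align*}
(\KNTK_\XX)_{\alpha\beta}=(\XX_L^T\XX_L)_{\alpha\beta}+\sum_{l=0}^{L-1}(\XX_l^T\XX_l)_{\alpha\beta}\,\langle \mathbf{s}_{\alpha,l},\mathbf{s}_{\beta,l}\rangle .
\end{align*}
Here $\mathbf{s}_{\alpha,l}\in\real^{d_{l+1}}$ is the backpropagated vector
\begin{align*}
\mathbf{s}_{\alpha,l}=d_{l+1}^{-1/2}\,\mathrm{diag}(\sigma'(\WW_{l+1}\xx_{\alpha,l}))\,\WW_{l+2}^T\cdots\ww .
\end{align*}
Thus $\KNTK_\XX=\XX_L^T\XX_L+\sum_l (\XX_l^T\XX_l)\circ(\bSS_l^T\bSS_l)$, with $\bSS_l=[\mathbf{s}_{1,l}|\dots|\mathbf{s}_{n,l}]$.

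The key approximation to establish is
\begin{align*}
\bSS_l^T\bSS_l\approx q_l\,\mathbf{1}\mathbf{1}^T+(r_l-q_l)\eye .
\end{align*}
I would argue this by induction from the output layer backwards, conditioning on the forward pass, so that each $\WW_{k}$ acts on fixed vectors and the quadratic forms concentrate. For diagonal entries, $\|\mathbf{s}_{\alpha,l}\|^2$ concentrates at the layer-by-layer product of $\bbE[\sigma'(\xi)^2]=a_\sigma$, giving $r_l$. For off-diagonal entries, $\langle\mathbf{s}_{\alpha,l},\mathbf{s}_{\beta,l}\rangle$ concentrates at a product of $\bbE[\sigma'(u)\sigma'(v)]$, where $(u,v)$ are Gaussians with correlation $\xx_{\alpha,k}^T\xx_{\beta,k}=O(\epsilon_n)$. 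A Taylor expansion in the correlation (using twice-differentiability of $\sigma$) gives $b_\sigma^2+O(\epsilon_n)$ per layer, hence $q_l+O(\epsilon_n)+O(d^{-1/2})$ fluctuations. This needs, as input, that $(\epsilon_n,B)$-orthonormality propagates through the layers: each $\XX_l$ is again $(\tilde\epsilon_n,\tilde B)$-orthonormal with $\tilde\epsilon_n n^{1/4}\to0$. I would take this from the analogous CK result in \citet{fan2020spectra}, proved by the same Gaussian-conditioning concentration.

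Substituting, $(\XX_l^T\XX_l)\circ(q_l\mathbf{1}\mathbf{1}^T)=q_l\XX_l^T\XX_l$. Also $(\XX_l^T\XX_l)\circ(r_l-q_l)\eye=(r_l-q_l)\,\mathrm{diag}(\|\xx_{\alpha,l}\|^2)$, which differs from $(r_l-q_l)\eye$ by a diagonal matrix whose squared Frobenius norm is at most $B^2$, by the last condition of Definition~\ref{def:orthonormal}. Summing over $l$ produces exactly $r_+\eye+\XX_L^T\XX_L+\sum_l q_l\XX_l^T\XX_l$.

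The remaining error matrix $\mathbf{E}$ has the following entries:
\begin{itemize}
\item off-diagonal entries of size $|(\XX_l^T\XX_l)_{\alpha\beta}|\cdot O(\epsilon_n+d^{-1/2})\lesssim \epsilon_n(\epsilon_n+d^{-1/2})$;
\item diagonal entries of size $O(d^{-1/2})$.
\end{itemize}
Hence
\begin{align*}
\|\mathbf{E}\|_F^2\lesssim n^2\epsilon_n^4+n^2\epsilon_n^2/d+n/d+B^2=o(n),
\end{align*}
using $\epsilon_n n^{1/4}\to0$ and $n/d_l\to\gamma_l$. By the Hoffman–Wielandt inequality, $\frac1n\|\mathbf{E}\|_F^2\to0$ implies the two empirical spectral measures have vanishing Wasserstein-2 distance, which yields the lemma.

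I expect the main obstacle to be the off-diagonal concentration of $\langle\mathbf{s}_{\alpha,l},\mathbf{s}_{\beta,l}\rangle$ uniformly over all $\binom n2$ pairs. This requires high-probability bounds with a union bound: Hanson–Wright-type inequalities for Gaussian quadratic forms, conditioned on the forward features, with the bounded $\sigma',\sigma''$ giving Lipschitz control. One must also make sure the $O(\epsilon_n)$ bias from the Taylor expansion is genuinely multiplied by the $O(\epsilon_n)$ size of $(\XX_l^T\XX_l)_{\alpha\beta}$, since that product is what makes the $\epsilon_n n^{1/4}\to0$ rate sufficient.
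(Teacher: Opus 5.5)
The paper gives no proof of its own here; it imports the statement from \citet{fan2020spectra}. Your skeleton is the standard route to it. The block decomposition $\KNTK_\XX=\XX_L^T\XX_L+\sum_l(\XX_l^T\XX_l)\circ(\bSS_l^T\bSS_l)$ is right, as is the entrywise replacement of $\bSS_l^T\bSS_l$ by $q_l\mathbf{1}\mathbf{1}^T+(r_l-q_l)\eye$. Pairing the $O(\epsilon_n)$ bias with the $O(\epsilon_n)$ off-diagonal Gram entries, so that $n^2\epsilon_n^4=o(n)$, and closing with Hoffman--Wielandt are also sound.

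The gap is in how you propose to prove the key estimate $\langle\mathbf{s}_{\alpha,l},\mathbf{s}_{\beta,l}\rangle\approx q_l$ when $L\ge 2$. In the backward recursion $\mathbf{s}_{\alpha,l}=d_{l+1}^{-1/2}\mathrm{diag}(\sigma'(\WW_{l+1}\xx_{\alpha,l}))\,\WW_{l+2}^T\mathbf{s}_{\alpha,l+1}$, the vector $\mathbf{s}_{\alpha,l+1}$ depends on $\WW_{l+2}$ itself. It does so through $\sigma'(\WW_{l+2}\xx_{\alpha,l+1})$ and through every downstream feature. ``Conditioning on the forward pass'' does not turn this into a Gaussian matrix acting on a fixed vector:
\begin{itemize}
\item If $n\ge d_{l+1}$ (which the assumptions allow, since $\gamma_{l+1}$ may exceed $1$), then $\XX_{l+1}$ has full row rank. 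Conditioning on the batch pre-activations $\WW_{l+2}\XX_{l+1}$ then determines $\WW_{l+2}$ completely, leaving no randomness for Hanson--Wright.
\item If $n<d_{l+1}$, the frozen part of $\WW_{l+2}$ still has rank $n\asymp d_{l+1}$ and is not negligible.
\end{itemize}
Treating $\WW_{l+2}^T\mathbf{s}_{\alpha,l+1}$ as a fresh Gaussian vector is exactly the gradient-independence heuristic, which is what needs proof. For $L=1$ there is no issue, since $\ww$ does not enter the forward pass. So the real obstacle is this weight reuse between forward and backward passes, not only uniformity over pairs.

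The missing idea is to condition two inputs at a time rather than on the whole batch. Fix $\alpha\neq\beta$, condition on $\WW_1,\dots,\WW_{l+1}$, and let $P$ be the orthogonal projection onto $\mathrm{span}\{\xx_{\alpha,l+1},\xx_{\beta,l+1}\}$. Then $\WW_{l+2}P$ and $\WW_{l+2}(\eye-P)$ are independent, and $\mathbf{s}_{\alpha,l+1},\mathbf{s}_{\beta,l+1}$ depend on $\WW_{l+2}$ only through $\WW_{l+2}P$. Hence $(\eye-P)\WW_{l+2}^T\mathbf{s}_{\cdot,l+1}$ is, conditionally, a pair of Gaussian vectors with cross-covariance $\langle\mathbf{s}_{\alpha,l+1},\mathbf{s}_{\beta,l+1}\rangle(\eye-P)$. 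Hanson--Wright applies to these against the diagonal $\sigma'$-factors and produces the factor $\bbE[\sigma'(u)\sigma'(v)]$. The rank-two remainder $P\WW_{l+2}^T\mathbf{s}_{\cdot,l+1}$ moves the inner product by only $O(d^{-1/2})$ up to logarithms. To see this, check inductively that $(\WW_{l+2}\mathbf{u})^T\mathbf{s}_{\alpha,l+1}=O(\log n)$ for unit $\mathbf{u}$ in that span; a naive Cauchy--Schwarz bound gives $O(\sqrt{d})$ and is not enough. This gives the product formula for each pair with failure probability $n^{-D}$. After that, your union bound, Taylor expansion in the correlation, and Frobenius bookkeeping go through as written.
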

Hence, we see that the limiting spectrum of the NTK is the limiting spectrum of a linear combination of the intermediate Conjugate Kernel at each layer. \\

Let $\zz = (z_{-1},z_0,\dots,z_L) \in \complex^{L+2}$, and $\ww = (w_{-1},w_0,\dots,w_L) \in \complex^{L+2}$. For $l \geq 1$, we define the functions $s_l$ and $t_l$ recursively by 
\begin{align}
    s_l(\mathbf{z}) &= 1 / z_l  + \gamma_l t_{l-1} \left ( \mathbf{z}_{\text{prev}}(s_l(\mathbf{z}), \mathbf{z}), (1 - b_\sigma^2, 0, \dots, 0, b_\sigma^2) \right ), \label{eq:fixed_point_s}\\
    t_l(\mathbf{z}, \mathbf{w}) &= w_l / z_l + t_{l-1}(\mathbf{z}_{\text{prev}}(s_l(\mathbf{z}),\mathbf{z}), \mathbf{w}_{\text{prev}})\label{eq:fixed_point_t},
\end{align}
where
\begin{align}
    \zz_{\text{prev}}(s_l(\zz),\zz) &:= \left ( z_{-1} + \frac{1 - b_\sigma^2}{s_l(\zz)} , z_0, \dots, z_{l-2}, z_{l-1} + \frac{b_\sigma^2}{s_l(\mathbf{z})} \right ) \in \complex^{-} \times \real^{l-1} \times \complex^*, \label{eq:z_prev}\\
    \ww_{\text{prev}} &:= \left( w_{-1}, \dots, w_{l-1}  \right ) - (w_l / z_l) \cdot (z_{-1}, \dots, z_{l-1}) \in \complex^{l+1}. \nonumber
\end{align}

For $l = 0$, we define the first function $t_0$ by
\begin{align*}
    t_0 \left( (z_{-1},z_0),(w_{-1},w_0) \right ) &= \frac{z_0 w_{-1} -z_{-1} w_0}{z_0^2} m_0\left(-\frac{z_{-1}}{z_0} \right) + \frac{w_0}{z_0}.
\end{align*}
Finally, we define the following notation:
\begin{align*}
    \zz^{\xi} = (z_{-1} - \xi, z_0, \dots, z_l).
\end{align*}

We combine the relevant results from \citep{fan2020spectra} into the following theorem:
\begin{theorem}{\citep{fan2020spectra}}
    Suppose $b_\sigma \neq 0$. Under \cref{as:main}, for any fixed values $z_{-1}, z_0, \dots, z_L \in \real$ where $z_L \neq 0$, we have $\lim \spec (z_{-1} \eye + z_0 \XX_0^T \XX_0 + \dots + z_L \XX_L^T \XX_L) = \nu$ where $\nu$ is the probability distribution with Stieltjes transform $m_\nu(z) = t_L((-z + z_{-1}, z_0, \dots, z_L),(1,0,\dots,0)).$ For any $\zz \in \complex^- \times \real^l \times \complex^*$ and $l \geq 1$, there is a unique solution to the fixed point equation \cref{eq:fixed_point_s}, such that we can evaluate $t_L(\zz,\ww)$, for some $\ww \in \complex^{L+2}$. In particular, $\lim \spec ~\KCK_\XX$ and $\lim \spec ~\KNTK_\XX$ are the probability distributions with Stieltjes transforms
    \begin{align*}
        m_{\text{CK}}(z) &= t_L((-z, 0, \dots, 0, 1),(1,0,\dots,0)) \\
        m_{\text{NTK}}(z) &= t_L((-z + r_+, q_0, \dots, q_{L-1}, 1),(1,0,\dots,0)).
    \end{align*}
\end{theorem}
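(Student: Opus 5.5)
The plan is to obtain this statement mostly by assembling results of \citet{fan2020spectra} and then specializing to the two kernels. The argument has three parts. The first is the general claim that $\lim \spec (z_{-1}\eye + \sum_{l=0}^L z_l \XX_l^T\XX_l)$ exists and has Stieltjes transform $t_L((-z+z_{-1},z_0,\dots,z_L),(1,0,\dots,0))$. The second is existence and uniqueness of the fixed point in \cref{eq:fixed_point_s}. The third is the identification of $m_{\text{CK}}$ and $m_{\text{NTK}}$.

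For the general claim, I would follow the layer-by-layer induction of \citet{fan2020spectra}.

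\textbf{Base case.} For $l=0$, the matrix is $z_{-1}\eye + z_0\XX_0^T\XX_0$. Its spectrum is an affine image of $\mu_0$ from \cref{as:main}, so its Stieltjes transform is obtained by rescaling $m_0$. This is exactly the formula for $t_0$. The $w$-argument tracks the derivative of the transform with respect to the linear coefficients, which gives the $w$-dependent terms.

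\textbf{Induction step.} Condition on $\XX_{l-1}$. By the $(\epsilon_n,B)$-orthonormality, which propagates through layers with high probability, I would replace $\sigma(\WW_l\XX_{l-1})$ by the linear-plus-noise surrogate $b_\sigma \WW_l\XX_{l-1} + \sqrt{1-b_\sigma^2}\,\ZZ$, where $\ZZ$ is independent Gaussian. This uses the moment conditions $\bbE\sigma(\xi)=0$ and $\bbE\sigma^2(\xi)=1$, and it is the origin of the weights $(1-b_\sigma^2,0,\dots,0,b_\sigma^2)$. Given $\XX_{l-1}$, the term $z_l\XX_l^T\XX_l$ is then a sample-covariance matrix with $d_l$ independent columns and population covariance $\approx (1-b_\sigma^2)\eye + b_\sigma^2\XX_{l-1}^T\XX_{l-1}$.

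I would then apply a Silverstein/Marchenko--Pastur-type deterministic equivalent for the resolvent of a sample-covariance matrix plus a matrix built from $\XX_{l-1}$, with ratio $\gamma_l$. This expresses the resolvent trace through a resolvent of the previous-layer combination with shifted coefficients $\zz_{\text{prev}}$. Writing the scalar self-consistent quantity as $s_l$ yields \cref{eq:fixed_point_s} and \cref{eq:fixed_point_t}.

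\textbf{Uniqueness of the fixed point.} For $\zz\in\complex^-\times\real^l\times\complex^*$, I would show that the map $s\mapsto 1/z_l + \gamma_l t_{l-1}(\cdots)$ sends a suitable half-plane into itself. This uses the fact that $t_{l-1}$ is a Stieltjes-type transform of a positive measure. Uniqueness then follows by a standard contraction or analytic-continuation argument from large $|z|$.

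\textbf{Specialization.} Choosing $\zz=(0,\dots,0,1)$ gives $\KCK_\XX=\XX_L^T\XX_L$ and hence $m_{\text{CK}}$. For the NTK, Lemma 3.5 of \citet{fan2020spectra} (\cref{eq:ntk_approx}) says that $\KNTK_\XX$ has the same limiting spectrum as $r_+\eye + \XX_L^T\XX_L + \sum_{l<L} q_l\XX_l^T\XX_l$. Plugging $\zz=(r_+,q_0,\dots,q_{L-1},1)$ into the general claim gives $m_{\text{NTK}}$. The condition $b_\sigma\neq 0$ ensures that each $q_l\neq0$ and that the recursion through $z_{l-1}+b_\sigma^2/s_l$ is nondegenerate.

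I expect the main obstacle to be the linearization step: showing that replacing $\sigma(\WW_l\XX_{l-1})$ by its Gaussian-equivalent surrogate leaves the limiting spectrum unchanged, and that orthonormality with $\epsilon_n n^{1/4}\to0$ propagates to deeper layers. I would cite \citet{fan2020spectra} for this rather than reprove it, and verify only the bookkeeping of the $\ww$-variables in the specialization.
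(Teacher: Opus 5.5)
Your top-level route matches the paper's. You invoke the general result of \citet{fan2020spectra} and specialize it at $\zz=(0,\dots,0,1)$ and, via \cref{eq:ntk_approx}, at $\zz=(r_+,q_0,\dots,q_{L-1},1)$. The paper gives no independent proof and only restates Fan--Wang, so this is fine. The part of your sketch that is wrong is your reading of the $\ww$-argument, and it sits exactly in the bookkeeping you say you would verify. $t_l(\zz,\ww)$ is not a derivative of the Stieltjes transform. It is the limiting normalized trace
\[
t_l(\zz,\ww)=\lim \frac{1}{n}\Tr\Bigl[\Bigl(w_{-1}\eye+\sum_{i=0}^{l} w_i\XX_i^T\XX_i\Bigr)\Bigl(z_{-1}\eye+\sum_{i=0}^{l} z_i\XX_i^T\XX_i\Bigr)^{-1}\Bigr].
\]
With this reading the base case is immediate. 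Writing $w_{-1}\eye+w_0\XX_0^T\XX_0=\frac{w_0}{z_0}\bigl(z_{-1}\eye+z_0\XX_0^T\XX_0\bigr)+\bigl(w_{-1}-\frac{w_0z_{-1}}{z_0}\bigr)\eye$ gives exactly the stated $t_0$. The same split at layer $l$ gives \cref{eq:fixed_point_t} with $\ww_{\text{prev}}$. A directional derivative of the resolvent trace would instead involve the squared resolvent, hence $m_0'$, which does not appear in $t_0$. So your base-case check would fail.

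The same misreading obscures the induction step. Conditionally on earlier layers, the Silverstein-type equivalent replaces the resolvent of $\bSS_l+z_l\XX_l^T\XX_l-x\eye$ by that of $\bSS_l+s_l^{-1}\tilde{\bm\Phi}_l-x\eye$. Here $s_l=\frac{1}{z_l}+\gamma_l\lim\frac1n\Tr\,\tilde{\bm\Phi}_l(\bSS_l+s_l^{-1}\tilde{\bm\Phi}_l-x\eye)^{-1}$, as in \cref{eq:s_through_t}. The trace against the population covariance $\tilde{\bm\Phi}_l=b_\sigma^2\XX_{l-1}^T\XX_{l-1}+(1-b_\sigma^2)\eye$ is exactly $t_{l-1}$ at $\ww=(1-b_\sigma^2,0,\dots,0,b_\sigma^2)$. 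It is evaluated at $\zz_{\text{prev}}$, whose entries are genuinely complex since $s_l\in\complex^+$ off the real axis.

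So your induction hypothesis cannot be a statement about limiting spectra of Hermitian combinations of the $\XX_i^T\XX_i$, because $\bSS_l+s_l^{-1}\tilde{\bm\Phi}_l$ is not Hermitian. It must be convergence of the trace functionals $t_{l-1}(\zz,\ww)$ for all $\zz\in\complex^-\times\real^{l-1}\times\complex^*$ and arbitrary $\ww$.

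This is also where $b_\sigma\neq0$ actually enters: it keeps the top coefficient $z_{l-1}+b_\sigma^2/s_l$ in $\complex^*$ even when $z_{l-1}=0$, as for the CK. That $q_l\neq0$ is incidental. Finally, mapping a half-plane into itself does not by itself give uniqueness of the fixed point. Rely on Fan--Wang's uniqueness result (their Proposition 3.6 and Corollary G.5, as the appendix does) rather than a contraction heuristic.
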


We thus note that by \cref{eq:stieltjes_trace}, $\frac{\lambda}{2n} \bbE \Tr (\KK_\XX + \lambda \tau I)^{-1} \to \frac{\lambda}{2} m_{\KK}(-\lambda \tau)$, for $\KK \in \{\KK_{\XX}^{\text{CK}}, \KK_{\XX}^{\text{NTK}}\}$.

\subsection{Derivation of Log-Determinant Term}\label{sec:appendix:logdeterminant}
For the log-determinant term, we note that by the law of total expectation, combined with \cref{eq:ntk_approx},
\begin{align*}
    \frac{1}{n} \bbE \log \det (\KK_\XX + \lambda \tau I) = \frac{1}{n} \bbE_0 \bbE_1 \dots \bbE_{L-1} \log \det (r_+ \eye + \XX_L^T \XX_L + \sum_{l=0}^{L-1} q_l \XX_l^T \XX_l + \lambda \tau \eye),
\end{align*}
where $\bbE_l = \bbE[~.~|\XX_l,\dots,\XX_0]$. Hence, we will attempt to find the log-determinant for each layer $l$, conditioned on the previous $l-1$ layers. We will then iterate through the layers to find the total expected log-determinant. We further abstract this process by considering an abstracted NTK gram matrix, $\LL_l : \real^{l+2} \to \real^{n \times n}$, and a constant matrix $\bSS_l : \real^{l+2} \to \real^{n \times n}$, defined for $\zz_l = (z_{-1}, z_0, \dots, z_l) \in \real^{l+2}$ as
\begin{align} \label{eq:abstracted_ntk}
    \LL_l (\zz_l) &:= z_l \XX_l^T \XX_l + \bSS_l(\zz_l) \\
    \bSS_l(\zz_l) &:= \sum_{i=0}^{l-1} z_i \XX_i^T \XX_i + z_{-1}\eye. \nonumber
\end{align}
Hence, we would like to find the following:
\begin{align*}
    \lim_{n \to \infty} \frac{1}{n} \bbE_{l-1} \log \det \left( \LL_l (\zz_l) + x \eye \right),
\end{align*}
for $x \in \real$. To aid us in computing this limiting log-determinant, we employ the Shannon transform \citep{tulino2004random} $\mathcal{V}_{\LL_l}(x ; \zz_l)$ of ${\LL_l}(\zz_l)$, for $x > 0$:
\begin{align}
    \mathcal{V}_{\LL_l}(x ; \zz_l) = \frac{1}{n}  \bbE_{l-1} \log \det \left( I + \frac{1}{x} \LL_l (\zz_l) \right) = \int_0^{\infty} \log \left(1 + \frac{t}{x} \right) \mu_{\LL_l}(dt ; \zz_l) = \int_x^{+\infty} \left (\frac{1}{z} - m_{\LL_l}(-z ; \zz_l) \right) dz, \label{eq:shannon_transform}
\end{align}
where $m_{\LL_l}(x ; \zz_l)$ is the Stieltjes transform of the empirical spectral distribution, $\mu_{\LL_l}(. ; \zz_l)$, of $\LL_l (\zz_l)$. Thus, to find $\lim_{n \to \infty} \frac{1}{n} \bbE_{l-1} \log \det \left( \LL_l (\zz_l) + x \eye \right)$, we simply need to find some $\tilde{D}_l(z)$ such that $\frac{d}{dz}\tilde{D}_l(z) = 1/z - m_{\LL_l}(-z ; \zz_l)$, i.e. the anti-derivative of the integrand on the right-hand side of \cref{eq:shannon_transform}. \\

For this, we first need to find the Stieltjes transform, $m_{\LL_l}(x ; \zz_l)$, of the empirical spectral distribution, $\mu_{\LL_l}(.:\zz_l)$, for $\LL_l (\zz_l)$, conditioned on the previous $l-1$ layers. As we take \cref{as:main}, we employ \citet[Theorem 3.7, Lemma G.6]{fan2020spectra}, as well as the definition in \cref{eq:fixed_point_t}, to note that 
\begin{align*}
    m_{\LL_l}(x ; \zz_l) &\to \lim_{n \to \infty} \frac{1}{n} \Tr \left( z_l \XX_l^T \XX_l + \bSS_l(\zz_l) - x \eye \right )^{-1}  \\
    &= t_l(\zz_l^x,(1,0,\dots,0)) \\
    &= t_{l-1}(\zz_{\text{prev}}(s_l(\zz_l^x), \zz_l^x),(1,0,\dots,0)) \\
    &= \lim_{n \to \infty} \frac{1}{n} \Tr \left( \bSS_l(\zz_l) + s_l(\zz_l^x)^{-1} \tilde{\bm\Phi}_l - x \eye \right )^{-1},
\end{align*}
where $\tilde{\bm\Phi}_l = b_{\sigma}^2 \XX_{l-1}^T \XX_{l-1} + (1 - b_{\sigma}^2) \eye$ is deterministic conditioned on $\XX_{l-1}, \dots, \XX_0$. We note also that, by \cref{eq:fixed_point_s},
\begin{align}
    s_l(\zz_l^x) &= \frac{1}{z_l} + \gamma_l t_{l-1}(\zz_{\text{prev}}(s_l(\zz_l^x), \zz_l^x),(1-b_\sigma^2, 0, \dots, 0, b_\sigma^2)) \nonumber \\
    &= \frac{1}{z_l} + \gamma_l \lim_{n \to \infty} \frac{1}{n} \Tr~\tilde{\bm\Phi}_l \left( \bSS_l(\zz_l) + s_l(\zz_l^x)^{-1} \tilde{\bm\Phi}_l - x \eye \right )^{-1}.\label{eq:s_through_t}
\end{align}
Hence,
\begin{align}
    \frac{z_ls_l(\zz_l^x) - 1}{\gamma_lz_l} = \lim_{n \to \infty} \frac{1}{n} \Tr~\tilde{\bm\Phi}_l \left( \bSS_l(\zz_l) + s_l(\zz_l^x)^{-1} \tilde{\bm\Phi}_l - x \eye \right )^{-1}. \label{eq:s_l_form}
\end{align}
As we will be taking derivatives with respect to the input $x$ of $m_{\LL_l}(x ; \zz_l)$, and as this will involve $s_l(\zz_l^x)$ which is a function of $x$, we employ the short-hand notation
\begin{align*}
    s_l(x) \equiv s_l(\zz_l^x) = \frac{1}{z_l} + \gamma_l \lim_{n \to \infty} \frac{1}{n} \Tr~\tilde{\bm\Phi}_l \left( \bSS_l + s_l(x)^{-1} \tilde{\bm\Phi}_l - x \eye \right )^{-1},
\end{align*}
where we have also dropped the dependence of $\bSS_l$ on $\zz_l$.

\begin{lemma}[Shannon Transform]\label{lemma:shannon}
    Let $\zz_l = (z_{-1}, z_0, \dots, z_l) \in \real^{l+2}$, and take $\LL_l$ as defined in \cref{eq:abstracted_ntk}. The anti-derivative of $1/z - m_{\LL_l}(-z ; \zz_l)$, in the limit as $n \to \infty$, and under \cref{as:main}, is given by 
    \begin{align*}
        \tilde{D}_l(z) = -\frac{1}{n} \log \det \left( \bSS_l + s_l^{-1}(-z) \tilde{\bm\Phi}_l + z \eye_n \right) - \frac{1}{\gamma_l} \ln s_l(-z) - \frac{1}{z_l \gamma_l s_l(-z)} + \ln z.
    \end{align*}
    Further, we have that 
    \begin{align*}
        \mathcal{V}_l(x) = \frac{1}{n}\log\det\left(\eye + \frac{1}{x}\LL_l(\zz_l)\right) &\to \frac{1}{n} \log \det \left(\eye_n + \frac{1}{x}\left( \bSS_l + s_l^{-1}(-x) \tilde{\bm\Phi}_l\right) \right) \\
        &+ \frac{1}{\gamma_l} \ln s_l(-x) + \frac{1}{z_l \gamma_l s_l(-x)} + \frac{1}{\gamma_l}\ln z_l - \frac{1}{\gamma_l}.
    \end{align*}
\end{lemma}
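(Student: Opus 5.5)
We verify the anti-derivative claim by direct differentiation, and then integrate \cref{eq:shannon_transform} from $x$ to $\infty$ using the boundary behaviour of $\tilde{D}_l$ at infinity.

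Write $u = -z$ and $A(z) := \bSS_l + s_l^{-1}(-z)\tilde{\bm\Phi}_l + z\eye$, so that $m_{\LL_l}(-z;\zz_l) = \lim \frac{1}{n}\Tr A(z)^{-1}$ by the conditional representation preceding the lemma. Differentiating the first term of $\tilde{D}_l$ gives
\begin{align*}
-\frac{1}{n}\Tr A^{-1}A' = -\frac{1}{n}\Tr A^{-1} + \frac{s_l'(-z)}{s_l(-z)^2}\,\frac{1}{n}\Tr \tilde{\bm\Phi}_l A^{-1},
\end{align*}
where $\frac{d}{dz} s_l(-z)^{-1} = s_l'(-z)/s_l(-z)^2$. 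By \cref{eq:s_l_form}, $\frac{1}{n}\Tr\tilde{\bm\Phi}_lA^{-1} \to (z_l s_l - 1)/(\gamma_l z_l)$ with $s_l = s_l(-z)$. Hence the second piece equals
\begin{align*}
\frac{s_l'(-z)}{\gamma_l s_l} - \frac{s_l'(-z)}{\gamma_l z_l s_l^2}.
\end{align*}

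Next we differentiate the remaining terms. The term $-\frac{1}{\gamma_l}\ln s_l(-z)$ contributes $+\frac{s_l'(-z)}{\gamma_l s_l}$, since the chain rule contributes $-s_l'(-z)$. The term $-\frac{1}{z_l\gamma_l s_l(-z)}$ contributes $-\frac{s_l'(-z)}{z_l\gamma_l s_l^2}$. I expect the signs to be such that all $s_l'$ terms cancel exactly; this is the design of the ``envelope'' structure, and I will check it carefully, fixing the sign convention of $s_l'$ if needed. What remains is $1/z - \frac{1}{n}\Tr A^{-1} = 1/z - m_{\LL_l}(-z;\zz_l)$, as required.

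The main obstacle is justifying differentiability of $x\mapsto s_l(\zz_l^x)$ and the exchange of the limit $n\to\infty$ with differentiation and with the integral. For this we use the uniqueness and analyticity of the fixed point from \citet{fan2020spectra} on $\complex^-$, extended to real $x<0$ away from the spectrum, together with the implicit function theorem applied to \cref{eq:s_through_t}. Locally uniform convergence of the analytic resolvent traces then gives convergence of derivatives.

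For the second claim, $\mathcal{V}_l(x) = \int_x^\infty (\cdot)\,dz = \lim_{z\to\infty}\tilde{D}_l(z) - \tilde{D}_l(x)$, so we compute the asymptotics as $z\to\infty$. From \cref{eq:s_through_t}, $s_l(-z) \to 1/z_l$, since the trace term is $O(1/z)$. Then the log-determinant term satisfies $-\frac{1}{n}\log\det A(z) = -\ln z + o(1)$, which cancels the $+\ln z$. The other two terms tend to $\frac{1}{\gamma_l}\ln z_l - \frac{1}{\gamma_l}$. Subtracting $\tilde{D}_l(x)$, and writing $\log\det(\bSS_l + s_l^{-1}\tilde{\bm\Phi}_l + x\eye) = n\ln x + \log\det(\eye + x^{-1}(\cdots))$ so that the $\ln x$ terms cancel, yields the stated limit.
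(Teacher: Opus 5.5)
Your proposal follows the same route as the paper's proof. You differentiate $\tilde D_l$ with Jacobi's formula and use \cref{eq:s_l_form} to replace $\frac{1}{n}\Tr \tilde{\bm\Phi}_l A^{-1}$ by $(z_l s_l - 1)/(\gamma_l z_l)$, so that the $s_l'$ terms cancel. You then use $s_l(-z)\to 1/z_l$ to obtain $\lim_{z\to\infty}\tilde D_l(z) = \frac{1}{\gamma_l}\ln z_l - \frac{1}{\gamma_l}$ and read off $\mathcal{V}_l(x)$.

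One correction is needed in your first display. Since $A'(z) = \eye + \frac{s_l'(-z)}{s_l(-z)^2}\tilde{\bm\Phi}_l$, Jacobi's formula gives $-\frac{1}{n}\Tr A^{-1} - \frac{s_l'(-z)}{s_l(-z)^2}\frac{1}{n}\Tr\tilde{\bm\Phi}_l A^{-1}$, so that piece equals $-\frac{s_l'}{\gamma_l s_l} + \frac{s_l'}{\gamma_l z_l s_l^2}$. This exactly cancels the two contributions you correctly computed from $-\frac{1}{\gamma_l}\ln s_l(-z)$ and $-\frac{1}{z_l\gamma_l s_l(-z)}$. With the sign as you wrote it, the terms double instead of cancelling. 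Changing the sign convention for $s_l'$ does not repair this, because it flips all four terms together; the fix belongs in the Jacobi step.
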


\begin{proof}
    We will show that $d/dz~\tilde{D}_l(z) = 1/z - m_{\LL_l}(-z)$. We use Jacobi's formula,
    \begin{align*}
        \frac{d}{dz}\log \det \AA(z) = \Tr \left(\AA^{-1}(z) \frac{d\AA(z)}{dz} \right).
    \end{align*}
    We take the derivative of the log-determinant term first:
    \begin{align*}
        \frac{1}{n} \frac{d}{dz}\log \det \left( \bSS_l + s_l^{-1}(-z) \tilde{\bm\Phi}_l + z \eye_n \right) &= \lim_{n \to \infty} \frac{1}{n} \Tr\left( \bSS_l + s_l^{-1}(-z) \tilde{\bm\Phi}_l + z \eye_n \right)^{-1} \left( - s_l^{-2}(-z) s_l'(-z) \tilde{\bm\Phi}_l + \eye_n \right) \\
        &=  s_l^{-2}(-z) s_l'(-z)  \lim_{n \to \infty}\frac{1}{n} \Tr~\tilde{\bm\Phi}_l\left( \bSS_l + s_l^{-1}(-z) \tilde{\bm\Phi}_l + z \eye_n \right)^{-1} \\
        &+ \lim_{n \to \infty}\frac{1}{n} \Tr\left( \bSS_l + s_l^{-1}(-z) \tilde{\bm\Phi}_l + z \eye_n \right)^{-1} \\
        &= s_l^{-2}(-z) s_l'(-z) \frac{z_l s_l(\zz_l) - 1}{\gamma_lz_l}  + m_{\LL_l}(-z) \\
        &= - \frac{s_l'(-z)}{s_l^{2}(-z)\gamma_lz_l} + \frac{s_l'(-z)}{\gamma_l s_l(-z)} + m_{\LL_l}(-z),
    \end{align*}
    where we used \cref{eq:s_l_form}. Combining this with the rest of the derivatives we have
    \begin{align*}
        \frac{d}{dz} \tilde{D}_l(z) &= \frac{s_l'(-z)}{s_l^{2}(-z)\gamma_lz_l} - \frac{s_l'(-z)}{\gamma_l s_l(-z)} - m_{\LL_l}(-z) + \frac{s_l'(-z)}{\gamma_l s_l(-z)} - \frac{s_l'(-z)}{s_l^{2}(-z)\gamma_lz_l} + \frac{1}{z} \\
        &= \frac{1}{z}  - m_{\LL_l}(-z).
    \end{align*}
    
    We also need to look at the asymptotics of $\tilde{D}_l(z)$, as by \cref{eq:shannon_transform}, we need to evaluate this anti-derivative at $z = +\infty$. Now, by \citet[Proposition E.3]{fan2020spectra}, we know that $s_l$ is bounded. Further, \citet[Corollary G.5]{fan2020spectra} states that there is a unique solution $s_l(\zz_l)$ in $\complex^{+}$, as defined by \cref{eq:fixed_point_s}. Hence, if we can find a bounded solution to \cref{eq:fixed_point_s}, it must be the unique solution. We have that,
    \begin{align*}
        \lim_{x \to +\infty} s_l(-x) &= \frac{1}{z_l} + \gamma_l \lim_{n \to \infty, x \to +\infty} \frac{1}{n} \Tr~ \tilde{\bm\Phi}_l \left( \bSS_l(\zz_l) + s_l(-x)^{-1} \tilde{\bm\Phi}_l + x \eye \right )^{-1}.
    \end{align*}
    Now if $s_l(-x)$ is bounded (as is $\bSS_l$ and $\tilde{\bm\Phi}_l$ by the $(\epsilon,B)-$orthonormality of \cref{as:main}), then as $x \to \infty$, $\Tr~\tilde{\bm\Phi}_l \left( \bSS_l(\zz_l) + s_l(-x)^{-1} \tilde{\bm\Phi}_l + x \eye \right )^{-1}$ is dominated by the $x \eye$ term, and hence this trace-term will go to $0$. Hence, 
    \begin{align*}
        \lim_{x \to +\infty} s_l(-x) = \frac{1}{z_l}.
    \end{align*}
    Then,
    \begin{align*}
        \lim_{z \to \infty} \tilde{D}_l(z) &= -\frac{1}{n} \lim_{z \to \infty}\log \det \left( \frac{1}{z} \left(\bSS_l + s_l^{-1}(-z) \tilde{\bm\Phi}_l\right) + \eye_n \right) - \lim_{z \to \infty}\frac{1}{\gamma_l} \ln s_l(-z) - \lim_{z \to \infty} \frac{1}{z_l \gamma_l s_l(-z)} \\
        &= \frac{1}{\gamma_l}\ln z_l - \frac{1}{\gamma_l}.
    \end{align*}
    Hence, we can finally conclude that
    \begin{align*}
        \mathcal{V}_l(x) = \frac{1}{n}\log\det\left(\eye + \frac{1}{x}\LL_l(\zz_l)\right) &\to \frac{1}{n} \log \det \left(\eye_n + \frac{1}{x}\left( \bSS_l + s_l^{-1}(-z) \tilde{\bm\Phi}_l\right) \right) \\
        &+ \frac{1}{\gamma_l} \ln s_l(-z) + \frac{1}{z_l \gamma_l s_l(-x)} + \frac{1}{\gamma_l}\ln z_l - \frac{1}{\gamma_l}.
    \end{align*}
\end{proof}

Before we iterate through the layers, we firstly redefine the notation for inputs $\zz_l$. Firstly, let $\zz_L = (z_{-1},z_0,\dots,z_L) \in \real^{L+2}$. Further, define $\zz_{l-1}^{\xi} := \zz_{\text{prev}}(s_l(\zz_l^{\xi}),\zz_l^{\xi})$ and $\zz_{l-1} := \zz_{\text{prev}}(s_l(\zz_l),\zz_l)$. The elements of the vector $\zz_{l-1}$ and $\zz_{l-1}^{\xi}$ will then be written as
\begin{align*}
    \zz_{l-1} &= (z_{-1}^{(l-1)}, z_{0}^{(l-1)}, \dots, z_{-1}^{(l-2)}, z_{-1}^{(l-1)}) \\
    \zz_{l-1}^{\xi} &= (z_{-1}^{(l-1)} - \xi, z_{0}^{(l-1)}, \dots, z_{-1}^{(l-2)}, z_{-1}^{(l-1)}).
\end{align*}
Further, we define the log-determinant of $\LL_l$ for input $\zz_L$ as
\begin{align*}
    D_L(x,\zz_L) = \lim_{n,d_0,\dots,d_L \to \infty} \frac{1}{n} \bbE \log \det (x + z_l \XX_L^T \XX_L + \sum_{l=0}^{L-1}z_l \XX_l^T \XX_l + z_{-1} \eye_n). 
\end{align*}

\begin{theorem}\label{theorem:log_determinant}
    Define the following quantity:
    \begin{align*}
        C_l(x,\zz_l, s_l(\zz_l^{-x})) := \frac{1}{\gamma_l} \left( \ln(s_l(\zz_l^{-x}) z_l^{(l)}) + \frac{1}{s_l(\zz_l^{-x}) z_l^{(l)}} - 1 \right).
    \end{align*}
    Then, under \cref{as:main},
    \begin{align*}
        % D_L(x,\zz_L) = \sum_{l=0}^L \mathcal{C}_l(x,\zz_l, s_l(\zz_l^{-x})) + \ln(x + 1/s_0(\zz_0^{-x}) + z_{-1}),
        D_L(x,\zz_L) = \sum_{l=1}^L C_l(x,\zz_l, s_l(\zz_l^{-x})) + \mathcal{V}_0\left(\frac{x+z_{-1}^{(0)}}{z_0^{(0)}}\right) + \ln\left(1 + \frac{z_{-1}^{(0)}}{x}\right),
    \end{align*}
    where $s_l(\zz_l^{-x})$ is given as the solution of \cref{eq:fixed_point_s} for $1 \leq l \leq L$, and $\mathcal{V}_0(x)$ is the Shannon transform of the input spectral distribution $\mu_0$. Further, the log-determinant for the CK and NTK is given by $D_L(x,(0,\dots,0,1))$ and $D_L(x,(r_+,q_0,\dots,q_{L-1},1))$ respectively.
    
    % and $s_0(\zz_0)$ is defined as $s_0(\zz_0) := 1 / z_0 + \gamma_0 t_0(\zz_0, (1,0))$. 
\end{theorem}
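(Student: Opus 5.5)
The natural route is to peel off one layer at a time with \cref{lemma:shannon} and then iterate by induction on $L$, using the tower-property decomposition $\bbE = \bbE_0\bbE_1\cdots\bbE_{L-1}$ described above. Write the target as
\begin{align*}
\frac{1}{n}\log\det(x\eye + \LL_L(\zz_L)) = \ln x + \mathcal{V}_L(x),
\end{align*}
where $\mathcal{V}_L(x)$ is the Shannon transform of $\LL_L(\zz_L)$ conditioned on $\XX_{L-1},\dots,\XX_0$.

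\textbf{Inductive step.} Apply \cref{lemma:shannon} at layer $L$. This replaces $\mathcal{V}_L(x)$ by
\begin{align*}
\frac{1}{n}\log\det\Bigl(\eye+\tfrac{1}{x}\bigl(\bSS_L+s_L^{-1}(-x)\tilde{\bm\Phi}_L\bigr)\Bigr) + C_L\bigl(x,\zz_L,s_L(\zz_L^{-x})\bigr).
\end{align*}
The scalar terms $\frac{1}{\gamma_L}\bigl(\ln s_L + \frac{1}{z_L s_L} + \ln z_L - 1\bigr)$ are exactly $C_L$ as defined in the statement.

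Next I will add $\ln x$ back to recombine with the remaining determinant. Since $\tilde{\bm\Phi}_L = b_\sigma^2\XX_{L-1}^T\XX_{L-1}+(1-b_\sigma^2)\eye$, the matrix $x\eye + \bSS_L + s_L^{-1}\tilde{\bm\Phi}_L$ equals $x\eye + \LL_{L-1}(\zz_{L-1}^{-x})$. Here $\zz_{L-1} = \zz_{\text{prev}}(s_L(\zz_L^{-x}),\zz_L^{-x})$, i.e.\ $z_{-1}$ is shifted by $(1-b_\sigma^2)/s_L$ and $z_{L-1}$ by $b_\sigma^2/s_L$, matching \cref{eq:z_prev}.

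This matrix depends only on layers $0,\dots,L-1$, and its coefficients are deterministic in the limit. We are therefore left with the same problem one layer lower, at the updated vector $\zz_{L-1}$. Taking $\bbE_{L-2}$ and repeating down to $l=1$ accumulates $\sum_{l=1}^L C_l$. The recursion of the $\zz_{l-1}$'s is exactly the one in the notation introduced just before the theorem.

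\textbf{Base case.} At $l=0$ the remaining term is $\frac{1}{n}\log\det\bigl((x+z_{-1}^{(0)})\eye + z_0^{(0)}\XX_0^T\XX_0\bigr)$. Factor out $z_0^{(0)}$ and use $\lim\spec\XX_0^T\XX_0=\mu_0$. This gives a Shannon transform $\mathcal{V}_0\bigl((x+z_{-1}^{(0)})/z_0^{(0)}\bigr)$ plus explicit logarithms. Combining with the leftover $\ln x$ produces the stated $\ln(1+z_{-1}^{(0)}/x)$ term, after bookkeeping of the $\ln z_0^{(0)}$ and $\ln x$ constants. I must check carefully that these constants cancel as claimed; this is where I'd expect a sign or constant slip.

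\textbf{CK and NTK specialisations.} The CK case is $\zz_L=(0,\dots,0,1)$. The NTK case is $(r_+,q_0,\dots,q_{L-1},1)$, justified by \cref{eq:ntk_approx}, which gives equality of limiting spectra. Convergence of the normalized log-det then follows for bounded spectra with $x>0$, since $\log$ is continuous and bounded on the support.

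\textbf{Main obstacle.} The hardest step is justifying the interchange of limits and conditional expectations in the iteration. \cref{lemma:shannon} holds conditionally on previous layers in the limit, but the next layer's argument $\zz_{L-1}$ involves $s_L$, which is a limiting deterministic quantity, while the matrix $\LL_{L-1}$ is still random. I would handle this by invoking the almost-sure convergence of spectra from \citet{fan2020spectra}, together with uniform boundedness of $s_l$ (Proposition E.3 there) and the $(\epsilon,B)$-orthonormality bounds. Together these give dominated convergence for the log-determinants. A further check is that $s_l(\zz_l^{-x})>0$ so that $\ln s_l$ is real; the case $z_{-1}^{(l)}$ possibly complex is covered by real $x>0$ and the fixed-point uniqueness from Corollary G.5.
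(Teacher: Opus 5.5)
Your route is the paper's route: peel off layer $l$ with \cref{lemma:shannon}, identify $\bSS_l+s_l^{-1}\tilde{\bm\Phi}_l$ as $\LL_{l-1}$ at the vector from \cref{eq:z_prev}, telescope the $C_l$'s, and close with $\mathcal{V}_0$. The gap is the bookkeeping step you deferred, and it does not close.

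In your inductive step the $\ln x$ is already absorbed: you rewrite $\ln x+\frac1n\log\det(\eye+x^{-1}(\bSS_L+s_L^{-1}\tilde{\bm\Phi}_L))$ as $\frac1n\log\det(x\eye+\bSS_L+s_L^{-1}\tilde{\bm\Phi}_L)$. So no $\ln x$ is left over at the bottom, and the base case gives exactly
\begin{align*}
D_L(x,\zz_L)=\sum_{l=1}^L C_l\bigl(x,\zz_l,s_l(\zz_l^{-x})\bigr)+\mathcal{V}_0\Bigl(\frac{x+z_{-1}^{(0)}}{z_0^{(0)}}\Bigr)+\ln\bigl(x+z_{-1}^{(0)}\bigr),
\end{align*}
with $\ln z_0^{(0)}$ cancelling when you factor. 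This differs from the stated right-hand side by $\ln x$. What is actually stated is the limit of the Shannon transform $\frac1n\bbE\log\det(\eye+x^{-1}\LL_L(\zz_L))$, which equals $D_L-\ln x$.

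A sanity check confirms this. For the CK as $\bm\gamma\to0$, one has $C_l\to0$, $\mu_0\to\delta_1$ and $\zz_0\to(1-b_\sigma^{2L},b_\sigma^{2L})$, so the stated formula gives $\ln(1+1/x)$. But the limiting spectral law of $\KCK_\XX$ tends to $\delta_1$, so $D_L\to\ln(1+x)$. That is also the value needed to recover $\sF_{\infty,0}^{\tau,\lambda}$ for the CK in \cref{theorem:limiting_cases}.

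The paper's proof makes the same slip: it carries $\frac1n\log\det(\eye+x^{-1}\LL_l)$ throughout and relabels it $D_L$ in the last line. Meanwhile \cref{lemma:mp_logdet} and the free-energy formulas quietly restore the $\ln x$ through $\ln(x+1/s_0+z_{-1}^{(0)})$. So you cannot verify that ``the constants cancel as claimed.'' Either state the result for the Shannon transform, or replace the last term by $\ln(x+z_{-1}^{(0)})$.

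A smaller point: keep $x$ out of the updated vector. Take $\zz_{L-1}:=\zz_{\text{prev}}(s_L(\zz_L^{-x}),\zz_L)$ with the \emph{unshifted} $\zz_L$, which is the paper's convention. Then $x\eye+\bSS_L+s_L^{-1}\tilde{\bm\Phi}_L=x\eye+\LL_{L-1}(\zz_{L-1})$, and the next application of \cref{lemma:shannon} is at the same $x$ with $s_{L-1}(\zz_{L-1}^{-x})$. As you wrote it, with second argument $\zz_L^{-x}$ and the matrix $x\eye+\LL_{L-1}(\zz_{L-1}^{-x})$, the shift $x$ is counted more than once.

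Your plan for interchanging limits and conditional expectations is sound: almost-sure spectral convergence, boundedness of $s_l$, and dominated convergence. It is more careful than the paper, which only writes $\sim$.
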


\begin{proof}
    We note from \cref{lemma:shannon} that 
    \begin{align*}
        \frac{1}{n}\log\det\left(\eye + \frac{1}{x}\LL_l(\zz_l)\right) &\to \frac{1}{n} \log \det \left(\eye_n + \frac{1}{x}\left( \bSS_l + s_l^{-1}(-x) \tilde{\bm\Phi}_l\right) \right) \\
        &+ \frac{1}{\gamma_l} \ln s_l(-x) + \frac{1}{z_l^{(l)} \gamma_l s_l(-x)} + \frac{1}{\gamma_l}\ln z_l^{(l)} - \frac{1}{\gamma_l}.
    \end{align*}
    Now,
    \begin{align*}
        \bSS_l + s_l^{-1}(-x) \tilde{\bm\Phi}_l &= \sum_{i=0}^{l-1} z_i^{(l)} \XX_i^T \XX_i + z_{-1}^{(l)}\eye + \frac{b_\sigma^2}{s_l(-x)} \XX_{l-1}^T \XX_{l-1} + \frac{1-b_\sigma^2}{s_l(-x)} \eye \\
        &= \left(z_{l-1}^{(l)} + \frac{b_\sigma^2}{s_l(-x)}\right) \XX_{l-1}^T \XX_{l-1} + \sum_{i=0}^{l-2} z_i^{(l)} \XX_i^T \XX_i + \left(z_{-1}^{(l)} + \frac{1-b_\sigma^2}{s_l(-x)}\right) \eye \\
        &= \left(z_{l-1}^{(l)} + \frac{b_\sigma^2}{s_l(-x)}\right) \XX_{l-1}^T \XX_{l-1} + \bSS_{l-1}(\zz_{\text{prev}}(s_l(\zz_l^{-x}), \zz_l)) \\
        &= \LL_{l-1} (\zz_{\text{prev}}(s_l(\zz_l^{-x}), \zz_l)),
    \end{align*}
    where we have employed the short-hand notation from \cref{eq:z_prev}. Hence,
    \begin{align*}
        \bbE_{l-2} \bbE_{l-1} \frac{1}{n}\log\det\left(\eye + \frac{1}{x}\LL_l(\zz_l)\right) &\sim \bbE_{l-2}\frac{1}{n} \log \det \left(\eye_n + \frac{1}{x}\left( \bSS_l + s_l^{-1}(-x) \tilde{\bm\Phi}_l\right) \right) \\
        &+ \frac{1}{\gamma_l} \ln s_l(-x) + \frac{1}{z_l^{(l)} \gamma_l s_l(-x)} + \frac{1}{\gamma_l}\ln z_l^{(l)} - \frac{1}{\gamma_l} \\
        &\sim \bbE_{l-2}\frac{1}{n} \log \det \left(\eye_n + \frac{1}{x}\LL_{l-1} (\zz_{l-1}) \right) + C_l(x,\zz_l, s_l(\zz_l^{-x})) \\
        &\sim \frac{1}{n} \log \det \left(\eye_n + \frac{1}{x}\LL_{l-2} (\zz_{l-2}) \right) \\
        &+ C_{l-1}(x,\zz_{l-1}, s_{l-1}(\zz_{l-1}^{-x})) + C_l(x,\zz_l, s_l(\zz_l^{-x})).
    \end{align*}
    We finally need to define the log-determinant for layer $l = 0$. Note that at the $0$th-layer,
    \begin{align*}
        \frac{1}{n} \log \det \left(\eye_n + \frac{1}{x}\LL_{0} (\zz_{0}) \right) &= \frac{1}{n} \log \det \left(\eye_n + \frac{1}{x}(z_0^{(0)} \XX_0^T \XX_0 + z_{-1}^{(0)} \eye_n)\right) \\
        &= \frac{1}{n} \log \det \left(\left( 1 + \frac{z^{(0)}_{-1}}{x}\right) \eye_n + \frac{z_0^{(0)}}{x} \XX_0^T \XX_0 \right) \\
        &= \frac{1}{n} \log \det \left(\eye_n +  \frac{z_0^{(0)}}{x + z_{-1}^{(0)}}\XX_0^T \XX_0 \right) + \ln\left(\frac{x+z_{-1}^{(0)}}{x}\right)\\
        &= \mathcal{V}_0 \left(\frac{x + z_{-1}^{(0)}}{z_0^{(0)}} \right) + \ln\left(\frac{x+z_{-1}^{(0)}}{x}\right),
    \end{align*}
    where $\mathcal{V}_0(x)$ is the Shannon transform for the input spectral distribution, $\mu_0$. \\

    We can therefore put everything together to form the total log-determinant:
    \begin{align*}
        D_L(x,\zz_L) &= \lim_{n,d_0,\dots,d_L \to \infty} \frac{1}{n} \bbE \log \det (x + z_l \XX_L^T \XX_L + \sum_{l=0}^{L-1} z_l \XX_l^T \XX_l + z_{-1} \eye_n) \\
        &= \sum_{l=1}^L C_l(x,\zz_l, s_l(\zz_l^{-x})) + \mathcal{V}_0\left(\frac{x+z_{-1}^{(0)}}{z_0^{(0)}}\right) + \ln\left(1 + \frac{z_{-1}^{(0)}}{x}\right).
    \end{align*}
\end{proof}

\begin{lemma}[Marchenko-Pastur Input Distribution]\label{lemma:mp_logdet}
    Let the input $\XX = d_0^{-1/2}\ZZ$, with $\ZZ$ having $i.i.d.$ columns that are zero mean and unit covariance. Then, 
    \begin{align}
        D_L(x,\zz_L) = \sum_{l=0}^L C_l(x,\zz_l, s_l(\zz_l^{-x})) + \ln(x + 1/s_0(\zz_0^{-x}) + z_{-1}^{(0)}). \label{eq:log_determinant_mp}
    \end{align}
\end{lemma}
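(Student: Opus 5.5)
For Marchenko--Pastur inputs, the plan is to handle the input layer $\XX_0^T\XX_0$ with the same mechanism as \cref{lemma:shannon}, instead of leaving it as the generic Shannon transform $\mathcal{V}_0$ of \cref{theorem:log_determinant}. The idea is that $\XX_0 = d_0^{-1/2}\ZZ$ behaves like a ``layer'' with linear activation applied to a deterministic identity input. In that reading, $\bSS_0 = z_{-1}^{(0)}\eye$ and $\tilde{\bm\Phi}_0 = \eye$, and $\gamma_0 = \lim n/d_0$ plays the role of $\gamma_l$. The recursion over layers $l=L,\dots,1$ in the proof of \cref{theorem:log_determinant} is kept unchanged. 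Only the terminal step, which computes $\frac1n\log\det(\eye + \frac1x \LL_0(\zz_0))$, is replaced by an explicit formula.

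\textbf{Step 1: fixed point at layer $0$.} Define $s_0(\zz_0^{-x})$ by the analogue of \cref{eq:s_through_t}:
\begin{align*}
s_0(\zz_0^{-x}) = \frac{1}{z_0^{(0)}} + \frac{\gamma_0}{z_{-1}^{(0)} + s_0(\zz_0^{-x})^{-1} + x}.
\end{align*}
This is the standard Marchenko--Pastur self-consistent equation for the Stieltjes transform of $z_0^{(0)}\XX_0^T\XX_0 + z_{-1}^{(0)}\eye$. Concretely, the identity $m_{\LL_0}(-x) = (z_{-1}^{(0)} + s_0^{-1} + x)^{-1}$ is the MP deterministic equivalent $\frac1n\Tr(z_0\XX_0^T\XX_0 + z_{-1}\eye + x\eye)^{-1} \to (z_{-1} + s_0^{-1} + x)^{-1}$. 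This also matches the $L=0$ case of the Fan--Wang theorem with $t_0$ specialized to the MP law $m_0$. I will take the existence, uniqueness and boundedness of the root in the appropriate half-plane from that theorem; alternatively they follow from solving the quadratic explicitly and selecting the branch with $s_0 \to 1/z_0^{(0)}$ as $x\to\infty$.

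\textbf{Step 2: Shannon transform at layer $0$.} I would repeat the computation in the proof of \cref{lemma:shannon} with the candidate antiderivative
\begin{align*}
\tilde D_0(z) = -\ln\bigl(z_{-1}^{(0)} + s_0^{-1}(-z) + z\bigr) - \frac{1}{\gamma_0}\ln s_0(-z) - \frac{1}{z_0^{(0)}\gamma_0 s_0(-z)} + \ln z.
\end{align*}
Here the normalized log-determinant has become a scalar logarithm because $\tilde{\bm\Phi}_0 = \eye$. Differentiating and using the fixed-point equation in the form $(z_0 s_0 - 1)/(\gamma_0 z_0) = (z_{-1} + s_0^{-1} + z)^{-1}$, the $s_0'$ terms cancel exactly as before, leaving $1/z - m_{\LL_0}(-z)$. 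As $z\to\infty$ we have $s_0(-z)\to 1/z_0^{(0)}$, so the boundary value is $\frac{1}{\gamma_0}(\ln z_0^{(0)} - 1)$. Combining these gives
\begin{align*}
\tfrac1n\log\det\bigl(\eye + \tfrac1x\LL_0(\zz_0)\bigr) \to \ln\Bigl(1 + \frac{z_{-1}^{(0)} + s_0^{-1}}{x}\Bigr) + C_0\bigl(x,\zz_0,s_0(\zz_0^{-x})\bigr).
\end{align*}

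\textbf{Step 3: assembly.} I would substitute this into the telescoped sum from \cref{theorem:log_determinant} in place of $\mathcal{V}_0(\cdot) + \ln(1 + z_{-1}^{(0)}/x)$. This adds the term $C_0$, which extends the sum to start at $l=0$. The remaining logarithm becomes $\ln(x + 1/s_0 + z_{-1}^{(0)})$ once the $\ln x$ separating $\log\det(x\eye + \cdot)$ from $\log\det(\eye + \cdot/x)$ is restored. That restoration is how $D_L$ is defined, and it is the convention I would fix explicitly, since the previous proof silently passed between the two normalizations. The result is \cref{eq:log_determinant_mp}.

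\textbf{Main obstacle.} The hardest step is the bookkeeping that makes Step 3 consistent. The recursion produces shifted arguments $\zz_0$, so $z_{-1}^{(0)}$ and $z_0^{(0)}$ depend on $x$ through $s_1,\dots,s_L$. I need to check that the Jacobi-formula cancellation in Step 2 holds for fixed shifted coefficients: the outer layers are conditioned on first, so $z^{(0)}$ is frozen when $x$ enters only through the $\LL_0$ resolvent. I also need to check that the correct MP branch is the one selected.
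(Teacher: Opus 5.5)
Your proposal is correct and follows essentially the same route as the paper. The paper also gets $s_0 = 1/z_0^{(0)} + \gamma_0 m_{\LL_0}$ from the Marchenko--Pastur deterministic equivalent, applies \cref{lemma:shannon} with $\bSS_0 = z_{-1}^{(0)}\eye$ and $\tilde{\bm\Phi}_0 = \eye$ to produce the extra $C_0$ term, and substitutes into \cref{theorem:log_determinant}. Your explicit restoration of the $\ln x$ term is worth keeping, because the paper moves silently between the $\log\det(x\eye + \cdot)$ and $\log\det(\eye + \cdot/x)$ normalizations, and its displayed layer-$0$ Shannon transform writes $s_0(-x)$ where $s_0^{-1}(-x)$ is meant.
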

\begin{proof}
    We want to compute $\frac{1}{n} \log \det \left(\eye_n + \frac{1}{x}(z_0^{(0)} \XX_0^T \XX_0 + z_{-1}^{(0)} \eye_n)\right)$. We employ \citet[Theorem 1]{couillet2011deterministic} to write the Stieltjes transform of $\LL_{0} (\zz_{0})$ as
    \begin{align}
        m_{\LL_0}(z ; \zz_0) &= \frac{1}{n}\Tr \left( \bSS_0 + \frac{z_0^{(0)}}{1+z_0^{(0)} \gamma_0 m_{\LL_0}(z ; \zz_0)}\eye_n - z \eye_n \right)^{-1} \label{eq:stieltjes_layer_0} \\
        &= \frac{1}{n}\Tr \left( z_{-1}\eye_n + s_0^{-1}(z)\eye_n - z \eye_n \right)^{-1} \\
        s_0(z) &= \frac{1+z_0^{(0)}\gamma_0 m_{\LL_0}(z ; \zz_0)}{z_0^{(0)}}. \nonumber
    \end{align}
    We then employ \cref{lemma:shannon} to compute the Shannon transform of $m_{\mathcal{L}_0}(z ; \zz_0)$,
    \begin{align*}
        \mathcal{V}_0(x) &\to \frac{1}{n} \log \det \left( \eye_n + \frac{1}{x} \left(z_{-1}^{(0)}\eye_n +  s_0^{-1}(-x)\eye_n\right) \right) + \frac{1}{\gamma_0}\ln s_0(-x) + \frac{1}{z_0^{(0)} \gamma_0 s_0(-x)} + \frac{1}{\gamma_0}\ln z_0^{(0)} - \frac{1}{\gamma_0} \\
        &= \ln \left(1 + \frac{s_0(-x) + z_{-1}^{(0)}}{x} \right)  + \frac{1}{\gamma_0} \ln s_0(-x)  + \frac{1}{z_0^{(0)} \gamma_0 s_0(-x)}  - \gamma_0^{-1} + \frac{1}{\gamma_0} \ln z_0^{(0)}
    \end{align*}
    We can explicitly compute $s_0$ by noting from \cref{eq:stieltjes_layer_0} that
    \begin{align*}
        m_{\LL_0}(x ; \zz_0) &= \left( z_{-1}^{(0)} + \frac{z_0^{(0)}}{1+z_0^{(0)}\gamma_0 m_{\LL_0}(x ; \zz_0)} - x \right)^{-1} \\
        z_0^{(0)} m_{\LL_0}(x ; \zz_0) &= \left( \frac{z_{-1}^{(0)} - x}{z_0^{(0)}} + \frac{1}{1 + \gamma_0 z_0^{(0)} m_{\LL_0}(x ; \zz_0)}\right)^{-1}.
    \end{align*}
    The solution of this is the same as the solution of the Marchenko-Pastur equation,, i.e. the Stieltjes transform $m_0$ of the Marchenko-Pastur law \citep[Page 51.]{couillet2011random}:
    \begin{align*}
        m_0(z) = \left( -z + \frac{1}{1 + \gamma_0 m_0(z)} \right)^{-1}, \quad z = -\frac{z_{-1} - x}{z_0}.
    \end{align*}
    Hence,  
    \begin{align*}
        s_0(-x) = \frac{1}{z_0^{(0)}} +  \frac{\gamma_0}{z_0^{(0)}} m_0(-\frac{z_{-1}^{(0)} + x}{z_0^{(0)}}) = \frac{1}{z_0^{(0)}} + \gamma_0 t_0(\zz_0^{-x},(1,0)).
    \end{align*}
    Hence, by \cref{theorem:log_determinant}, 
    \begin{align*}
        D_L(x,\zz_L) = \sum_{l=0}^L C_l(x,\zz_l, s_l(\zz_l^{-x})) + \ln(x + 1/s_0(\zz_0^{-x}) + z_{-1}^{(0)}).
    \end{align*}
\end{proof}

\subsection{Expression for Limiting BFE}\label{sec:appendix:freeenergy_evaluate}
We can now present the limiting BFE for the CK and NTK, under the assumptions in \cref{as:main}. For the CK, let $m_{K} = \mck$ and $\zz_L = (0, 0, \dots, 0, 1)$, and for the NTK, let $m_{K} = \mntk$ and $\zz_L = (r_+, q_0, \dots, q_{L-1}, 1)$. Under $(\epsilon,B)-$orthonormal input data $\XX$, with Shannon-transform $\mathcal{V}_0$, by \cref{theorem:log_determinant} we have that
\begin{align}\tag{General}
    \sF^{\tau,\lambda}_{\infty} = \frac{\lambda}{2} m_{K}(- \lambda \tau) + \frac{1}{2} \sum_{l=1}^L C_l(x,\zz_l, s_l(\zz_l^{-x})) + \frac{1}{2} \mathcal{V}_0\left(\frac{x+z_{-1}^{(0)}}{z_0^{(0)}}\right) + \frac{1}{2} \ln\left(1 + \frac{z_{-1}^{(0)}}{x}\right) - \frac{1}{2}\ln \frac{\lambda}{2\pi}.\label{eq:bfe_general}
\end{align}
For input $\XX = d_0^{-1/2}\ZZ$, with $\ZZ$ having $i.i.d.$ columns that are zero mean and unit covariance, we have by \cref{lemma:mp_logdet} that
\begin{align}\tag{MP}
    \sF^{\tau,\lambda}_{\infty} = \frac{\lambda}{2} m_{K}(- \lambda \tau) + \frac{1}{2} \sum_{l=0}^L C_l(x,\zz_l, s_l(\zz_l^{-x})) + \frac{1}{2} \ln(x + 1/s_0(\zz_0^{-x}) + z_{-1}^{(0)}) - \frac{1}{2}\ln \frac{\lambda}{2\pi}. \label{eq:bfe_mp}
\end{align}
A method of evaluating $s_L,s_{L-1},\dots,s_1$, and hence \cref{eq:bfe_general}, is given in \citep[Appendix A]{fan2020spectra}. 

We numerically evaluate the limiting BFE for Gaussian input data, i.e. \cref{eq:bfe_mp}. We use $1000$ input vectors, and take $\lambda = 0.5,\gamma=0.1$. Layer-widths were homogeneous across the network layers, i.e. $d_0 = d_1 = \dots = d_L = d$. We use $20$ log-spaced layer-widths between $d = 200$ and $d = 10000$. Mean and $95\%$ confidence intervals are plotted from $N=10$ repeats. Code can be found \href{https://github.com/josephwilsonmaths/bfe}{here}. The results are presented in \cref{fig:free_energy_display}. We see in \cref{fig:free_energy_display} that the limiting energy shows great agreement with the random empirical quantity, for both the NTK and the CK.

\begin{figure}[t]
    \centering
    \includegraphics[width=1\linewidth]{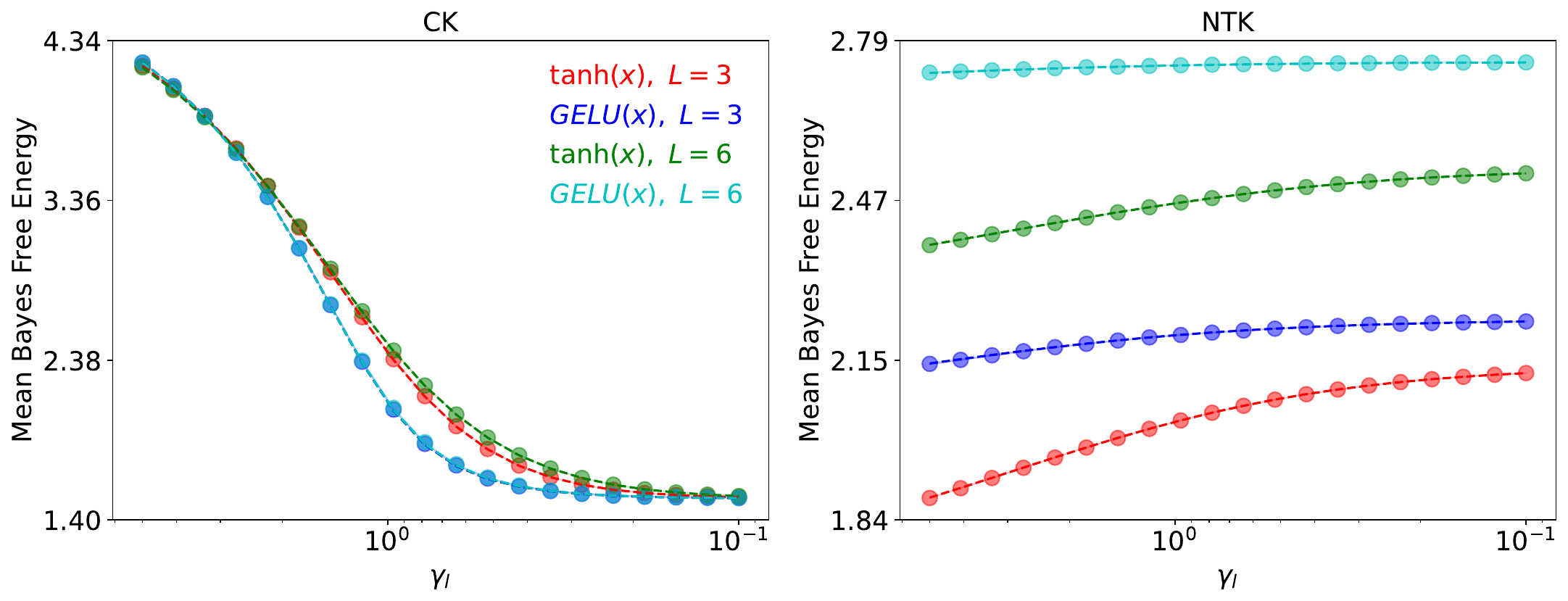}
    \caption{BFE for (left) CK and (right) NTK for normalized Gaussian input and output data, as a function of layer width $\gamma_l = \gamma_0 = \dots = \gamma_L$, for $\tau = 0.1, \lambda = 0.5$. The empirical BFE is calculated with Gaussian input data, and is represented with circles, while the limiting BFE in \cref{eq:bfe_mp} is represented with dashed lines.}
    \label{fig:free_energy_display}
\end{figure}

\subsection{Bound on Free Energy}\label{sec:appendix:bound}
\begin{lemma}\label{lemma:bound}
For input $X_0 = d_0^{-1/2}Z$, with Z having $i.i.d.$ columns that are zero mean and with unit covariance, 
\begin{align*}
    \sF_{\infty}^{\tau, \lambda} > \min \sF_{\infty}^{\tau, \lambda}.
\end{align*}
\end{lemma}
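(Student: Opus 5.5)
The plan is to read the right-hand side of \cref{lemma:bound} as the universal constant $(1+\ln 2\pi)/2$ appearing in \cref{thm:limiting_free_energy}, and to prove the strict inequality $\sF_\infty^{\tau,\lambda} > (1+\ln 2\pi)/2$ for every finite $\bm\gamma \in (0,\infty)^{L+1}$ and every $\tau,\lambda>0$. First I will rewrite \cref{eq:asymp_free_energy} as a spectral integral against $\mu_K := \lim \spec \KK_\XX$, for $\KK_\XX \in \{\KCK_\XX, \KNTK_\XX\}$:
\begin{align*}
\sF_\infty^{\tau,\lambda} = \frac{1}{2}\int \left( \frac{\lambda}{t+\lambda\tau} + \ln(t+\lambda\tau) \right) \mu_K(dt) - \frac{1}{2}\ln\frac{\lambda}{2\pi}.
\end{align*}
This identity uses \cref{eq:stieltjes_trace} for term (a) and \cref{eq:shannon_transform} (equivalently \cref{theorem:log_determinant}) for term (b). It is legitimate because, under the $(\epsilon,B)$-orthonormality in \cref{as:main}, the spectra are uniformly bounded and the integrand is continuous and bounded on $[0,\infty)$ shifted by $\lambda\tau>0$. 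Hence the expectations converge to the integrals against the deterministic limit $\mu_K$.

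Next comes a pointwise bound. Set $u = t+\lambda\tau>0$ and $g(u) = \lambda/u + \ln u$. Then $g'(u) = -\lambda/u^2 + 1/u$, so $g$ is strictly convex-like around its unique global minimiser $u=\lambda$, with $g(\lambda) = 1+\ln\lambda$; moreover $g(u) > 1+\ln\lambda$ for all $u\neq\lambda$. Integrating against the probability measure $\mu_K$ gives
\begin{align*}
\sF_\infty^{\tau,\lambda} \geq \frac{1}{2}(1+\ln\lambda) - \frac{1}{2}\ln\lambda + \frac{1}{2}\ln 2\pi = \frac{1+\ln 2\pi}{2}.
\end{align*}
Equality holds if and only if $\mu_K = \delta_{\lambda(1-\tau)}$. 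This requires $\tau<1$ and that the limiting kernel spectrum be degenerate. The same computation also explains the values $\lambda^*$ in \cref{thm:limiting_free_energy}, since they are exactly the point-mass locations in the limits $\bm\gamma\to 0$ or $\bm\gamma \to\infty$.

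It therefore remains to show that $\mu_K$ is never a point mass when all $\gamma_l\in(0,\infty)$; I expect this to be the main obstacle. My first attempt will use the Stieltjes characterisation: suppose $m_K(z) = 1/(c-z)$ for some $c$, and substitute into the recursion \cref{eq:fixed_point_s}--\cref{eq:fixed_point_t}, descending to layer $0$. There, with Marchenko--Pastur input, $t_0$ is built from $m_0$, which is the Stieltjes transform of a non-degenerate law for $\gamma_0\in(0,\infty)$. Propagating the rational form backwards should force $m_0$ to be rational of degree one, a contradiction. If this becomes unwieldy, I will fall back on a moment argument. Namely, I will show that the variance $\int t^2\,\mu_K(dt) - (\int t\,\mu_K(dt))^2$ is strictly positive, using the known second moments of the linear-width CK recursion of \citet{fan2020spectra}. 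Each layer contributes at least a term of the form $\gamma_l(1-b_\sigma^2)$-type or $b_\sigma^{2}\gamma_0$-type positive variance, inherited from Marchenko--Pastur. For the NTK, the shift $r_+\eye$ does not change the variance, and the sum of positively weighted CKs in \cref{eq:ntk_approx} retains the positive contribution from $\XX_L^T\XX_L$.

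Once non-degeneracy is established, the set where $g(t+\lambda\tau) > 1+\ln\lambda$ has positive $\mu_K$-mass. Combined with the pointwise bound above, this yields the strict inequality.
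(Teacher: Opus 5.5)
Your route is genuinely different from the paper's, and the non-strict part of it is sound.

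\textbf{The two routes.} The paper works with the Marchenko--Pastur closed form \cref{eq:bfe_mp}. It rewrites it as $\frac12[\lambda t_0-\ln(\lambda t_0)]+\frac12\sum_l C_l+\frac12\ln 2\pi$ and gets strictness from $C_l>0$, i.e.\ from $s_l z_l^{(l)}=1+z_l^{(l)}\gamma_l t_{l-1}>1$.

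You instead apply $y-\ln y-1\ge 0$ with $y=\lambda/(t+\lambda\tau)$ under the spectral integral. Since $\mathcal{F}_n^{\tau,\lambda}=-\log\mathcal{N}(\YY;0,\Sigma)$ with $\Sigma=\lambda^{-1}(\KK_\XX+\lambda\tau\eye)$ and $\YY\sim\mathcal{N}(0,\eye)$, your bound is Gibbs' inequality: cross-entropy is at least the entropy $\frac n2(1+\ln 2\pi)$. This buys two things. The weak bound holds for any input spectrum and even at finite $n$. It also explains the $\lambda^*$ of \cref{thm:limiting_free_energy} as point-mass locations.

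The two proofs split the same quantity differently. In \cref{eq:bfe_mp}, $\sum_l C_l=\int\ln(t+\lambda\tau)\,\mu_K(dt)+\ln\int(t+\lambda\tau)^{-1}\,\mu_K(dt)$ is exactly the Jensen gap, which vanishes iff $\mu_K$ is a point mass. So the paper's $C_l>0$ is your non-degeneracy claim, certified for free by the fixed-point equation.

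\textbf{The gap: non-degeneracy of $\mu_K$.} The Stieltjes sketch (``should force $m_0$ to be rational'') is only a heuristic.

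The moment route works for the CK. By Fan--Wang, $\mu_l$ is the free multiplicative convolution of the MP law of ratio $\gamma_l$ with the law of $(1-b_\sigma^2)+b_\sigma^2T$, $T\sim\mu_{l-1}$. All means are $1$, so the variances obey $v_l=\gamma_l+b_\sigma^4v_{l-1}$, and $\mathrm{Var}(\mu_{\text{CK}})\ge\gamma_L>0$. (The per-layer terms are $b_\sigma^{4(L-l)}\gamma_l$, not the forms you guessed.)

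For the NTK, however, your claim that ``a positively weighted sum of CKs retains the variance of $\XX_L^T\XX_L$'' is false as a general principle. For example, $\mathrm{diag}(\eye_{n/2},0)+\mathrm{diag}(0,\eye_{n/2})=\eye$ has zero spectral variance although each summand has variance $1/4$. Write $G_l=\XX_l^T\XX_l$; all normalized traces tend to $1$. You must show that the cross terms $\mathrm{Cov}(G_{l'},G_l)=\lim\frac1n\mathrm{Tr}(G_{l'}G_l)-1$ are nonnegative.

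They are nonnegative, and the argument runs as follows.
\begin{itemize}
\item Take $l<l'$ and condition on $\XX_0,\dots,\XX_{l'-1}$. Then $\frac1n\mathrm{Tr}(G_{l'}G_l)$ is an average of $d_{l'}$ i.i.d.\ quadratic forms, so it concentrates around $\frac1n\mathrm{Tr}(\mathbb{E}[G_{l'}\mid\XX_{l'-1}]\,G_l)$.
\item Moreover $\mathbb{E}[G_{l'}\mid\XX_{l'-1}]=(1-b_\sigma^2)\eye+b_\sigma^2G_{l'-1}$, up to a perturbation negligible in normalized trace (the linearization behind \cref{eq:ntk_approx}).
\item Hence $\mathrm{Cov}(G_{l'},G_l)=b_\sigma^2\,\mathrm{Cov}(G_{l'-1},G_l)=\dots=b_\sigma^{2(l'-l)}\mathrm{Var}(G_l)\ge0$.
\item Since $q_l\ge0$, this gives $\mathrm{Var}(\mu_{\text{NTK}})\ge\mathrm{Var}(\mu_{\text{CK}})\ge\gamma_L$.
\end{itemize}
Alternatively, import the paper's certificate $C_l>0$ directly. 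Once this is added, nothing in your argument uses the Marchenko--Pastur input, so it proves the lemma for general $\mu_0$ under \cref{as:main}.

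\textbf{A smaller point.} $\ln(t+\lambda\tau)$ is not bounded on $[0,\infty)$. So identifying $\lim\frac1n\mathbb{E}\log\det(\KK_\XX+\lambda\tau\eye)$ with $\int\ln(t+\lambda\tau)\,\mu_K(dt)$ needs the high-probability operator-norm bound on $\KK_\XX$ plus uniform integrability. One way is via $\ln(\lambda\tau)\le\frac1n\log\det(\KK_\XX+\lambda\tau\eye)\le\ln(\frac1n\mathrm{Tr}\,\KK_\XX+\lambda\tau)$. The paper takes this step for granted as well.
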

\begin{proof}
    Note that for a normalized Gaussian input, the free energy can be written as 
    \begin{align}
        \sF_{\infty}^{\tau, \lambda} &= \frac{\lambda}{2} m_K(\lambda\tau) + \frac{1}{2} \left[\sum_{l=0}^L \mathcal{C}_l(x,\zz_l, s_l(\zz_l^{-x})) + \ln(x + 1/s_0(\zz_0^{-x}) + z_{-1})\right] - \frac{1}{2}\ln(\frac{\lambda}{2 \pi}) \\
        &= \frac{\lambda}{2} t_0(\zz_0^{-\lambda \tau}) + \frac{1}{2} \ln(x + 1/s_0(\zz_0^{-x}) + z_{-1}) \\
        &+ \frac{1}{2} \sum_{l=0}^L \frac{1}{\gamma_l} \left( \frac{1}{s_l(\zz_l^{-\lambda \tau}) \left[\zz_l^{-\lambda \tau}\right]_l} - \ln \left( \frac{1}{s_l(\zz_l^{-\lambda \tau}) \left[\zz_l^{-\lambda \tau}\right]_l}\right) - 1\right) - \frac{1}{2}\ln(\frac{\lambda}{2 \pi}).
    \end{align}
    Now by the Marchenko-Pastur law:
    \begin{align*}
        m_0(z) &= \left( -z + \frac{1}{1 + \gamma_0 m_0(z)}\right)^{-1} \\
        \frac{1}{z_0} m_0(-\frac{x + z_{-1}}{z_0}) &= \left(x + z_{-1} + \frac{1}{\frac{1}{z_0} + \gamma_0 \frac{1}{z_0} m_0(-\frac{x + z_{-1}}{z_0})}\right)^{-1} \\
         t_0(\zz_0^{-\lambda \tau}) &= \left(x + 1/s_0(\zz_0^{-x}) + z_{-1}\right)^{-1}
    \end{align*}
    where we have used the previous definitions of $\zz_l, t_0, s_0$. Hence, the free energy can be written as:
    \begin{align*}
        \sF_{\infty}^{\tau, \lambda} &= \frac{1}{2} \bigg[ \lambda t_0(\zz_0^{-\lambda \tau}) -  \ln(\lambda t_0(\zz_0^{-\lambda \tau})) \\
        &+ \frac{1}{2} \sum_{l=0}^L \frac{1}{\gamma_l} \left( \frac{1}{s_l(\zz_l^{-\lambda \tau}) \left[\zz_l^{-\lambda \tau}\right]_l} - \ln \left( \frac{1}{s_l(\zz_l^{-\lambda \tau}) \left[\zz_l^{-\lambda \tau}\right]_l}\right) - 1\right) \bigg] + \frac{1}{2}\ln(2 \pi).
    \end{align*}
    Note that we have terms in the form of $x - \ln x$, which reaches a minimum of $1$ at $x=1$. We also note that
    \begin{align*}
        s_l(\zz_l^{-\lambda \tau}) z_l^{(l)} = 1 + z_l^{(l)} \gamma_l t_{l-1}(\zz_{l-1}, (1-b_\sigma^2, 0, \dots, 0, b_{\sigma}^2)). 
    \end{align*}

    We also appeal to \citep{couillet2011deterministic}[Theorem 1] to see that $t_l(\zz_l^{-x}, \ww) > 0$ for $x > 0$. Further, as $s_l > 0$ (by the definition of $s_l$ given in \citep{fan2020spectra}), then $z_l^{(l)} > 0$, by the definition of $s_l$. Then, 
    \begin{align*}
        s_l(\zz_l^{-\lambda \tau}) \left[\zz_l^{-\lambda \tau}\right]_l > 1.
    \end{align*}
    Hence,
    \begin{align*}
        \mathcal{C}_l(x,\zz_l, s_l(\zz_l^{-x})) > 0.
    \end{align*}
    Therefore, we see that 
    \begin{align*}
        \sF_{\infty}^{\tau, \lambda} > \min \sF_{\infty}^{\tau, \lambda}.
    \end{align*}
\end{proof}

\subsection{Under- and Over-Parameterized Models}\label{sec:appendix:limiting_regime}
By taking $\gamma_l \to 0$ and $\gamma_l \to \infty$ for $l=0,1,\dots,L$, (i.e. very over- or under-parameterized models), we can derive closed form solutions for the limiting energy. 

\begin{theorem}\label{theorem:limiting_cases}
    For input $\XX = d_0^{-1/2}\ZZ$, with $\ZZ$ having $i.i.d.$ columns that are zero mean and unit covariance, the limiting BFE $\sF_\infty^{\tau,\lambda}$ converges to an explicit expression as $\gamma_l \to 0$ or $\gamma_l \to \infty$, for $l=0,1,\dots,L$. For the CK, the expressions are given by
\begin{align*}
    \lim_{\gamma_0, \gamma_1, \dots, \gamma_L \to 0} \sF_\infty^{\tau,\lambda} &:= \sF_{\infty,0}^{\tau,\lambda} = \frac{1}{2} \left( \frac{\lambda}{1 + \lambda \tau} - \ln \frac{\lambda}{1 + \lambda \tau} + \ln 2\pi \right) \\
    \lim_{\gamma_0, \gamma_1, \dots, \gamma_L \to \infty} \sF_\infty^{\tau,\lambda} &:= \sF_{\infty,\infty}^{\tau,\lambda} = \frac{1}{2} \left( \frac{1}{\tau} - \ln \frac{1}{\tau} + \ln 2\pi \right),
\end{align*}
while for the NTK,
\begin{align*}
    \lim_{\gamma_0, \gamma_1, \dots, \gamma_L \to 0} \sF_\infty^{\tau,\lambda} &:= \sF_{\infty,0}^{\tau,\lambda} = \frac{1}{2} \left( \frac{\lambda}{\sum_{i=0}^L a_\sigma^i + \lambda \tau} - \ln \frac{\lambda}{\sum_{i=0}^L a_\sigma^i + \lambda \tau} + \ln 2\pi \right) \\
    \lim_{\gamma_0, \gamma_1, \dots, \gamma_L \to \infty} \sF_\infty^{\tau,\lambda} &:= \sF_{\infty,\infty}^{\tau,\lambda} = \frac{1}{2} \left( \frac{\lambda}{\lambda \tau + r_+} - \ln \frac{\lambda}{\lambda \tau + r_+} + \ln 2\pi \right).
\end{align*}
\end{theorem}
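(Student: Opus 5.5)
The plan is to start from the representation of the MP-input free energy in the proof of \cref{lemma:bound}:
\begin{align*}
\sF_\infty^{\tau,\lambda} = \frac12\Big[\lambda t_0 - \ln(\lambda t_0)\Big] + \frac12\sum_{l=0}^L \frac{1}{\gamma_l}\Big(u_l - \ln u_l - 1\Big) + \frac12\ln 2\pi,
\end{align*}
where $t_0 = t_0(\zz_0^{-\lambda\tau})$ and $u_l = 1/(s_l(\zz_l^{-\lambda\tau}) z_l^{(l)})$, with $x=\lambda\tau$ throughout. The task then splits into two parts: computing the limit of $\lambda t_0$, and showing that the penalty sum $\sum_l \gamma_l^{-1}(u_l-\ln u_l-1)$ vanishes in both limits. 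Since $\lambda t_0$ also equals $\lambda m_K(-\lambda\tau)$, the limit of the first bracket can instead be read off from the limiting spectrum of the kernel itself. That route is easier, and I would use it.

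\textbf{Step 1 ($\bm\gamma\to0$).} From the recursion \cref{eq:fixed_point_s}, $s_l = 1/z_l^{(l)} + \gamma_l t_{l-1}(\cdot)$ with $t_{l-1}$ bounded (Proposition E.3 of \citet{fan2020spectra}). Hence $u_l = 1/(1+z_l^{(l)}\gamma_l t_{l-1}) = 1 - \gamma_l z_l^{(l)} t_{l-1} + O(\gamma_l^2)$. Because $v-\ln v-1 = O((v-1)^2)$, each term $\gamma_l^{-1}(u_l-\ln u_l-1) = O(\gamma_l)\to0$. In the same limit $\zz_{\rm prev}$ shifts $z_{-1}$ by $(1-b_\sigma^2)/s_l \to (1-b_\sigma^2)z_l$ and $z_{l-1}$ by $b_\sigma^2 z_l$. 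Equivalently, the spectrum of $\XX_l^T\XX_l$ concentrates at $1$ as width dominates. The CK spectrum therefore collapses to $\delta_1$, and by \cref{eq:ntk_approx} the NTK spectrum collapses to $\delta_{r_+ + 1 + \sum_l q_l}$. Using $r_+ = \sum_{l=0}^{L-1}(r_l-q_l)$, this point equals $\sum_{i=0}^{L} a_\sigma^i$. Then $m_K(-\lambda\tau)\to 1/(c+\lambda\tau)$ with $c=1$ or $\sum_i a_\sigma^i$, which gives the stated expressions after substituting into $v-\ln v$ with $v=\lambda/(c+\lambda\tau)$.

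\textbf{Step 2 ($\bm\gamma\to\infty$).} In this regime the prefactors $1/\gamma_l$ kill the penalty sum directly, provided $u_l - \ln u_l$ stays bounded. This requires $s_l z_l^{(l)}$ to stay bounded away from $0$ and $\infty$, or at least that $\gamma_l^{-1}\ln(s_l z_l^{(l)})\to 0$. I would establish this by showing $\gamma_l t_{l-1}$ grows at most polynomially in $\gamma_l$.

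For the first bracket, each Gram matrix $\XX_l^T\XX_l$ has rank at most $d_l = n/\gamma_l$. As $\gamma_l\to\infty$, its spectrum converges to $\delta_0$ (the nonzero mass is $1/\gamma_l$). The CK limiting spectrum is therefore $\delta_0$, so $m_{\rm CK}(-\lambda\tau)\to 1/(\lambda\tau)$ and $\lambda t_0 \to 1/\tau$, giving $\frac12(1/\tau - \ln(1/\tau)+\ln2\pi)$. For the NTK, by \cref{eq:ntk_approx} the spectrum converges to $\delta_{r_+}$, so $m_{\rm NTK}(-\lambda\tau)\to 1/(r_++\lambda\tau)$, which gives the claimed formula. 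These weak-convergence statements are justified through the Stieltjes transform, which is continuous in the spectral measure at the fixed point $-\lambda\tau<0$.

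\textbf{Main obstacle.} The delicate step is the interchange of limits, $n\to\infty$ first and then $\bm\gamma\to 0$ or $\infty$. Here I would work entirely at the level of the deterministic fixed-point system, rather than with finite matrices, and argue continuity of its unique solution in $\bm\gamma$ at the endpoints. For the $\bm\gamma\to\infty$ penalty terms, I expect the hardest part to be controlling $\gamma_l^{-1}\ln(s_l z_l^{(l)})$. My first attempt would bound $t_{l-1}\le 1/(\text{smallest shift})$, using positivity of the resolvent at a negative argument, which gives $s_l = O(\gamma_l)$ and hence $\gamma_l^{-1}\ln s_l\to0$.
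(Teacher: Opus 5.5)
Your proposal is correct, and in the $\bm\gamma\to\infty$ regime it takes a genuinely different route from the paper.

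For $\bm\gamma\to0$ you follow the paper. You use $s_l\to1/z_l^{(l)}$, so $\zz_{\rm prev}$ moves the top coefficient onto $z_{-1}$ and $z_{l-1}$ in proportions $1-b_\sigma^2$ and $b_\sigma^2$, and the Marchenko--Pastur law collapses to $\delta_1$. Two things are cleaner in your version:
\begin{itemize}
\item You observe that $\zz_{\rm prev}$ preserves $z_{-1}+\sum_i z_i$. This gives the atom location $r_++\sum_l q_l+1=\sum_{i=0}^L a_\sigma^i$ directly, replacing the paper's double-sum computation of $z^*_{-1,\rm NTK}$ and $z^*_{0,\rm NTK}$.
\item Your estimate $\gamma_l^{-1}(u_l-\ln u_l-1)=O(\gamma_l)$ handles correctly a term that the paper writes as $\gamma_l^{-1}\cdot 0$, which is really a $0/0$ form.
\end{itemize}
Both arguments still rest on continuity of the fixed-point solution as $\bm\gamma\to0$, which neither proves. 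So you lose nothing relative to the paper.

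For $\bm\gamma\to\infty$ the paper guesses $s_l\to\infty$, hence $\zz_0\to(z_{-1},z_0)$, and uses $m_0(z)\to-1/z$ as $\gamma_0\to\infty$. It controls $C_l$ only by saying that $\ln$ grows slower than linearly. You bypass the fixed-point system for the Stieltjes term by counting ranks at finite $n$:
\begin{itemize}
\item For the CK, $\XX_L^T\XX_L$ has at least $n-d_L$ zero eigenvalues.
\item For the NTK, the surrogate $r_+\eye+\XX_L^T\XX_L+\sum_l q_l\XX_l^T\XX_l$ has at least $n-\sum_{l=0}^L d_l$ eigenvalues equal to $r_+$. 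All its other eigenvalues are at least $r_+$, because $q_l\ge0$ and $r_+\ge0$ (from $a_\sigma\ge b_\sigma^2$ by Jensen).
\item Hence $\frac1n\Tr(\KK+\lambda\tau\eye)^{-1}$ is squeezed between an upper bound $1/(c+\lambda\tau)$, with $c\in\{0,r_+\}$, and a lower bound of the same form times $(1-\text{rank}/n)$.
\end{itemize}
This argument is elementary and gives explicit two-sided bounds. It avoids any guess-and-verify or limit-interchange issue for that term, and for the CK it needs only $\gamma_L\to\infty$. Your bound $t_{l-1}\le C/(\lambda\tau)$ gives $\ln(s_lz_l^{(l)})=\ln(1+z_l^{(l)}\gamma_lt_{l-1})=O(\ln\gamma_l)$, which makes the vanishing of the $C_l$ quantitative. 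One minor correction: the nonzero spectral mass is \emph{at most} $1/\gamma_l$, not equal to it.

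What the paper's route buys is a single fixed-point mechanism for both regimes, plus the explicit limiting $\zz_0$, which it reuses elsewhere. What yours buys is a more rigorous treatment of the under-parameterized regime. It also gives the representation $\sF_\infty^{\tau,\lambda}=\tfrac12[\lambda m_K-\ln(\lambda m_K)]+(\mathrm{penalty})+\tfrac12\ln2\pi$, which ties both limits directly to the lower-bound lemma.
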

\begin{proof}
    We first analyze the case where $\gamma_l \to 0, ~l=0,1,\dots,L$. For $\zz_l = (z_{-1},z_0,\dots,z_l)$,
    \begin{align*}
        s_l(\zz_l) = \frac{1}{z_l}+\gamma_l t_{l-1}\left(z_{-1} + \frac{1-b_\sigma^2}{s_l(\zz_l)}, z_0, \dots, z_{l-2}, z_{l-1}+\frac{b_\sigma^2}{s_l(\zz_l)}\right).
    \end{align*}    
    By \citep[Proposition 3.6]{fan2020spectra}, any solution $s_l^*$ in $\complex^+$ to \cref{eq:fixed_point_s} is the unique solution. Hence, we only need to find \textit{a} solution. We guess that as $\gamma_l \to 0$, then $s_l \to 1/z_l$. Then, 
    \begin{align*}
        \zz_{\text{prev}}(s_l(\zz_l), \zz_l) \to (z_{-1} + (1-b_\sigma^2)z_l, z_0, \dots, z_{l-2}, z_{l-1}+(b_\sigma^2)z_l).
    \end{align*}
    This is constant in $\gamma_l$, and hence the RHS of \cref{eq:fixed_point_s} converges to $1/z_l$, for $l=1,\dots,L$. Further, $s_0(z_{-1},z_0) = 1/z_0 + \gamma_0 t_0((z_{-1},z_0),(1,0)) \to 1/z_0$. \\
    
    Denote by $\zz_{l-1} = \zz_{\text{prev}}(s_{l}(\zz_{l}),\zz_{l}),~l = 0, \dots, L$. Then, if $s_l \to 1/z_l$, for $l = 0, \dots, L$, we have that 
    \begin{align*}
        \zz_L &= (z_{-1}, z_0, \dots, z_L) \\
        \zz_{L-1} &\to (z_{-1}+ (1-b_\sigma^2)z_L, z_0, \dots, z_{L-1} + b_\sigma^2 z_L)\\
        \zz_{L-2} &\to (z_{-1}+ (1-b_\sigma^2)z_L + (1-b_\sigma^2) (z_{L-1} + b_\sigma^2 z_L), z_0, \dots, z_{L-2} + b_\sigma^2 (z_{L-1} + b_\sigma^2 z_L )).
    \end{align*}
    We can generalize this pattern to $\zz_0$:
    \begin{align*}
        \zz_0 &\to \left(z_{-1} + (1-b_\sigma^2) \sum_{j=1}^{L} \sum_{i=0}^{j-1} z_{L-i} (b_\sigma^2)^{j-1-i}, \sum_{i=0}^{L} z_{i} (b_\sigma^2)^i  \right).
    \end{align*}
    For the CK, $\zz_L = (0,\dots, 0, 1)$, and hence, 
    \begin{align*}
        \zz_0^{\text{CK}} &\to ((1-b_\sigma^2)\sum_{j=0}^{L-1}(b_\sigma^2)^j, b_\sigma^{2L}) = (1 - b_\sigma^{2L}, b_\sigma^{2L}) = (z_{-1,\text{CK}}^*, z_{0, \text{CK}}^*).
    \end{align*}
    For the NTK, $\zz_L = (r_+, q_0, \dots, q_{L-1}, 1)$, and so $\zz_0^{\text{NTK}} \to (z_{-1,\text{NTK}}^*, z_{0, \text{NTK}}^*)$, where some calculations show
    \begin{align*}
        z_{0, \text{NTK}}^* &= \sum_{i=0}^{L} q_{i} (b_\sigma^2)^i = \sum_{i=0}^{L} (b_\sigma^2)^{L-i} (b_\sigma^2)^i = (L+1) b_\sigma^{2L} \\
        z_{-1,\text{NTK}}^* &= r_+ + (1-b_\sigma^2) \sum_{j=1}^{L} \sum_{i=0}^{j-1} q_{L-i} (b_\sigma^2)^{j-1-i} \\
        &= r_+ + (1-b_\sigma^2) \sum_{j=1}^{L} \sum_{i=0}^{j-1} (b_\sigma^2)^i (b_\sigma^2)^{j-1-i} \\
        &= r_+ + (1-b_\sigma^2) \sum_{j=1}^{L} \sum_{i=0}^{j-1} (b_\sigma^2)^{j-1} \\
        &= r_+ + (1-b_\sigma^2) \sum_{j=1}^{L} j (b_\sigma^2)^{j-1} \\
        &= r_+ + \sum_{j=1}^{L} j (b_\sigma^2)^{j-1} - \sum_{j=1}^{L} j (b_\sigma^2)^{j} \\
        &= r_+ + 1 + \sum_{j=1}^{L-1} (j+1) (b_\sigma^2)^{j} - \sum_{j=1}^{L-1} j (b_\sigma^2)^{j}  - L (b_\sigma^{2L}) \\
        &= r_+ + 1 + \sum_{j=1}^{L-1} (b_\sigma^2)^{j} - L (b_\sigma^{2L}) \\
        &= \sum_{j=0}^{L-1} a_\sigma^{L-j} - \sum_{j=0}^{L-1} (b_\sigma^2)^{L-j} + 1 + \sum_{j=1}^{L-1} (b_\sigma^2)^{j} - L b_\sigma^{2L} \\
        &= \sum_{j=0}^{L} a_\sigma^{j}  - (L+1) b_\sigma^{2L} \\
    \end{align*}
    The Stieltjes transform for both the CK and NTK can be written as $m_K(z) = t_L(\zz_L^{z}) = t_0(\zz_0^{\text{K}}, (1,0)) = m_0(-(z_{-1,\text{K}}^*-z)/z_{0, \text{K}}^*)/z_{0, \text{K}}^*$. Now, by the assumption on $\XX$, we have that $m_0$ follows the Marchenko-Pastur law. We have the following relationship for the Marchenko-Pastur law,
    \begin{align*}
        m_0(z) = \frac{1}{1 - \gamma_0 - z - \gamma_0 z m_0(z)}. 
    \end{align*}
    Hence, as $\gamma_0 \to 0, m_0(z) \to 1/(1 - z)$. Thus,
    \begin{align*}
        t_0(\zz_0^z, (1,0)) = \frac{1}{z_0^*} \cdot\frac{1}{1 + (z_{-1}^* - z)/z_0^*} = \frac{1}{z_{-1}^* + z_0^* - z}.
    \end{align*}
    
    We now consider the log-determinant term, $D_l(x,\zz_l)$. First consider the constant term:
    \begin{align*}
        C_l(x,\zz_l, s_l(\zz_l^{-x})) &= \frac{1}{\gamma_l} \left( \ln s_l(\zz_l^{-x}) z_l + \frac{1}{s_l(\zz_l^{-x}) z_l} - 1 \right) \\
        &\to \frac{1}{\gamma_l} \left(\ln \frac{z_l}{z_l}  + \frac{1}{\frac{z_l}{z_l}} - 1 \right) = 0, \quad l = 0,1,\dots, L.
    \end{align*}
    We are then left with $D_{-1}(x,\zz_0)$, i.e.
    \begin{align*}
        \ln \left( x + 1/s_0(\zz_0^{-x}) + z_{-1}^*\right) = \ln (z_{-1}^* + z_0^* - x).
    \end{align*}
    We can then substitute in the respective values for CK and NTK to get the limiting energy in the case where $\gamma_l \to 0$. \\

We now analyze the case when $\gamma_l \to \infty, ~ l= 0,\dots,L$. For $\gamma_l \to \infty, ~ l=0,1,\dots,L$, we guess that $s_l(\zz_l) \to \infty.$ Under this, 
\begin{align*}
    \zz_{\text{prev}}(s_l(\zz_l),\zz_l) \to (z_{-1},z_0,\dots,z_{l-1}),
\end{align*}
i.e. a constant in $\gamma_l$. By definition \citep{fan2020spectra}, $t_l(\zz_l) \in \real^+$ for $\zz_l \in \real^{l+2}_+$ (we note that for both the CK and NTK, and if $s_l(\zz_l) \to \infty$, then $\zz_l \in \real^{l+2}_+)$. Thus, as $\gamma_l \to +\infty$, $\gamma_l t_l(\zz_{\text{prev}}(s_l(\zz_l),\zz_l)) \to +\infty$. Hence, if $s_l(\zz_l) \to \infty$, then the LHS of \cref{eq:fixed_point_s} converges to the RHS of \cref{eq:fixed_point_s}. \\

Now, for $s_l(\zz_l) \to \infty$, we have for $\zz_L = (z_{-1},z_0, \dots, z_L)$ that 
\begin{align*}
    \zz_0 \to (z_{-1}, z_0).
\end{align*}

To characterise the Stieltjes transform, we need to analyze the MP-law for $\gamma_0 \to \infty$:
\begin{align*}
    m_0(z) &= \frac{1 - \gamma_0 - z - \sqrt{(z - \gamma_0 - 1)^2 - 4\gamma_0}}{2z\gamma_0} \\
    &= \frac{1}{2z} \left( \frac{1}{\gamma_0} - 1 - \frac{z}{\gamma_0} - \sqrt{\frac{1}{\gamma_0^2} (z^2 - 2z(\gamma_0 + 1) + \gamma_0^2 + 2\gamma_0 + 1 - 4\gamma_0)} \right) \\
    &\to -\frac{1}{z}.
\end{align*}
Hence, $m_K(z) = t_L(\zz_L^{z}) = t_0(\zz_0, (1,0)) = m_0(-(z_{-1}-z)/z_0)/z_0 = 1/(z_{-1} - z)$. \\

For the log-determinant term, the constant term is given by
\begin{align*}
    C_l(x,\zz_l, s_l(\zz_l^{-x})) &= \frac{1}{\gamma_l} \left( \ln s_l(\zz_l^{-x}) z_l - \frac{1}{s_l(\zz_l^{-x}) z_l} - 1 \right).
\end{align*}
We note that $\ln x$ will grow slower than $x$ for $x > 0$, and so as both $\gamma_l$ and $s_l(\zz_l^{-x})$ grow, $C_l \to 0$. Hence, 
\begin{align*}
    D_L(x,\zz_L) \to \ln(x + z_{-1}).
\end{align*}
Again, substituting the respective values for the CK and NTK will give the BFE as $\gamma_l \to 0, ~ l=0,1,\dots, L$. 
\end{proof}

\begin{remark}[Equivalence of Linear and Conjugate Kernels]
  Note that the limiting expressions for the CK in \cref{theorem:limiting_cases} are not functions of $L$. Hence, we can make the following conclusion: the BFE for the linear and conjugate kernel is asymptotically equivalent as $\gamma_l \to 0$ or $\gamma_l \to \infty$. However, we see that the BFE for the NTK is layer-dependent. 
\end{remark}

Due to the closed-form expressions in \cref{theorem:limiting_cases}, we can characterize the behavior of these expressions in terms of $\lambda$ and $\tau$:
\begin{lemma}\label{lemma:optimal_lambda}
    When $\tau < 1$, there exists $\lambda^*_0$ that minimizes $\sF_{\infty,0}^{\tau,\lambda}$ for the CK and NTK, and $\lambda^*_{\infty}$ that minimizes $\sF_{\infty,\infty}^{\tau,\lambda}$ for the NTK. Specifically, for the CK, 
    \begin{align*}
        \lambda^*_0 = \frac{1}{1 - \tau},
    \end{align*}
    and for the NTK,
    \begin{align*}
        \lambda^*_0 = \frac{\sum_{i=0}^L a_\sigma^i}{1 - \tau}, \quad \lambda^*_\infty = \frac{r_+}{1 - \tau}.
    \end{align*}
    When $\tau \geq 1$, there is no optimal $\lambda^*$. In this case, $\sF_{\infty,0}^{\tau,\lambda}$ is decreasing in $\lambda$ for the CK and NTK, as well as $\sF_{\infty,\infty}^{\tau,\lambda}$ for the NTK. Furthermore, when $\tau > 1$, then  $\sF_{\infty,0}^{\tau,\lambda} > \sF_{\infty,\infty}^{\tau,\lambda}$ for the CK. 
\end{lemma}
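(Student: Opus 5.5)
The plan is to work directly with the closed forms in \cref{theorem:limiting_cases}, all of which share one structure. Write $g(u) := u - \ln u$ for $u > 0$. Then each limiting BFE equals $\tfrac12\bigl(g(u(\lambda)) + \ln 2\pi\bigr)$, where the argument $u(\lambda)$ is as follows:
\begin{itemize}[nosep]
    \item CK, $\bm\gamma\to 0$: $u(\lambda) = \lambda/(1+\lambda\tau)$;
    \item NTK, $\bm\gamma\to 0$: $u(\lambda) = \lambda/(A+\lambda\tau)$ with $A = \sum_{i=0}^L a_\sigma^i > 0$;
    \item NTK, $\bm\gamma\to\infty$: $u(\lambda) = \lambda/(r_+ + \lambda\tau)$;
    \item CK, $\bm\gamma\to\infty$: $u = 1/\tau$, which does not depend on $\lambda$.
\end{itemize}
The function $g$ is strictly convex on $(0,\infty)$. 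It is strictly decreasing on $(0,1)$, strictly increasing on $(1,\infty)$, and has its unique minimum $g(1)=1$. So minimizing over $\lambda$ amounts to asking whether the map $\lambda\mapsto u(\lambda)$ can reach the value $1$.

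For a generic constant $c>0$, set $u_c(\lambda) = \lambda/(c+\lambda\tau)$. Its derivative is $u_c'(\lambda) = c/(c+\lambda\tau)^2 > 0$, so $u_c$ is a strictly increasing bijection from $(0,\infty)$ onto $(0,1/\tau)$.

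The case $\tau<1$ comes first. Here $1/\tau > 1$, so the equation $u_c(\lambda)=1$ has the unique solution $\lambda = c+\lambda\tau$, that is, $\lambda^* = c/(1-\tau)$. By the chain rule, $\frac{d}{d\lambda} g(u_c(\lambda)) = (1-1/u_c)\,u_c'$. Since $u_c'>0$, this is negative for $\lambda<\lambda^*$ and positive for $\lambda>\lambda^*$, so $\lambda^*$ is the global minimizer. Substituting $c=1$, $c=A$ and $c=r_+$ gives the three stated values $\lambda^*_0$ (CK), $\lambda^*_0$ (NTK) and $\lambda^*_\infty$ (NTK).

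One point needs checking in the $\bm\gamma\to\infty$ NTK case: we need $r_+>0$. Expanding, $r_+ = \sum_{l=0}^{L-1}\bigl(a_\sigma^{L-l} - b_\sigma^{2(L-l)}\bigr)$. Jensen's inequality gives $a_\sigma = \bbE[\sigma'(\xi)^2] \ge (\bbE[\sigma'(\xi)])^2 = b_\sigma^2$, so $r_+\ge 0$. The inequality is strict unless $\sigma'$ is a.s. constant, that is, unless $\sigma$ is linear. I would note this nondegeneracy, taking $\sigma$ nonlinear, as the implicit assumption behind the statement. I expect this to be the only delicate point.

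For $\tau\ge 1$ we have $1/\tau\le 1$, so $u_c(\lambda) < 1/\tau \le 1$ for every $\lambda$. Hence $g(u_c(\lambda))$ has derivative $(1-1/u_c)\,u_c' < 0$, so the BFE is strictly decreasing in $\lambda$. It approaches its infimum $g(1/\tau)$ only as $\lambda\to\infty$, so no minimizer $\lambda^*$ exists. This covers $\sF_{\infty,0}^{\tau,\lambda}$ for both kernels and $\sF_{\infty,\infty}^{\tau,\lambda}$ for the NTK.

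The final claim is that, for the CK with $\tau>1$, $\sF_{\infty,0}^{\tau,\lambda} > \sF_{\infty,\infty}^{\tau,\lambda}$. This reduces to showing $g\bigl(\lambda/(1+\lambda\tau)\bigr) > g(1/\tau)$. Both arguments lie in $(0,1)$, where $g$ is strictly decreasing. Moreover $\lambda/(1+\lambda\tau) < 1/\tau$ holds for every finite $\lambda$, because it is equivalent to $\lambda\tau < 1+\lambda\tau$. Therefore $g\bigl(\lambda/(1+\lambda\tau)\bigr) > g(1/\tau)$, which gives the strict inequality.
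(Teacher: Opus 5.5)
Your proposal is correct and follows the paper's argument: the paper's proof is just the observation that $x - \ln x$ is uniquely minimized at $x=1$, and you supply the omitted details, namely that $\lambda \mapsto \lambda/(c+\lambda\tau)$ is a strictly increasing bijection onto $(0,1/\tau)$ and the resulting sign of the derivative. Your remark that the NTK claims at $\bm\gamma\to\infty$ need $r_+>0$, i.e.\ $\sigma$ not linear, is a genuine nondegeneracy condition that the paper leaves implicit.
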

\begin{proof}
    The proof of this lemma is immediate upon noting that $x - \ln x$ is minimized at $x=1$.
\end{proof}

\subsection{Proof of \cref{thm:limiting_free_energy}}\label{sec:appendix:main_proof}
For the limiting expression for the BFE, under \cref{as:main}, we take the result of \cref{theorem:log_determinant}, where the explicit form for the BFE is given in \cref{eq:bfe_general}. The lower bound on the BFE is given in \cref{lemma:bound}. For the convergence of the BFE to its lower bound, we note that by \cref{theorem:limiting_cases}, the limiting form in \cref{theorem:log_determinant} converges to explicit expressions as $\bm\gamma \to 0$ and $\bm\gamma \to \infty$. Further, by \cref{lemma:optimal_lambda}, these explicit expressions are optimized, at which point they attain the value in \cref{lemma:bound}, for the optimal regularizers given in \cref{lemma:optimal_lambda}. Further, we can see from \cref{theorem:limiting_cases}, that the BFE for the CK diverges for $\bm\gamma \to \infty, \tau \to 0$.

\subsection{Strong Descent \& Feature-Attainment}\label{sec:appendix:strong_descent}
We now prove \cref{theorem:strong_descent}:
\begin{proof}[Proof of \cref{theorem:strong_descent}]
    Note that by \cref{lemma:optimal_lambda} and \cref{theorem:limiting_cases}, when $\tau < 1$, $\sF_{\infty,0}^{\tau,\lambda^*_0}$ attains the minimum free energy from \cref{lemma:bound}, for both the CK and the NTK. Hence, for strong-descent, we simply need to choose $\lambda(\bm\gamma )$ such that $\sF_{\infty,\infty}^{\tau,\lambda(\bm\gamma )} > \frac{1+\log(2\pi)}{2}$. Then, by \cref{theorem:limiting_cases}, $\sF_{\infty}^{\tau,\lambda(\bm\gamma )}$ will approach this limiting bound as $\bm\gamma \to 0$, and \textit{strong-descent} will have occurred. For $\gamma \geq 1$, neither $\sF_{\infty,0}^{\tau,\lambda^*}$ nor $\sF_{\infty,\infty}^{\tau,\lambda^*}$ can attain the bound in \cref{lemma:bound}, by \cref{lemma:optimal_lambda}.
\end{proof}

% We also prove \cref{theorem:feature-attainment}:
% \begin{proof}[Proof of \cref{theorem:strong_descent}]
%     We simply repeat the proof of \cref{theorem:strong_descent}, yet we note that the CK does not allow $\sF_{\infty,\infty}^{\tau,\lambda}$ to reach the bound in \cref{lemma:bound}.
% \end{proof}

\subsection{Weak Descent}\label{sec:appendix:weak_descent}
In this section we will present and give the proof of \cref{lemma:weak_descent}:
\begin{lemma}[Weak Descent]\label{lemma:weak_descent}
If $\tau < 1$, then weak descent will occur for the CK if
\begin{align*}
    \lambda(\tau)_{\text{CK}} > -\frac{W_0\left(-\frac{1}{\tau}e^{-\frac{1}{\tau}}\right)}{1+\tau W_0\left(-\frac{1}{\tau}e^{-\frac{1}{\tau}}\right)},
\end{align*}
where $W_0(.)$ is the principal branch of Lambert's W function. 
For weak descent to occur for the NTK,
\begin{align*}
    \lambda(\tau)_{\text{NTK}} &> \frac{r_+ \sum_{i=1}^L a_\sigma^i}{r_+ - \sum_{i=1}^L a_\sigma^i} \ln \left(\frac{r_+}{\sum_{i=1}^L a_\sigma^i}\right) \\
    \lambda(\tau)_{\text{NTK}} &> \lambda(\tau)_{\text{CK}}.
\end{align*}
If $\tau \geq 1$, then weak descent will not occur for either the CK or the NTK.
\end{lemma}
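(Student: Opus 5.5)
The plan is to reduce weak descent entirely to the closed-form limits in \cref{theorem:limiting_cases}. Write $g(u) := u - \ln u$, so that each limiting BFE has the form $\tfrac12\big(g(u) + \ln 2\pi\big)$. For the CK the two arguments are $u_0 = \lambda/(1+\lambda\tau)$ and $u_\infty = 1/\tau$. For the NTK they are $u_0 = \lambda/(A+\lambda\tau)$ with $A=\sum_{i=0}^L a_\sigma^i$, and $u_\infty = \lambda/(r_++\lambda\tau)$. Weak descent is then exactly the scalar inequality $g(u_0) < g(u_\infty)$, and the proof is a case analysis using only that $g$ is strictly decreasing on $(0,1)$, strictly increasing on $(1,\infty)$, and minimized at $u=1$.

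\textbf{Step 1 (CK, $\tau<1$).} Here $u_\infty = 1/\tau > 1$, while $u_0 = \lambda/(1+\lambda\tau)$ increases in $\lambda$ from $0$ toward $1/\tau$. The level set $\{g = g(1/\tau)\}$ has two points: $1/\tau$ itself and a second root $u_- \in (0,1)$. The second root solves $u e^{-u} = \tfrac1\tau e^{-1/\tau}$, so $u_- = -W_0\!\left(-\tfrac1\tau e^{-1/\tau}\right)$; the principal branch is the correct one because $u_-<1$. Since $u_0 < 1/\tau$ always holds, weak descent is equivalent to $u_0 > u_-$. Inverting $u = \lambda/(1+\lambda\tau)$ as $\lambda = u/(1-\tau u)$, which is monotone in $u$, gives exactly the stated threshold $-W_0/(1+\tau W_0)$.

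\textbf{Step 2 (NTK, $\tau<1$).} Both arguments now depend on $\lambda$, and $u_0$ versus $u_\infty$ are ordered by comparing $A$ with $r_+$. I would write the condition $g(u_0) < g(u_\infty)$ as
$$u_0 - u_\infty < \ln(u_0/u_\infty) = \ln\frac{r_++\lambda\tau}{A+\lambda\tau}$$
and bound it to get a sufficient inequality of the displayed form $\lambda > \frac{r_+ S}{r_+-S}\ln(r_+/S)$. The constants should be matched carefully here: the statement uses $S = \sum_{i=1}^L a_\sigma^i$, and one must check whether $\tau\to0$ reduces $A$ to $S$ in the relevant differences. The statement also asserts $\lambda_{\text{NTK}} > \lambda_{\text{CK}}$; I would derive this by showing that whenever the NTK inequality holds, the corresponding CK inequality holds too. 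Substituting shows the NTK admissible set sits inside the CK set, which is also what \cref{lemma:weak_descent_volume} needs.

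\textbf{Step 3 ($\tau\ge1$).} I would use the last part of \cref{lemma:optimal_lambda}. For the CK, $\tau>1$ gives $\sF_{\infty,0}>\sF_{\infty,\infty}$ directly: $u_0 < 1/\tau \le 1$ and $g$ is decreasing on $(0,1]$. At $\tau=1$ both sides need checking, and the strict inequality still rules out descent. For the NTK with $\tau\ge1$, both $u_0$ and $u_\infty$ are below $1/\tau \le 1$, so $g$ is decreasing there and descent requires $u_0>u_\infty$, i.e.\ $A<r_+$. I would argue this is impossible from the definitions, since $r_+ = A - (L+1)b_\sigma^{2L} \le A$ by the computation in \cref{theorem:limiting_cases}.

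The main obstacle is Step 2. The exact NTK condition is transcendental in $\lambda$ and $\tau$, so the clean threshold stated must come from a specific bound, or possibly from the $\tau\to0$ regime. Reconciling the constants ($A$ versus $\sum_{i=1}^L$, and $r_+$ versus $A$) is where I expect to have to adjust the argument.
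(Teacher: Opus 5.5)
Your Steps 1 and 3 are essentially the paper's argument. In Step 1, the level set of $g$ through $1/\tau$ is taken on the principal branch because the relevant root lies below $1$. In Step 3, for $\tau\ge 1$ both arguments lie in $(0,1]$, where $g$ decreases. One correction in Step 3: $A-(L+1)b_\sigma^{2L}$ is $z^*_{-1,\mathrm{NTK}}$ from \cref{theorem:limiting_cases}, not $r_+$. What you actually need follows directly from \cref{eq:constants}, namely $A-r_+=1+\sum_{j=1}^L b_\sigma^{2j}>0$.

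The genuine gap is Step 2: it contains no argument, and its target points the wrong way. The NTK display is a \emph{necessary} condition (``for weak descent to occur \dots''), and it cannot be made sufficient. Your own rewriting, evaluated at $\tau=0$, gives exactly $\lambda>\lambda^*(0):=\frac{Ar_+}{A-r_+}\ln(A/r_+)$ with $A=\sum_{i=0}^L a_\sigma^i$; this is what the paper's proof uses. For $\tau>0$, however, the true threshold $\lambda^*(\tau)$ is strictly larger, so every $\lambda\in(\lambda^*(0),\lambda^*(\tau)]$ fails to give weak descent. The missing idea is the step that turns the $\tau=0$ threshold into a bound valid for all $\tau\in(0,1)$. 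The paper does this by comparative statics. At the boundary $g(h_A)=g(h_{r_+})$, with $h_C=\lambda/(C+\lambda\tau)$, one must have $h_A<1<h_{r_+}$. Raising $\tau$ lowers both arguments, which raises $g(h_A)$ and lowers $g(h_{r_+})$, so $\lambda$ must increase to restore equality. Hence $\lambda^*(\tau)$ is increasing and $\lambda>\lambda^*(0)$ is necessary. A direct route within your rewriting also works. Put $a=A+\lambda\tau$ and $b=r_++\lambda\tau$; weak descent is then equivalent to $\lambda>\psi(a,b):=\frac{ab}{a-b}\ln(a/b)$. With $a-b=A-r_+$ held fixed, $\partial_b\psi=\frac{2+t}{t}\ln(1+t)-1>0$ for $t=(a-b)/b$, so $\lambda>\psi(A+\lambda\tau,r_++\lambda\tau)\ge\psi(A,r_+)$. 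On the constants: $t\mapsto t\ln t/(t-1)$ is increasing and $r_+<\sum_{i=1}^L a_\sigma^i<A$, so the statement's version with $\sum_{i=1}^L$ has a smaller threshold. It is therefore a weaker necessary condition, implied by the version with $A$.

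For $\lambda(\tau)_{\mathrm{NTK}}>\lambda(\tau)_{\mathrm{CK}}$, your inclusion strategy is right, but ``substituting shows'' is not a proof. Three facts are needed:
$A>1$ gives $u_0^{\mathrm{NTK}}<u_0^{\mathrm{CK}}$;
$r_+\ge 0$ gives $u_\infty^{\mathrm{NTK}}\le 1/\tau$;
NTK descent forces $u_\infty^{\mathrm{NTK}}>1$.
If $u_0^{\mathrm{CK}}\ge1$, then $g(u_0^{\mathrm{CK}})<g(1/\tau)$ by monotonicity on $[1,\infty)$. Otherwise $g(u_0^{\mathrm{CK}})<g(u_0^{\mathrm{NTK}})<g(u_\infty^{\mathrm{NTK}})\le g(1/\tau)$. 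The paper instead evaluates the NTK at $\lambda=\lambda(\tau)_{\mathrm{CK}}$ and shows $\sF_{\infty,0}>\sF_{\infty,\infty}$ there.
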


\begin{proof}[Proof of \cref{lemma:weak_descent}]
For weak descent to occur, we require $\sF_{\infty,0}^{\tau,\lambda} < \sF_{\infty,\infty}^{\tau,\lambda}$. We note that when $\tau \geq 1$, $\lambda / (C + \lambda \tau) < 1$ for $C > 0, \lambda > 0$. This corresponds to $0 < x < 1$ for the function $g(x) = x - \ln x$. In this domain, $g(x)$ is decreasing. As $\sF_{\infty,0}^{\tau,\lambda} = g(\lambda / (C_0 + \lambda \tau))$, and $\sF_{\infty,\infty}^{\tau,\lambda} = g(\lambda / (C_\infty + \lambda \tau))$, for $C_0 > C_\infty$ (where specific values are dependent on kernel function, yet ordering holds), and as $\lambda / (C_0 + \lambda \tau) < \lambda / (C_\infty + \lambda \tau)$, it is clear that when $\tau \geq 1$, $\sF_{\infty,0}^{\tau,\lambda} > \sF_{\infty,\infty}^{\tau,\lambda}$. \\

For the weak-descent bound on $\lambda$ for the CK, we are trying to find $\lambda$ such that $\sF_{\infty,0}^{\tau,\lambda} < \sF_{\infty,\infty}^{\tau,\lambda}$, i.e. 
\begin{align*}
    \frac{\lambda}{1 + \lambda \tau} - \ln \left( \frac{\lambda}{1 + \lambda \tau} \right) &< \frac{1}{\tau} - \ln \left( \frac{1}{\tau} \right) \\
    e^{\frac{\lambda}{1 + \lambda \tau}}e^{-\frac{1}{\tau}} &< \frac{\tau \lambda}{1 + \lambda \tau} \\
    -\frac{\tau \lambda}{1 + \lambda \tau} e^{-\frac{\lambda}{1 + \lambda \tau}} &< -\frac{1}{\tau}e^{-\frac{1}{\tau}}.
\end{align*}
As $z \mapsto ze^z$ is an increasing function for $z > -1$, than if $ze^z > C$, $z > W(C)$, where $W$ is the Lambert W function. Hence,
\begin{align*}
    -\frac{\tau \lambda}{1 + \lambda \tau} = W(-\frac{1}{\tau}e^{-\frac{1}{\tau}}). 
\end{align*}
This can be rearranged to give the condition:
\begin{align*}
    \lambda(\tau) > -\frac{W(-\frac{1}{\tau}e^{-\frac{1}{\tau}})}{1 + \tau W(-\frac{1}{\tau}e^{-\frac{1}{\tau}})}
\end{align*}
where $\lambda(\tau)$ is a value, given $\tau$, such that if $\lambda > \lambda(\tau)$, weak descent occurs.
We note that for $ x \leq -1$, $W_{-1}(xe^x) = x$. As $\tau < 1$, this would result in dividing by zero in the above expression. Hence, $W(-\frac{1}{\tau}e^{-\frac{1}{\tau}}) = W_0(-\frac{1}{\tau}e^{-\frac{1}{\tau}})$, i.e. the principal branch. For $\tau < 1$, this is well-defined. \\

We now give the bound for the NTK. Firstly, as $\tau \to 0$, $\lambda / (a + \lambda \tau) \to \lambda / a$. Hence, for $\sF_{\infty,0}^{\tau,\lambda} < \sF_{\infty,\infty}^{\tau,\lambda}$,
\begin{align*}
    \frac{\lambda}{\sum_{i=0}^L a_i} - \ln \left( \frac{\lambda}{\sum_{i=0}^L a_i} \right) < \frac{\lambda}{r_+} - \ln \left( \frac{\lambda}{r_+} \right) \\
    \lambda < \frac{r_+\sum_{i=0}^L a_i}{r_+ - \sum_{i=0}^L a_i} \ln \left( \frac{r_+}{\sum_{i=0}^L a_i} \right).
\end{align*}
For this to be a bound for all $\tau \in (0,1)$, we require that the bound $\lambda(\tau)$ such that $\sF_{\infty,0}^{\tau,\lambda(\tau)} < \sF_{\infty,\infty}^{\tau,\lambda(\tau)}$ be increasing in $\tau$; then, the bound $\lambda(0)$ is a conservative bound for all $\tau \in (0,1)$. We now show that $\lambda(\tau)$ is increasing. \\

Note that $\sF_{\infty,0}^{\tau,\lambda(\tau)} < \sF_{\infty,\infty}^{\tau,\lambda(\tau)}$ is equivalent to $g(h_{\sum_{i=0}^L a_i}(\lambda, \tau)) < g(h_{r_+}(\lambda, \tau))$, where $g$ is defined as above, and $h_C(\lambda, \tau) = \lambda / (C + \lambda \tau)$. Also note that if $g(h_{\sum_{i=0}^L a_i}(\lambda, \tau)) = g(h_{r_+}(\lambda, \tau))$, with $h_{\sum_{i=0}^L a_i}(\lambda, \tau) \neq h_{r_+}(\lambda, \tau)$, then $h_{\sum_{i=0}^L a_i}(\lambda, \tau) \leq 1 \leq h_{r_+}(\lambda, \tau)$. In other words, if we denote $x_1 = h_{\sum_{i=0}^L a_i}(\lambda, \tau)$, and $x_2 = h_{r_+}(\lambda, \tau)$, then the line that intersects $g(x_1)$ and $g(x_2)$ has zero-gradient (as $g(x_1) = g(x_2)$), and $x_1$ and $x_2$ must be on either side (left and right respectively) of the minimum of $g(x)$ at $x = 1$, by the convexity of $g$. \\

Further, we also recognize that $h_C(\lambda, \tau)$ is an increasing function of $\lambda$, and a decreasing function of $\tau$, by the fact that
\begin{align*}
    \frac{\partial}{\partial \lambda} h_C(\lambda, \tau) = \frac{C}{(C + \lambda \tau)^2} > 0, \quad \frac{\partial}{\partial \tau} h_C(\lambda, \tau) = - \frac{\lambda^2}{(C + \lambda \tau)^2} < 0.
\end{align*}
We also have that $g$ is a decreasing function of $x$ for $0 < x < 1$, and an increasing function of $x$ for $x > 1$. \\

Say we have found some $\lambda_0, \tau_0$ such that $g(h_{\sum_{i=0}^L a_i}(\lambda_0, \tau_0)) = g(h_{r_+}(\lambda_0, \tau_0))$. Then increasing $\tau$, i.e. letting $\tau = \tau_0 + \epsilon$, $\epsilon > 0$, will decrease $h_{\sum_{i=0}^L a_i}(\lambda_0, \tau)$ and $h_{r_+}(\lambda_0, \tau)$. This will lead to a decrease in $g(h_{r_+}(\lambda_0, \tau))$ (as it is to the right of $x=1$) and an increase in $g(h_{\sum_{i=0}^L a_i}(\lambda_0, \tau))$ (as it is to the left of $x=1$). To equate these again (remember we are trying to find the boundary of values such that $\sF_{\infty,0}^{\tau,\lambda(\tau)} = \sF_{\infty,\infty}^{\tau,\lambda(\tau)}$), we need to increase $\lambda$ to $\lambda = \lambda_0 + \delta, ~ \delta > 0$, such that $g(h_{\sum_{i=0}^L a_i}(\lambda, \tau))$ will decrease, and $g(h_{r_+}(\lambda, \tau))$ will increase, for some $\delta$ such that these functions will equal again at $\lambda = \lambda_0 + \delta$, $\tau = \tau_0 + \epsilon$. As increasing $\lambda$ past this point will send $g(h_{\sum_{i=0}^L a_i}(\lambda, \tau))$ to a lower value than $g(h_{r_+}(\lambda, \tau))$, we see that the function $\lambda(\tau)$, which gives the values of $\lambda$, for a given $\tau$, such that $\sF_{\infty,0}^{\tau,\lambda(\tau)} < \sF_{\infty,\infty}^{\tau,\lambda(\tau)}$, is an increasing function of $\tau$. \\

For the comparison bound with the CK, we note that at $\lambda(\tau)_{\text{CK}}$,
\begin{align*}
    \frac{\lambda(\tau)_{\text{CK}}}{1 + \tau \lambda(\tau)_{\text{CK}}} - \ln\left(\frac{\lambda(\tau)_{\text{CK}}}{1 + \tau \lambda(\tau)_{\text{CK}}}\right) = \frac{1}{\tau} - \ln\left(\frac{1}{\tau}\right).
\end{align*}
As $1/\tau > 1$ for $\tau < 1$, this means that $\lambda(\tau)_{\text{CK}} / (1 + \tau \lambda(\tau)_{\text{CK}}) < 1$. We note that $\sum_{i=0}^L a_i > 1$ and $r_+ > 0$, by the definition of the constants, and the assumption that $b_\sigma \neq 0$. Hence, 
\begin{align*}
    \frac{\lambda(\tau)_{\text{CK}}}{\sum_{i=0}^L a_i + \tau \lambda(\tau)_{\text{CK}}} < \frac{\lambda(\tau)_{\text{CK}}}{1 + \tau \lambda(\tau)_{\text{CK}}} \quad \to \quad 
    h_{\sum_{i=0}^L a_i}(\lambda(\tau)_{\text{CK}}, \tau) > h_{1}(\lambda(\tau)_{\text{CK}}, \tau) \\
    \frac{\lambda(\tau)_{\text{CK}}}{r_+ + \tau \lambda(\tau)_{\text{CK}}} < \frac{1}{\tau} \quad \to 
    \begin{cases}
        h_{ r_+}(\lambda(\tau)_{\text{CK}}, \tau) \leq h_{1}(\lambda(\tau)_{\text{CK}}, \tau) & \text{if}~r_+ \leq 1 \\
        h_{\sum_{i=0}^L a_i}(\lambda(\tau)_{\text{CK}}, \tau) > h_{ r_+}(\lambda(\tau)_{\text{CK}}, \tau) > h_{1}(\lambda(\tau)_{\text{CK}}, \tau) & \text{if}~r_+ > 1 \\
    \end{cases} 
\end{align*}
Hence, we have that at $\lambda(\tau)_{\text{CK}}$, $\sF_{\infty,0,\text{NTK}}^{\tau,\lambda(\tau)_{\text{CK}}} > \sF_{\infty,\infty,\text{NTK}}^{\tau,\lambda(\tau)_{\text{CK}}}$. As per the previous arguments, $\lambda(\tau)_{\text{NTK}}$ must increase in order for $\sF_{\infty,0,\text{NTK}}^{\tau,\lambda(\tau)_{\text{NTK}}} < \sF_{\infty,\infty,\text{NTK}}^{\tau,\lambda(\tau)_{\text{NTK}}}$.
\end{proof}

This gives us a convenient method for comparing the CK and the NTK, i.e. comparing the area of parameters $\lambda(\tau)$ such that weak descent occurs. We find a bound on the difference in parameter volume,
\begin{align*}
    \Delta A &> \int_0^1 (\lambda(\tau)_{\text{NTK}} - \lambda(\tau)_{\text{CK}}) d\tau \\
    &= \frac{r_+ \sum_{i=1}^L a_\sigma^i}{r_+ - \sum_{i=1}^L a_\sigma^i} \ln \left(\frac{r_+}{\sum_{i=1}^L a_\sigma^i}\right) (\tau_0 - 1) - \int_0^{\tau_0} \lambda(\tau)_{\text{CK}} d\tau,
\end{align*}
where $\tau_0$ is the value such that $(r_+ \sum_{i=1}^L a_\sigma^i) \ln \left(\frac{r_+}{\sum_{i=1}^L a_\sigma^i}\right)/(r_+ - \sum_{i=1}^L a_\sigma^i) = \lambda(\tau_0)_{\text{CK}}$. We compute both $\tau_0$ and the above integral numerically, and present the results in \cref{table:AC} for different neural architectures. Note that $\Delta A > 0$ by \cref{lemma:weak_descent}.

\newcolumntype{R}{>{\centering\arraybackslash}X}
\begin{table}[h]
\centering
\noindent\begin{tabularx}{0.95\columnwidth}{ *{3}{R} }
  \toprule
  Architecture & $3$-Layer Tanh & $5$-Layer GeLU \\
  \midrule
  $\Delta A$: & 2.11463 & 11.0042 \\
  \bottomrule                             
\end{tabularx}
\label{table:AC}
\caption{Difference in parameter volumes for which weak descent \textit{doesn't occur}, for the CK versus NTK, for two neural architectures. A larger positive value corresponds to a less robust NTK kernel, in comparison to the CK.}
\end{table}

\section{Connection Between BFE \& UQ}\label{sec:appendix:bfe_uq}
Epistemic uncertainty quantification for Bayesian models arises from posterior variance; for a test input, there may be many functions with high posterior weight, yet they all give different outputs for this input. We note that marginal likelihood is the likelihood of seeing the training data under our prior distribution (i.e. the given kernel function), i.e.
\begin{align*}
    \sZ_n^{\tau, \lambda} = \int_f p(\YY | f,\XX)~p(f)~df.
\end{align*}
Therefore, if we assume we have a kernel function with large $\sZ_n^{\tau, \lambda}$, then this means that at the time of posterior inference, functions with (relatively) high posterior weight will give a better fit to the training data, compared to posterior functions for a kernel function with small $\sZ_n^{\tau, \lambda}$. Hence, the posterior functions for a kernel function with large $\sZ_n^{\tau, \lambda}$ will more faithfully represent the possible functions one may expect to see given the training data. Thus, the variance in the outputs of these posterior function for a new test point will be a better estimate of epistemic uncertainty. The connection to BFE is then immediate. Note that this connection is not novel; see \citep{immer2021scalable, daxberger2021laplace}.

\section{Free Energy After Training}
We note that the BFE is the sum of two components: the data-fit term $(Y-m(X))^T (K_X + \tau \lambda)^{-1} (Y-m(X))$ and the log-determinant $\log \det (K_X + \tau \lambda)$. We plot the log-determinant, the data-fit, and the data-fit under $m(X) = 0$, in \cref{fig:bfe_trained:ld}, \cref{fig:bfe_trained:datafit}, and \cref{fig:bfe_trained:datafit_zero} respectively. We see that interestingly the log-determinant term does not change during training, and instead seems to be determined at initialization. In comparison, the data-fit goes to zero during training. Under a zero-mean prior, the data-fit still decreases. \\ 

\begin{figure}
    \centering
    \includegraphics[width=1\linewidth]{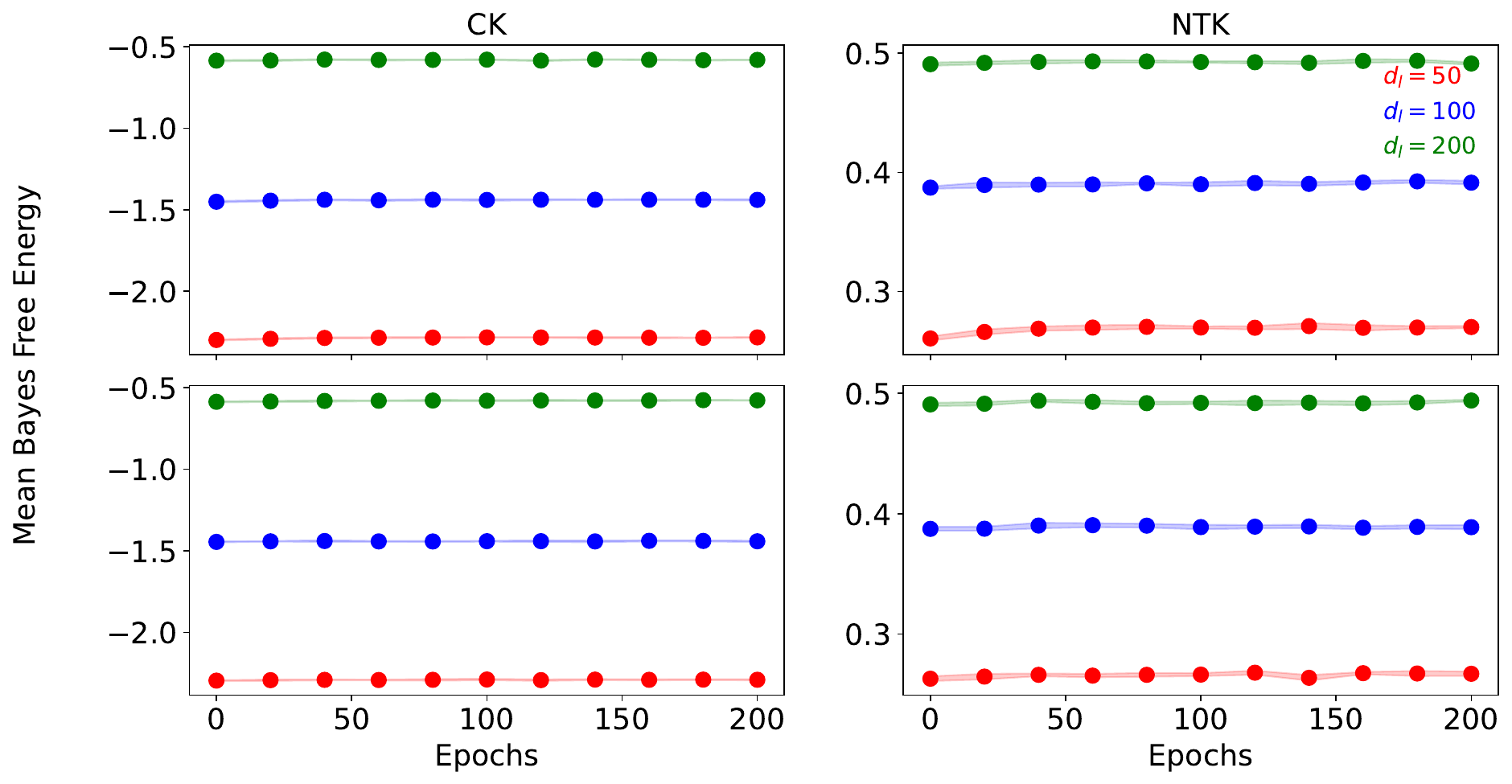}
    \caption{Log-determinant of $K_X + \tau \lambda$ as a function of epochs of training, for (top) Gaussian data with $\lambda = \lambda^*$ and (bottom) a teacher network $y = \sin(w^Tx)$ with $\lambda = 1$, and $\tau = 10^{-3}$.}
    \label{fig:bfe_trained:ld}
\end{figure}

\begin{figure}
    \centering
    \includegraphics[width=1\linewidth]{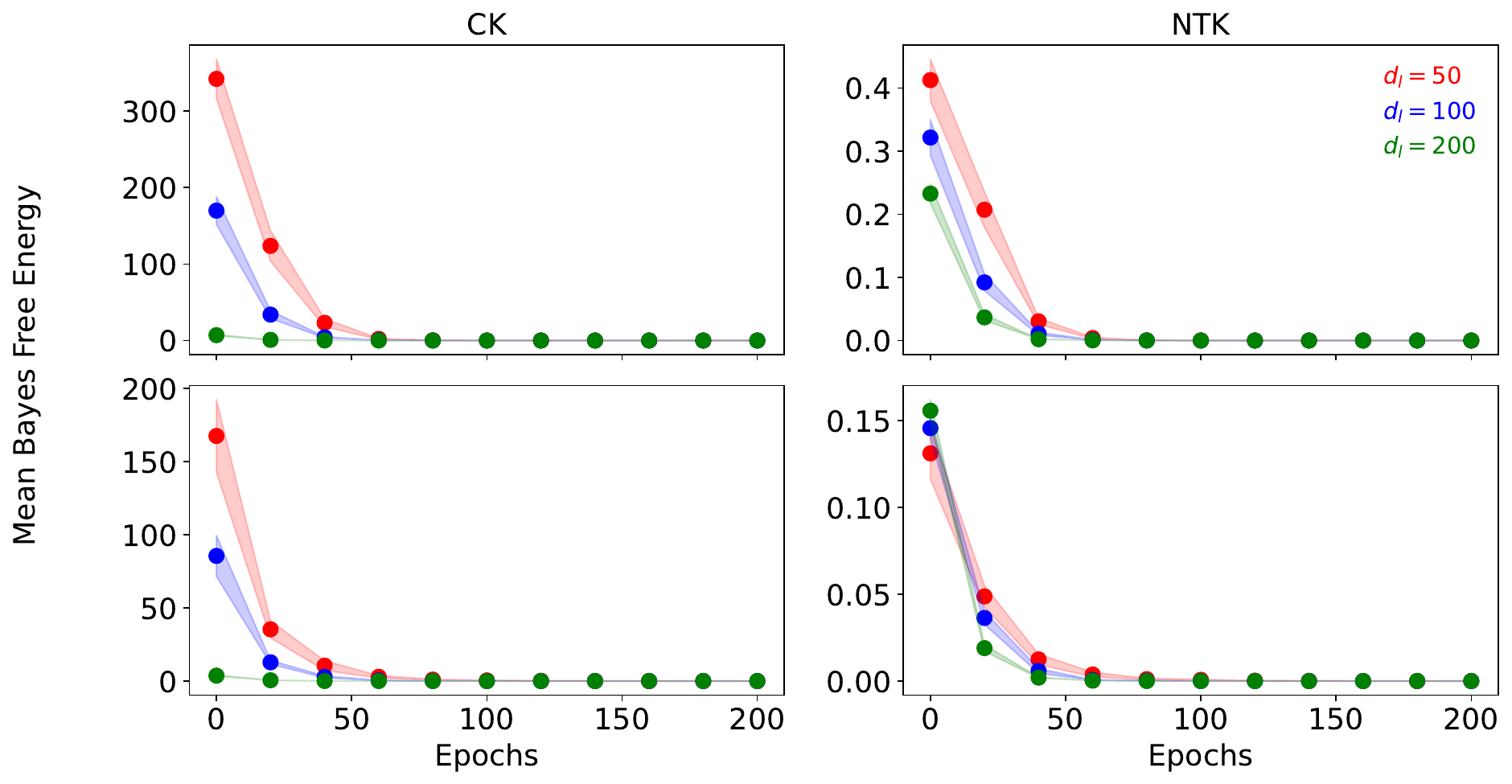}
    \caption{Data-fit term for $K_X + \tau \lambda$ as a function of epochs of training, for (top) Gaussian data with $\lambda = \lambda^*$ and (bottom) a teacher network $y = \sin(w^Tx)$ with $\lambda = 1$, and $\tau = 10^{-3}$.}
    \label{fig:bfe_trained:datafit}
\end{figure}

\begin{figure}
    \centering
    \includegraphics[width=1\linewidth]{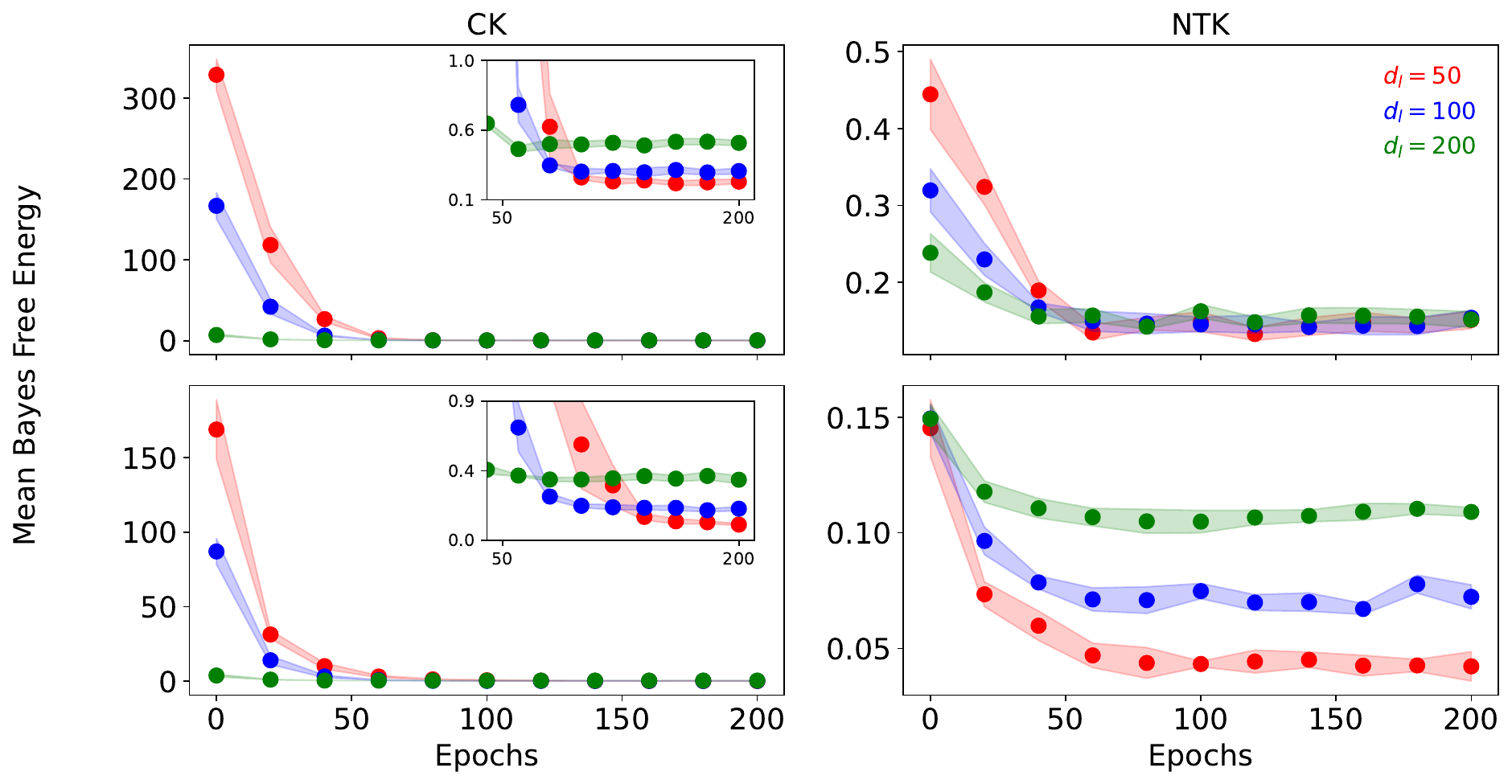}
    \caption{Data-fit term for $K_X + \tau \lambda$, under a \textbf{zero-mean} prior, as a function of epochs of training, for (top) Gaussian data with $\lambda = \lambda^*$ and (bottom) a teacher network $y = \sin(w^Tx)$ with $\lambda = 1$, and $\tau = 10^{-3}$.}
    \label{fig:bfe_trained:datafit_zero}
\end{figure}

We also display the BFE after training for MNIST and CIFAR-10, after normalization and whitening, in \cref{fig:bfe_trained:realdata}. We again see the exact same relationship: the BFE for the CK decreases below that of the NTK after training.

\begin{figure}
    \centering
    \includegraphics[width=1\linewidth]{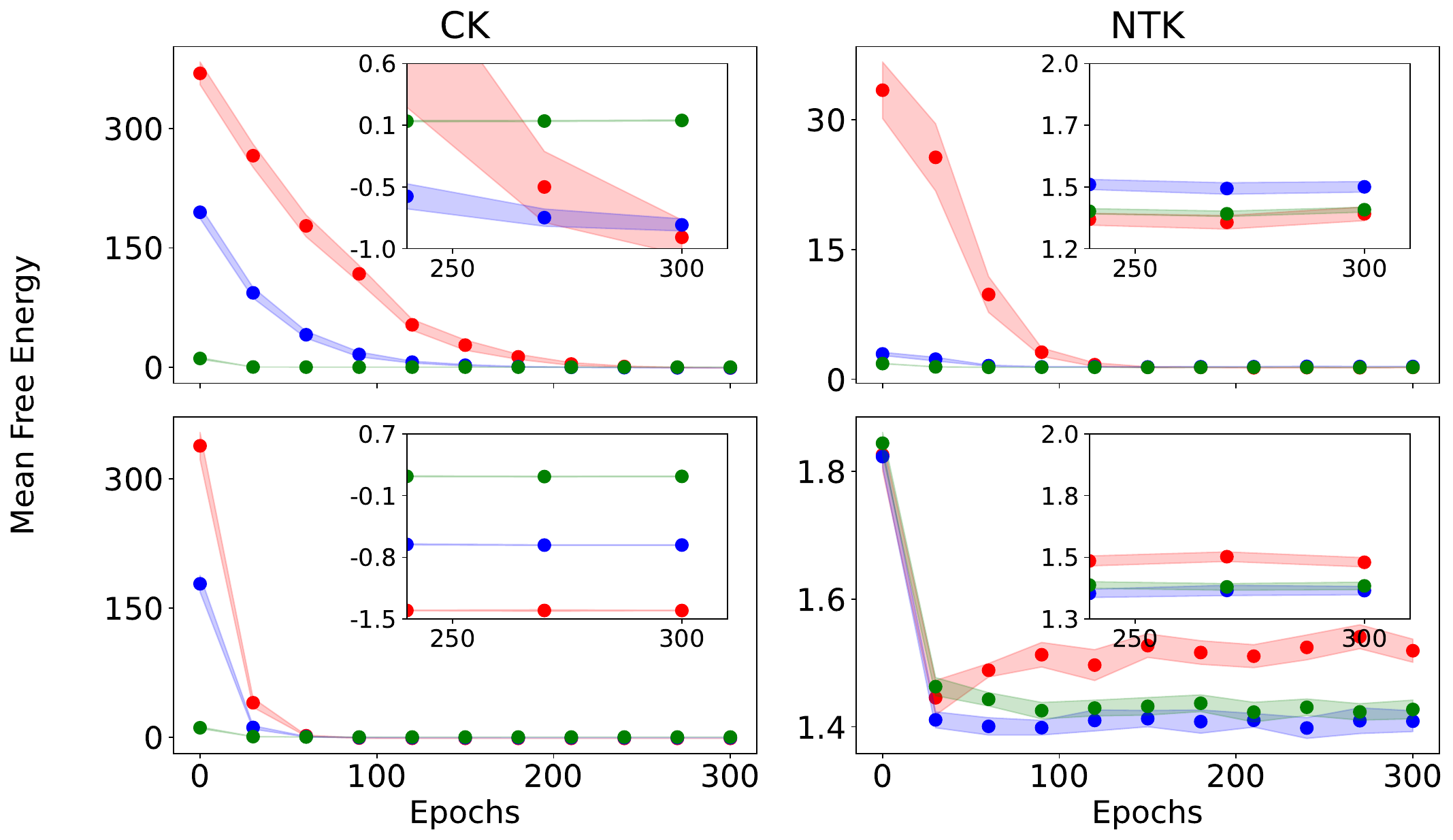}
    \caption{BFE as a function of epochs of training, for (top) MNIST and (bottom) CIFAR-10, with $\lambda = 1$, and $\tau = 10^{-3}$.}
    \label{fig:bfe_trained:realdata}
\end{figure}

Interestingly, we can also plot the free energy for the CK and NTK for a network that has been trained on various UCI regression datasets using Adam. In \cref{fig:bfe_trained:uci}, we see that eventually, the free energy for the CK drops below that for the NTK. Here, the network size is given in \cref{table:uci_regression}.

\begin{figure}
    \centering
    \includegraphics[width=1\linewidth]{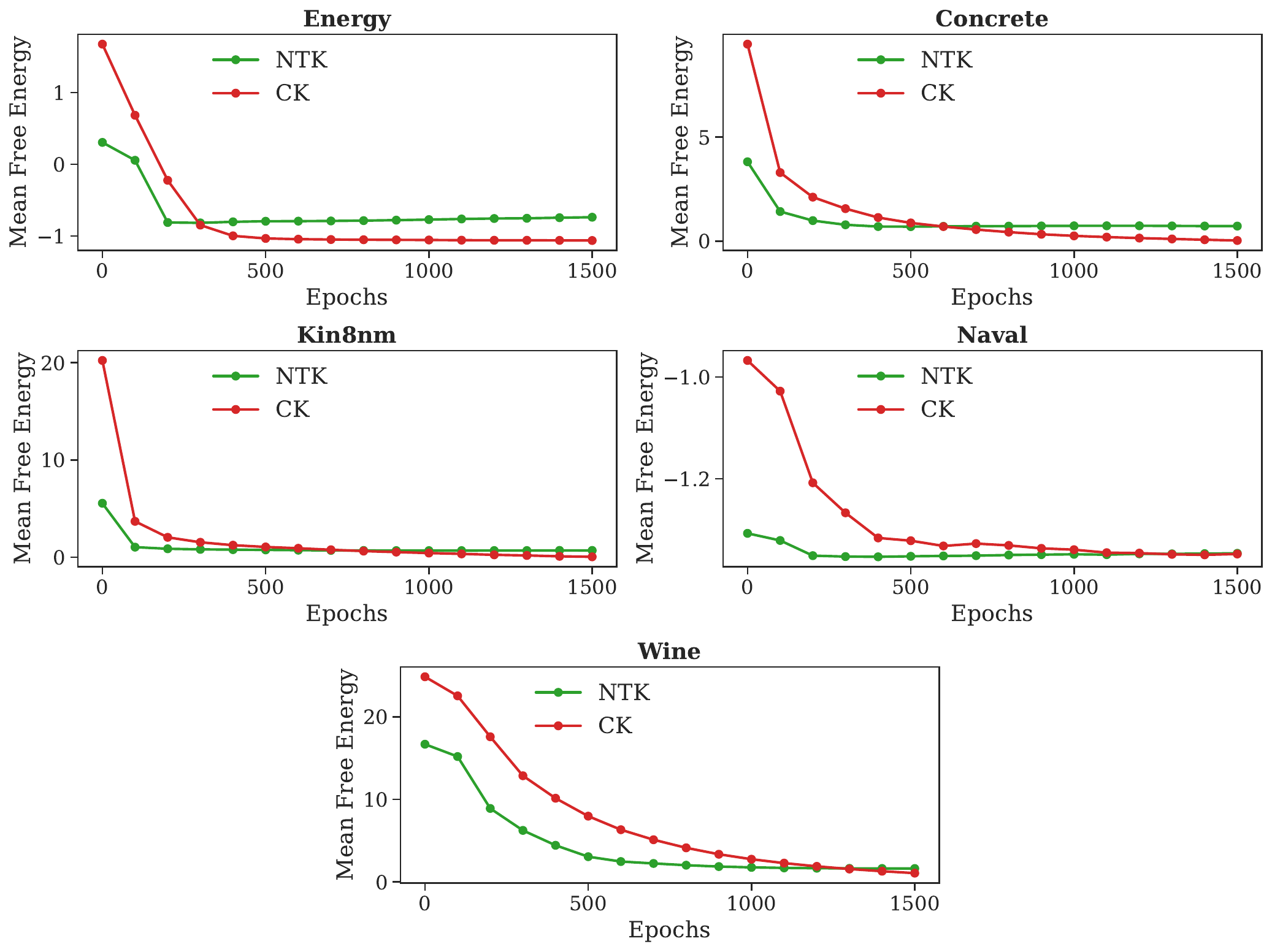}
    \caption{BFE as a function of epochs of training, for UCI regression datasets, for a small MLP trained with Adam for $1500$ epochs using a learning rate $0.01$, with $\lambda = 1, \tau = 0.001$. We see that BFE for CK drops below that of the NTK eventually.}
    \label{fig:bfe_trained:uci}
\end{figure}

\section{Linear Sampling}\label{sec:appendix:linear_sampling}
To show that \cref{alg:linearsampling} allows for sampling from Bayesian GLMs, we first consider the following optimization problem:
\begin{align}
\label{eq:gen_loss_lin}
    \min_{\btheta_S} \; & \sum_{i=1}^{n} \ell_2(\Tilde{\ff}_{\hat{\btheta}}(\btheta_S, \xx_i) , \tilde{\yy}_{i}),
\end{align}
where $\ell_2(f,y) = (f-y)^2$, and $\tilde{\yy}_i \in \real^c, ~i=1,\dots,n$. Suppose $\btheta_S^{\ddagger}$ is any solution to \cref{eq:gen_loss_lin}. Using $\btheta_S^{\ddagger}$, one can construct a family of solutions to \cref{eq:gen_loss_lin} as 
\begin{align}
    \label{eq:gen_theta}
    \btheta^{\star}_{S,\zz} = \btheta_S^{\ddagger} + \left(\eye - \JJ_{S}(\hat{\btheta}_S, \XX)^{\dagger}\JJ_{S}(\hat{\btheta}_S, \XX)\right) \zz, \quad \forall \zz \in \real^{p_S}.
\end{align}
The second term in \cref{eq:gen_theta} consists of all vectors in the null space of $\JJ_{S}(\hat{\btheta}_S, \XX)$. By \citet[Lemma 3.1]{wilson2025uncertainty}, we take $\btheta_S^{\ddagger}$ as the unique solution to \cref{eq:gen_loss_lin} that is orthogonal to the null space of $\JJ_{S}(\hat{\btheta}_S, \XX)$. 

Further, by \citet[Theorem 3.2]{wilson2025uncertainty}, solutions of the form \cref{eq:gen_theta} can be found by employing Gradient Descent (GD) or Stochastic Gradient Descent (SGD), initialized at $\zz \in \real^{p_S}$. The assumptions of \citet[Theorem 3.2]{wilson2025uncertainty} for GD require a strictly convex loss with locally Lipschitz continuous gradient, which is satisftied for $\ell_2$, and for SGD requires a strongly convex loss with Lipschitz continuous gradient (which is satisfied for $\ell_2$, an interpolating function $\Tilde{\ff}_{\hat{\btheta}}$, and that $\JJ_{S}(\hat{\btheta}_S, \XX)$ is full row-rank. For sufficiently over-parameterized DNNs, and when using $\btheta_S = \btheta$, this rank assumption is reasonable (see \citet{liu2022loss}). When $\btheta_S = \btheta_L$, then $\JJ_{S}(\hat{\btheta}_S, \XX)$ is often \textit{not} full-row rank. In this case, we require that either $\JJ_{S}(\hat{\btheta}_S, \XX)$ be rank-deficient, or that $p_L > nc$, for the existence of a null-space. The full-row rank assumption required in the proof of the SGD component, so that when a solution to \cref{eq:gen_loss_lin} is found, the loss-gradient (with respect to input, not parameters) will be zero. However, for the $\ell_2$ loss, when a solution to \cref{eq:gen_loss_lin} is found, the loss-gradient will automatically be zero. Hence, for this particular loss function, we do not require full-row rank.

At solutions of the form \cref{eq:gen_theta}, setting $ \zz = \widehat{\btheta}_S -  \zz_{0} $, and using the fact that for the $\ell_2$ loss, 
\begin{align*}
    \btheta_S^{\ddagger} = \JJ_{S}(\hat{\btheta}_S, \XX)^{\dagger}\left(\tilde{\YY} - \ff_{\btheta}(\XX) + \JJ_{S}(\hat{\btheta}_S, \XX) \hat{\btheta}_S\right),
\end{align*}
where we concatenate the outputs and predictions as $\tilde{\YY} = [\tilde{\yy}_1, \dots, \tilde{\yy}_n] \in \real^{nc}$, $\ff_{\btheta}(\XX) = [\ff_{\btheta}(\xx_1), \dots, \ff_{\btheta}(\xx_n)] \in \real^{nc}$, we find that
\begin{align*}
    \Tilde{\ff}_{\hat{\btheta}}(\btheta^{\star}_{S,\zz}, \xx) &= \ff_{\hat{\btheta}}(\xx_i) +  \JJ_{S}(\hat{\btheta}_S, \xx) \left(\btheta^{\star}_{S,\zz} - \hat{\btheta}_S\right) \\
    &= \ff_{\hat{\btheta}}(\xx_i) +  \JJ_{S}(\hat{\btheta}_S, \xx) \left(\btheta_S^{\ddagger} + \left(\eye - \JJ_{S}(\hat{\btheta}_S, \XX)^{\dagger}\JJ_{S}(\hat{\btheta}_S, \XX)\right) (\widehat{\btheta}_S -  \zz_{0}) - \hat{\btheta}_S\right) \\
    &= \ff_{\hat{\btheta}}(\xx_i) +  \JJ_{S}(\hat{\btheta}_S, \xx) \left(\JJ_{S}(\hat{\btheta}_S, \XX)^{\dagger}\left(\tilde{\YY} - \ff_{\btheta}(\XX) + \JJ_{S}(\hat{\btheta}_S, \XX) \hat{\btheta}_S\right) + \left(\eye - \JJ_{S}(\hat{\btheta}_S, \XX)^{\dagger}\JJ_{S}(\hat{\btheta}_S, \XX)\right) (\widehat{\btheta}_S -  \zz_{0}) - \hat{\btheta}_S\right) \\
    &= \ff_{\hat{\btheta}}(\xx_i) +  \JJ_{S}(\hat{\btheta}_S, \xx) \JJ_{S}(\hat{\btheta}_S, \XX)^{\dagger} \left(\tilde{\YY} - \ff_{\btheta}(\XX)\right) + \JJ_{S}(\hat{\btheta}_S, \xx) \left(\eye - \JJ_{S}(\hat{\btheta}_S, \XX)^{\dagger} \JJ_{S}(\hat{\btheta}_S, \XX)\right) \zz_0.
\end{align*}
We now set a prior on $\zz_0$, namely $\zz_0 \sim \sN(0, \eta^2\eye)$. From this, we compute the distribution of the linear predictive samples
\begin{align*}
    \Tilde{\ff}&_{\hat{\btheta}}(\btheta^{\star}_{S,\zz}, \xx) \sim \sN(\mu(\hat{\btheta}, \xx), \bm\Sigma(\hat{\btheta}, \xx)) \\
    \mu(\hat{\btheta}, \xx) &:= \ff_{\hat{\btheta}}(\xx_i) +  \lim_{\delta \to 0}\KK_{\xx^*}^T (\KK_{\XX} + \delta \eye)^{-1}\left(\tilde{\YY} - \ff_{\btheta}(\XX)\right) \\
    \bm\Sigma(\hat{\btheta}, \xx) &:= \eta^2 (\bm\kappa_{\xx^*} - \lim_{\delta \to 0} \KK_{\xx^*}^T (\KK_{\XX} + \delta \eye)^{-1} \KK_{\xx^*}),
\end{align*}
where 
\begin{align*}
    \kappa(\xx_i, \xx_j) = \JJ_{S}(\hat{\btheta}_S, \xx_i) \JJ_{S}(\hat{\btheta}_S, \xx_j)^T,
\end{align*}
is the kernel function induced by the feature map $\JJ_{S}(\hat{\btheta}_S, \xx)$. Kernel notation then follows that of \cref{sec:bayesian_models}. We observe that this distribution is equal to that of \cref{eq:pred_gp}, for $\tau \to 0$, and $\lambda = \eta^{-2}$. This concludes the proof of \cref{lemma:glm_sampling}. For regression, we take $\tilde{\YY} = \YY$. For classification, we take $\tilde{\YY} = \ff_{\btheta}(\XX)$. For the LL-GLM, we take $\btheta_S = \btheta_L$. For the DNN-GLM, we take $\btheta_S = \btheta$

\section{Sampling from LL-GLM}\label{sec:appendix:sampling_distribution_cuqls}
The LL-GLM sampling procedure from \cref{lemma:glm_sampling} takes the linearization of a DNN around the parameters in the final layer, and then trains an ensemble of these linear networks on the training data. In order to have an ensemble of predictors, we require variance across these new trained parameters. If the last-layer width is less than the number of training points, which it commonly is, then we require rank-deficiency in the last-layer feature matrix, i.e. the Jacobian with respect to the last-layer. In this section we firstly detail evidence that the Jacobian \emph{should} be rank-deficient in infinite-precision. We then show that in finite precision the Jacobian contains many small, but non-zero, singular values; hence, it is only nearly rank-deficient. However, we note that the \textit{LinearSampling} procedure only employs the inner product of the Jacobian with itself, which squares the singular values, thus taking the problem closer to numerical rank deficiency. This, coupled with convergence properties of gradient descent, results in the truncated SVD solution to the problem, destroying directions of the final layer feature space that do not contribute meaningfully to the predictions. \\

\subsection{Rank-Deficiency in Final Features}
There have been many works which give evidence that neural networks learn low-dimensional representations of the training data. In \citet{pope2021intrinsic}, the authors estimate the intrinsic dimension (ID) of the underlying data manifold of many common image datasets (MNIST, CIFAR-10, ImageNet etc.), and show that it is significantly smaller than the extrinsic dimension (ED, i.e. the embedded dimension). In \citet{ansuini2019intrinsic}, the authors estimate the ID of the feature representations for intermediate layers of the neural network. They show that after training, early layers exhibit an increasing ID with layer, after which medium-level layers begin to decrease ID, with final layers exhibiting the smallest ID. This `hunchback' shape (as coined by the authors) is consistent across architectures. The final feature layer contains an ID an order or more of magnitude smaller than its ED. Further, the authors found a clear negative correlation between ID and test accuracy; the more concise the summary of features, the better the generalization. Importantly, these phenomena did not occur when training on noisy labels. The work of \citet{ma2018dimensionality} showed similar findings. Specifically, during training the ID of the final layer is monotonically decreasing with epochs, and the test accuracy follows inversely to the behaviour of the ID. In contrast, when training on noisy labels, the ID initially decreases, and then increases as the network overfits; the test accuracy follows the inverse relationship. Further, \citet{feng2022rank} showed theoretically that the rank of the intermediate feature representations (i.e. $X_l,~l=1,\dots,L$) is monotonically non-increasing with layer $l$, though we note that this result is in contrast with `hunchback' shape of \citet{ansuini2019intrinsic}.

\subsection{Near Rank-Deficiency in Finite Precision}
In finite precision, we find that for many common datasets and models, $\JJ_{L}(\hat{\btheta}_L, \XX)$ is not rank-deficient, yet contains many small, but non-zero, singular values. As per the previous discussion, we maintain that this arises due to the finite precision of computations. However, we now show that for feature matrices that are near rank-deficient, the sampling process the LL-GLM `deletes' many directions that are close to the null-space, and that gradient descent instead finds the truncated SVD solution of \cref{eq:gen_loss_lin}. \\

During the gradient descent process to find the solution of \cref{eq:gen_loss_lin}, initialized at $\zz$, iterates are of the form
\begin{align*}
    \frac{\btheta^{(1)}_L - \btheta^{(0)}_L}{\eta} = \nabla_{\btheta_L}  ||\Tilde{\ff}_{\hat{\btheta}}(\btheta^{(0)}_L, \XX) - \YY||_2^2 
    &= \nabla_{\btheta_L} ||\ff_{\hat{\btheta}}(\XX) + \JJ_{L}(\hat{\btheta}_L, \XX) (\btheta^{(0)}_L- \hat{\btheta}_L) - \YY||_2^2 \\
    &= \nabla_{\btheta_L} ||\JJ_{L}(\hat{\btheta}_L, \XX) \bar{\btheta}^{(0)}_L - \rr||_2^2 \\
    &= \JJ_{L}(\hat{\btheta}_L, \XX)^T  \JJ_{L}(\hat{\btheta}_L, \XX) \bar{\btheta}^{(0)}_L - \JJ_{L}(\hat{\btheta}_L, \XX)^T  \rr
\end{align*}
where $\bar{\btheta}^{(0)}_L = \btheta^{(0)}_L- \hat{\btheta}_L,~\rr =  \ff_{\hat{\btheta}}(\XX) - \YY$, and $\eta > 0$ is a learning-rate. Further, if the network is sufficiently over-parameterized, then the network will be able to interpolate the training data, and hence there exists $\ww \in \real^{d_L}$, where $d_L$ is the width of the last-connected layer of the neural network, such that $\YY = \JJ_{L}(\hat{\btheta}_L, \XX) \ww$. Further, note that $\ff_{\hat{\btheta}}(\XX) = \JJ_{L}(\hat{\btheta}_L, \XX) \hat{\btheta}_L$, as $\JJ_{L}(\hat{\btheta}_L, \XX)$ contains the learnt features, and $\hat{\btheta}_L$ is the weights in the final-connected layer. Thus, $\JJ_{L}(\hat{\btheta}_L, \XX)^T  \rr = \JJ_{L}(\hat{\btheta}_L, \XX)^T \JJ_{L}(\hat{\btheta}_L, \XX) \hat{\btheta}_L - \JJ_{L}(\hat{\btheta}_L, \XX)^T \JJ_{L}(\hat{\btheta}_L, \XX) \ww = \JJ_{L}(\hat{\btheta}_L, \XX)^T \JJ_{L}(\hat{\btheta}_L, \XX)(\hat{\btheta}_L - \ww)$. So we see that the iterates of gradient descent depend only on $\JJ_{L}(\hat{\btheta}_L, \XX)^T \JJ_{L}(\hat{\btheta}_L, \XX) = \VV \Lambda^2 \VV^T$, where $\JJ_{L}(\hat{\btheta}_L, \XX) = \UU \Lambda \VV^T$ is the singular value decomposition of $\JJ_{L}(\hat{\btheta}_L, \XX)$. Note that $\Lambda^2$ denotes the squaring of the singular values of $\JJ_{L}(\hat{\btheta}_L, \XX)$. We assume that $\JJ_{L}(\hat{\btheta}_L, \XX)$ is nearly-rank deficient. Hence, it has many very small singular, and a large condition number. The singular values of $\JJ_{L}(\hat{\btheta}_L, \XX)^T  \JJ_{L}(\hat{\btheta}_L, \XX)$ will thus be even more extreme, with small values tending closer to zero, while large values will grow larger. Thus, $\JJ_{L}(\hat{\btheta}_L, \XX)^T  \JJ_{L}(\hat{\btheta}_L, \XX)$ may become rank-deficient as per the numerical rank. By this, $\JJ_{L}(\hat{\btheta}_L, \XX)^T  \JJ_{L}(\hat{\btheta}_L, \XX) = \JJ_{L}(\hat{\btheta}_L, \XX ; r)^T  \JJ_{L}(\hat{\btheta}_L, \XX ; r)$ under numerical precision, where $\JJ_{L}(\hat{\btheta}_L, \XX ; r)$ is $\JJ_{L}(\hat{\btheta}_L, \XX)$ with small singular-values 'cut-off' such that $r$ is the numerical rank of $\JJ_{L}(\hat{\btheta}_L, \XX)^T  \JJ_{L}(\hat{\btheta}_L, \XX)$. Therefore,
% \begin{align*}
%     \btheta^{(1)}_L &= \btheta^{(0)}_L - \eta \JJ_{L}(\hat{\btheta}_L, \XX ; r)^T  \JJ_{L}(\hat{\btheta}_L, \XX ; r) \btheta^{(0)}_0 - \eta \JJ_{L}(\hat{\btheta}_L, \XX ; r)^T  \rr \\
%     &= \left(\eye - \JJ_{L}(\hat{\btheta}_L, \XX ; r)^{\dagger} \JJ_{L}(\hat{\btheta}_L, \XX ; r)\right) \btheta^{(0)}_L + \JJ_{L}(\hat{\btheta}_L, \XX ; r)^{\dagger} \JJ_{L}(\hat{\btheta}_L, \XX ; r) \btheta^{(0)}_L  - \eta \JJ_{L}(\hat{\btheta}_L, \XX ; r)^T  \JJ_{L}(\hat{\btheta}_L, \XX ; r) \btheta^{(0)}_L - \eta \JJ_{L}(\hat{\btheta}_L, \XX ; r)^T  \YY \\
%     &= \left(\eye - \JJ_{L}(\hat{\btheta}_L, \XX ; r)^{\dagger} \JJ_{L}(\hat{\btheta}_L, \XX ; r)\right) \btheta^{(0)}_L + \vv^{(1)},
% \end{align*}
% where $\vv^{(1)} \in \range\left(\left[\JJ_{L}(\hat{\btheta}_L, \XX ; r)\right]^{\T}\right)$. So we can see that 
gradient descent on a nearly-rank deficient least-squares problem converges to the solution of the truncated least-squares solution of the rank-deficient problem. 

\subsection{Singular Value Distribution for Trained Networks}
\begin{figure}[t]
     \centering
     \begin{subfigure}[b]{0.49\textwidth}
         \centering
         \includegraphics[width=\textwidth]{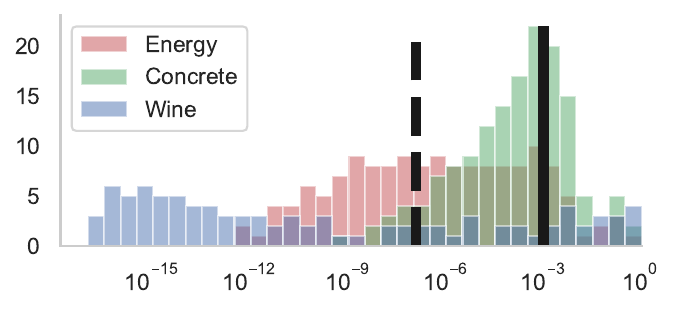}
     \end{subfigure}
     \hfill
     \begin{subfigure}[b]{0.49\textwidth}
         \centering
         \includegraphics[width=\textwidth]{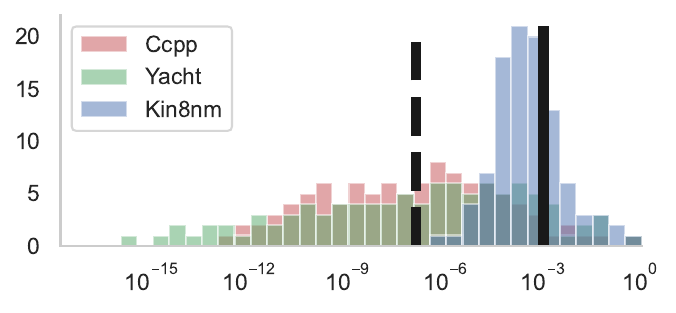}
     \end{subfigure}
     \hfill
     \begin{subfigure}[b]{0.49\textwidth}
         \centering
         \includegraphics[width=\textwidth]{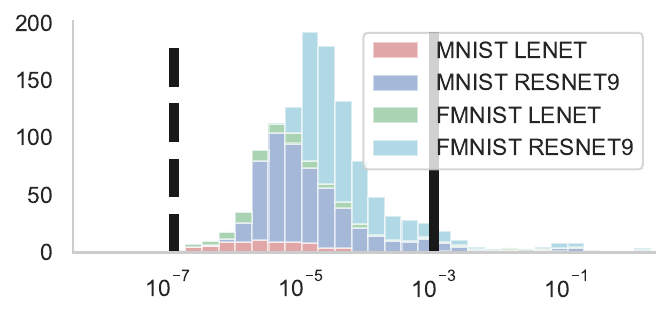}
     \end{subfigure}
     \hfill
     \begin{subfigure}[b]{0.49\textwidth}
         \centering
         \includegraphics[width=\textwidth]{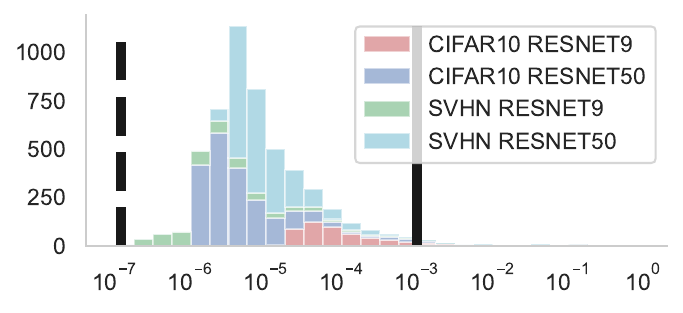}
     \end{subfigure}
     \label{fig:sv_distribution}
     \caption{Normalized singular value distribution for final trained features, for several datasets and models. The top rows displays regression datasets, while the bottom row gives image classification datasets. We consistently see many small singular values. The dotted black line denotes machine epsilon for \textit{float32}, while the solid black line denotes machine epsilon for \textit{float16}.}
\end{figure}

We plot the singular value distribution of the final-layer features, for several regression and classification datasets, using a trained network. We plot the normalized distribution, where we have divided by the largest singular value for each dataset / model. We also show machine epsilon for the \textit{float32} and \textit{float16} datatypes. We see that the majority of the singular values for all datasets and models exists between the machine epsilon for these two datatypes. Thus, using these precision values will give the truncated SVD solution to the least-squares problem in \cref{eq:gen_loss_lin}, where the amount of truncation depends on the datatype. 

\subsection{Convergence of Gradient Descent}
Perhaps most importantly to our discussion, we note that for gradient descent, the number of epochs for the algorithm to converge for each direction is inversely proportional to the magnitude of the corresponding singular value of that direction. I.e. for directions with vanishingly small singular values, gradient descent will make no meaningful progress. For our case, we note that once the major eigen-directions have converged, the directions with small singular-values will remain unchanged; hence, early stopping will result in a truncated SVD solution, where the majority of the energy will have been captured by the dominating directions, and many directions will become null-directions, allowing for projections onto the null-space of a matrix which is close to the last-layer Jacobian. 

Note that truncation of the spectrum of the CK Gram matrix may actually improve UQ performance; as per our previous discussion, it may be the case that many of these small eigen-directions should in fact be null-directions in infinite-precision. Exploring the effect of low-rank truncation on UQ performance is an interesting avenue for future work.

\section{Justification for VARROC} \label{sec:appendix:varroc_justification}
In order to evaluate the performance of UQ methods for classification, we first highlight two settings for which we require uncertainty:
\begin{enumerate}
    \item To detect when a network will make an incorrect prediction on a test point that is in-distribution.
    \item To detect when a test point is out-of-distribution, and hence a correct prediction cannot be made.
\end{enumerate}

The AUC-ROC score is the Area Under the Curve of the Receiver Operator Characteristic. It evaluates the performance of a binary classifier. We can understand this through a classifier that gives a score $\hat{y} \in [0,1]$ for a test point $x$. If $\hat{y} \leq 0.5$, the classifier selects class $0$, and if $\hat{y} > 0.5$, the classifier selects $1$. For a test set $\XX_{\text{test}}$, and hence a set of predictions $\hat{\sY}_{\text{test}}$, the Receiver-Operator Characteristic considers the ratio of the True Positive Rate (TPR) to the False Positive Rate (FPR), as the decision boundary, i.e. $\alpha$ where if $\hat{y} \leq \alpha$, $x$ is classified as coming from class $0$, is taken from $0$ to $1$. This essentially measures the overlap of the predictions for the two sets; if all class $0$ points have a predicted value less than $0.5$, and all class $1$ points have a predicted value greater than $0.5$, then as $\alpha$ increases, the TPR will grow to $1$, and then plateau at or before $\alpha = 0.5$, at which point the FPR will grow to $1$. Hence the area under this curve will be $1$. This is exactly what we would like from our uncertainty quantification method; the higher the AUC-ROC score, for a classifier that uses the variance for correct versus incorrect predictions, and for correct versus OoD predictions, the better we are at knowing when a network will give an incorrect prediction. 

Specifically, the ROC curve plots the sensitivity of our variance detector as a function of the Type-I error. This suits our purpose, as for both detection of incorrectly predicted points, and out-of-distribution points, we want to maximize the number of points we will correctly predict on, while minimizing the points that will lead to incorrect/erroneous predictions. Hence, we define VARROC-ID and VARROC-OOD as two metrics to compare the performance of uncertainty quantification techniques on classification problems:
\begin{enumerate}
    \item Firstly, we use the variance of the maximum softmax prediction as our uncertainty score.  For a Bayesian method, we generally have a mean predictor $\mu : \mathbb{R}^d \to \mathbb{R}^c$ and a covariance function $\Sigma : \mathbb{R}^d \to \mathbb{R}^{c \times c}$, that output in the softmax space. For the variance of the maximum softmax prediction for a given test point $\mathbf{x}^\star$, we first find $\hat{c} = \text{argmax}_k ~\mu(\mathbf{x}^{*})_{k}$, where $\mu(\mathbf{x}^\star)_k$ denotes the $k$-th output of $\mu(\mathbf{x}^\star)$. That is, we find the class that the Bayesian method predicts, given $\mathbf{x}^\star$. We then take $\Sigma(\mathbf{x}^\star)_{\hat{c},\hat{c}} = \sigma^2(\mathbf{x}^\star)_{\hat{c}}$, that is, the variance of this prediction. 
    \item Secondly, we compute the AUCROC score for two settings: on an in-distribution test set, where we seek to detect correctly predicted versus incorrectly predicted points (\textbf{VARROC-ID}), and using an OOD test set, where we seek to detect correctly predicted versus OOD points (\textbf{VARROC-OOD}). 
\end{enumerate}
Note that for VARROC-OOD, we do not want to include the entire ID test set, as this may include test points that are incorrectly predicted; if the model assigns these points high variance (which we would prefer), then this will reduce the ability of the method to differentiate between ID and OOD. 

VARROC is similar to the uncertainty metric used in \citet{miani2024sketched}; there, the uncertainty score was the variance of the logits, and the metric was AUC-ROC. However, we feel that variance in the softmax predictions is the more appropriate uncertainty score, as this is the space in which prediction occurs, not the logit space.

Instead of the variance in the softmax predictions, one may also use Mutual Information (MI) \citep{de2023uncertainty}, $I$. 
This gives the metrics VARROC-MI-ID and VARROC-MI-OOD.
Derived from an information theoretic framework, we can estimate the MI as $\hat{I}$ using an MC-sampling framework. For a test point $\xx^*$, and for a Bayesian method that outputs $T$ samples $\pp(\xx^*)_1, \dots, \pp(\xx^*)_T$, where $\pp(\xx^*)_i \in \real^c$, and $\Bar{\pp}(\xx^*)_{., k}$ is the mean over samples for class $k$, 
\begin{align*}
    \hat{I}(\xx^*) = \sum_{k=1}^c \left( \left(\frac{1}{T} \sum_{i=1}^{T} \pp(\xx^*)_{i, k} \log \pp(\xx^*)_{i, k} \right) - \Bar{\pp}(\xx^*)_{., k} \log \Bar{\pp}(\xx^*)_{., k}\right).
\end{align*}

The choice of softmax variance versus MI for modeling epistemic uncertainty is still a topic of discussion \citep{wimmer2023quantifying}. Interestingly, these quantities are equal up to leading order \citep{smith2018understanding}. Hence, we measure AUCROC using both softmax variance and MI; we see in practice they give similar results.

Finally, for probabilistic forecasting, one should use the log pointwise predictive density (LPPD) \citep{gelman1995bayesian}. For our purposes, we do not want to simply estimate this using the mean probability prediction for each method. Instead, we incorporate the uncertainty by employing the generalized probit approximation \citep{gibbs1998bayesian},
\begin{align*}
    p(\yy^* | \xx^*, \sD) \approx \text{softmax} \left( \left\{\frac{\mu_{*,j}}{\sqrt{1 + \frac{\pi}{8}\Sigma_{*,j,j}}} \right\}_{j=1}^c\right),
\end{align*}
where $\mu_{*,j}, \Sigma_{*,j,j}$ are the posterior predictive mean and covariance for class $j=1,\dots,c$. For every method, we take $\mu$ to be the pre-trained DNN; thus, we are measuring the alignment of the predictive variance values to the error in the predictive mean. Note that for both GLMs, scaling the initial prior scale is equivalent to scaling the predictive variance (see \cref{sec:appendix:linear_sampling}). Hence, for this metric, we use a randomly selected validation set of $2000$ from the test set (without replacement) to tune the scale of the predictive variance to minimize the maximize the validation LPPD. This scale is then used to scale the test predictions, when computing LPPD on the test set. 

Finally, note that these metrics are not novel; they are simply existing metrics for model quality (AUCROC, LPPD), evaluated using existing uncertainty measures.

\section{Evaluation of Image Classification}\label{sec:appendix:image_class}
\subsection{Distance-Aware Variance} \label{sec:appendix:distance_aware}
As discussed in \citep{foong2019between, sngp_2020},  we would prefer for a method to provide distance-aware variance; that is, the variance should be smallest closest to the training data, and then increase with distance. We now display that both the LL-GLM and DNN-GLM possess this property. \\

We consider a toy example, where we form $C = 3$ clusters of training points in $\real^2$. Each cluster is Gaussian distributed, with mean $\mu_1 = (-2.5,-2.5), \mu_2 = (0,2.5), \mu_3 = (2.5,-2.5)$, and deviation $\sigma = 0.5$. We sample $n_i = 200$ training points from each cluster, and concatenate these points, as well as their cluster index, into the sets $\XX_{\text{train}}$ and $\sY_{\text{train}}$. We then train a $1-$layer MLP with SiLU activation and width $200$ on these points. We evaluate LL-GLM, DNN-GLM, DE, LLA, SWAG, SGLD \citep{welling2011bayesian}, Spectral Normalized Gaussian Process (SNGP) \citep{sngp_2020} and MC-Dropout on this problem, and plot the variance of the logits (in log-scale) and the probabilities (summed over classes for each test example), for test points in a grid over the training domain ($\pm 1$ past the minimum and maximum values respectively in each dimension of $\xx_i \in \XX_{\text{train}}$). All uncertainty values are normalized to $[0,1]$ for each method. \\

We plot the results in \cref{fig:three_islands}. We see that LL-GLM, DNN-GLM and SNGP possess the distance-aware variance property in the logit space and probability space; this is impressive, considering that SNGP is not post-hoc, and involves adapting the structure and training of a network to imbibe the posterior with this distance-aware property. \\

\begin{figure*}[t!]
    \centering
    \begin{subfigure}[t]{1\textwidth}
        \centering
        \includegraphics[width=\textwidth]{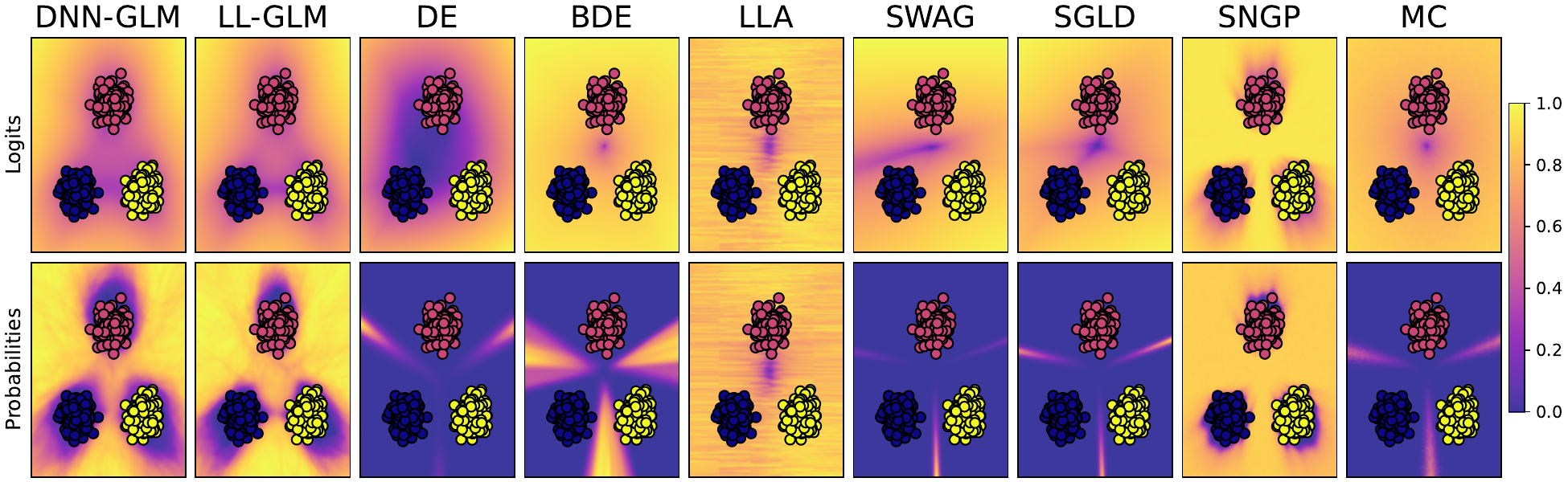}
        \caption{Three Islands}
        \label{fig:three_islands}
    \end{subfigure}
    \hfill
    \begin{subfigure}[t]{1\textwidth}
        \centering
        \includegraphics[width=\textwidth]{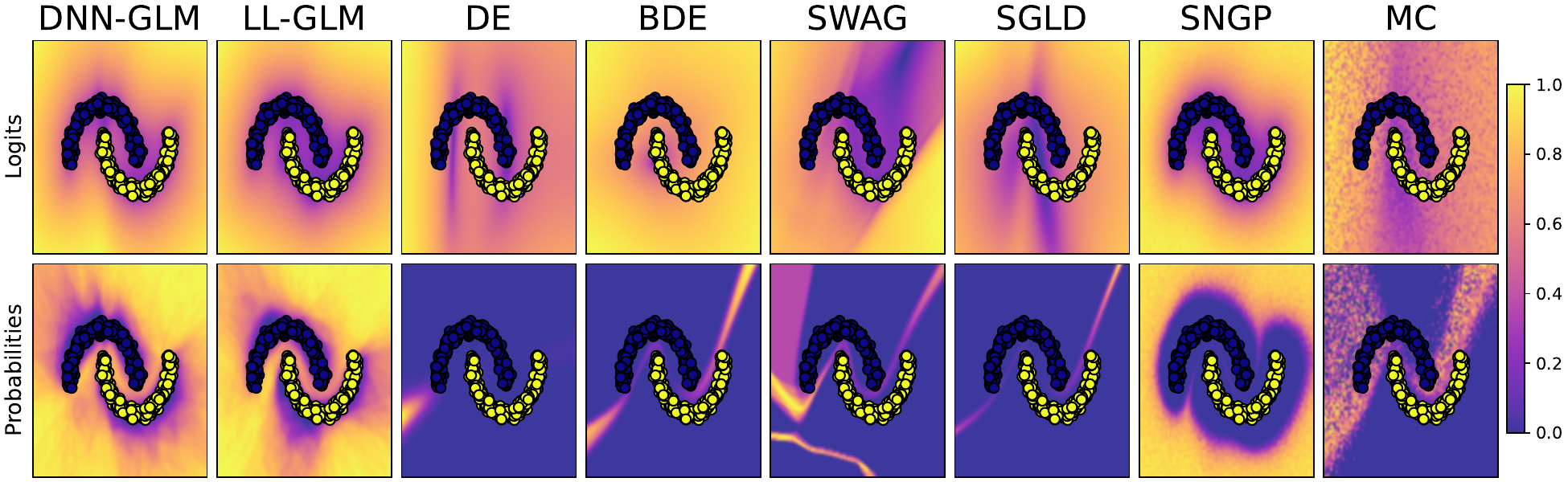}
        \caption{Two Moons}
        \label{fig:two_moons}
    \end{subfigure}
    \label{fig:distance_aware}
    \caption{Evaluation of distance-aware property of Bayesian methods.}
\end{figure*}

We also compare the methods on the Two Moons dataset \citep{scikit-learn}. We use $n = 1000$ training points, and again employ a $1-$layer MLP with SiLU activation and width $200$. We present the results in \cref{fig:two_moons}, where we see similar findings to \cref{fig:three_islands}. Note that we could not include LLA in this experiment as the current package for LLA, \textit{laplace-torch} \citep{daxberger2021laplace}, does not have implementation for binary classification problems. 

\subsection{Extended Image Classification Results}
We present the LPPD, VARROC, VARROC-MI, execution time and maximum recorded memory results for LL-GLM, DNN-GLM, DE, LLA, SWAG, MC and SMS-UBU on variety of image classification tasks in \cref{table:varroc}. 
We observe that LL-GLM and DNN-GLM perform similarly across all tasks, and that they perform comparably to DE. We also provide the relative percentage difference between the GLMs, computed as $(\text{DNN-GLM} - \text{LL-GLM} / \text{DNN-GLM} \times 100$, in \cref{table:relative_diff}. Note that for smaller models, the time and memory are less influenced by model size, and are instead dominated by processes such as data-loading. We observer that the relative difference in performance is small across all tasks, with large increases in time and memory for the DNN-GLM, and that no GLM performs consistently better across all tasks.

\begin{table*}[t]
\centering
\caption{Results for LPPD, VARROC, MI-VARROC, runtime, and memory. Time is reported in hh:mm:ss and memory in GB.}
\scriptsize
\setlength{\tabcolsep}{3pt}
\renewcommand{\arraystretch}{0.9}

% ---------------- Line 1 ----------------
\resizebox{\textwidth}{!}{
\begin{tabular}{l|
ccccccc|ccccccc}
\toprule
\multirow{2}{*}{\textbf{Method}} &
\multicolumn{7}{c|}{\textbf{LeNet MNIST}} &
\multicolumn{7}{c}{\textbf{LeNet FashionMNIST}} \\
& V-ID & V-OOD & MI-ID & MI-OOD & LPPD & Time & Memory &
  V-ID & V-OOD & MI-ID & MI-OOD & LPPD & Time & Memory \\
\midrule
DNN-GLM & 0.9867 & 0.9611 & 0.9853 & 0.9752 & -0.0299 & 89.15 & 0.2941
        & 0.8791 & 0.9460 & 0.8717 & 0.9766 & -0.3035 & 85.59 & 0.2941 \\
LL-GLM  & 0.9851 & 0.9284 & 0.9820 & 0.9213 & -0.0299 & 122.9 & 0.1216
        & 0.8805 & 0.8871 & 0.8753 & 0.9247 & -0.3039 & 95.21 & 0.3382 \\
DE      & 0.9826 & 0.9464 & 0.9802 & 0.9627 & -0.0312 & 1.804e+03 & 0.1284
        & 0.8751 & 0.9465 & 0.8625 & 0.9709 & -0.3035 & 1.740e+03 & 0.2596 \\
LLA     & 0.9864 & 0.9318 & 0.9847 & 0.9318 & -0.0299 & 14.46 & 0.2545
        & 0.8786 & 0.9366 & 0.8720 & 0.9681 & -0.3037 & 14.52 & 0.2551 \\
SWAG    & 0.9888 & 0.9587 & 0.9873 & 0.9618 & -0.0647 & 360.1 & 0.2226
        & 0.8684 & 0.8322 & 0.8700 & 0.8735 & -0.3341 & 346.6 & 0.2232 \\
MC      & 0.9788 & 0.9227 & 0.9767 & 0.9227 & -0.0378 & 12.63 & 0.1065
        & 0.8520 & 0.8497 & 0.8488 & 0.9042 & -0.3088 & 12.78 & 0.1071 \\
\bottomrule
\end{tabular}
}

\vspace{2mm}

% ---------------- Line 2 ----------------
\resizebox{\textwidth}{!}{
\begin{tabular}{l|ccccccc|ccccccc}
\toprule
\multirow{2}{*}{\textbf{Method}} &
\multicolumn{7}{c|}{\textbf{ResNet9 MNIST}} &
\multicolumn{7}{c}{\textbf{ResNet9 FashionMNIST}} \\
& LPPD & V-ID & V-OOD & MI-ID & MI-OOD & Time & Memory &
  LPPD & V-ID & V-OOD & MI-ID & MI-OOD & Time & Memory \\
\midrule
DNN-GLM & -0.0147 & 0.9880 & 0.9806 & 0.9879 & 0.9861 & 00:08:38 & 7.254 & -0.2077 & 0.9208 & 0.9765 & 0.9219 & 0.9849 & 00:08:36 & 7.254 \\
LL-GLM  & -0.0149 & 0.9883 & 0.9825 & 0.9877 & 0.9851 & 00:00:36 & 1.699 & -0.2153 & 0.9188 & 0.9651 & 0.9179 & 0.9754 & 00:00:36 & 1.699 \\
DE      & -0.0149 & 0.9908 & 0.9920 & 0.9901 & 0.9939 & 00:54:36 & 2.672 & -0.2264 & 0.9250 & 0.9793 & 0.9244 & 0.9979 & 00:54:46 & 2.672 \\
LLA     & -0.0147 & 0.9848 & 0.9843 & 0.9831 & 0.9863 & 00:00:25 & 1.952 & -0.2730 & 0.9096 & 0.9640 & 0.9093 & 0.9788 & 00:00:25 & 1.952 \\
SWAG    & -0.0147 & 0.9864 & 0.9777 & 0.9855 & 0.9797 & 00:13:26 & 2.511 & -0.2585 & 0.9198 & 0.9647 & 0.9196 & 0.9685 & 00:13:39 & 2.511 \\
MC      & -0.0145 & 0.9871 & 0.9746 & 0.9868 & 0.9765 & 00:00:25 & 1.901 & -0.2681 & 0.9187 & 0.9668 & 0.9175 & 0.9735 & 00:00:25 & 1.901 \\
SMS-UBU & -0.0157 & 0.9843 & 0.9922 & 0.9837 & 0.9960 & 00:13:10 & 3.316 & -0.2496 & 0.9003 & 0.9187 & 0.9006 & 0.9988 & 00:13:11 & 3.316 \\
\bottomrule
\end{tabular}
}

\vspace{2mm}

% ---------------- Line 3 ----------------
\resizebox{\textwidth}{!}{
\begin{tabular}{l|ccccccc|ccccccc}
\toprule
\multirow{2}{*}{\textbf{Method}} &
\multicolumn{7}{c|}{\textbf{ResNet50 CIFAR-10}} &
\multicolumn{7}{c}{\textbf{ResNet50 CIFAR-100}} \\
& LPPD & V-ID & V-OOD & MI-ID & MI-OOD & Time & Memory &
  LPPD & V-ID & V-OOD & MI-ID & MI-OOD & Time & Memory \\
\midrule
DNN-GLM & -0.2397 & 0.9063 & 0.8957 & 0.9134 & 0.9071 & 00:07:36 & 21.99 & -1.0520 & 0.7703 & 0.7917 & 0.8414 & 0.8868 & 00:07:23 & 22.16 \\
LL-GLM  & -0.2427 & 0.9216 & 0.8955 & 0.9224 & 0.8998 & 00:00:34 & 10.5 & -1.0460 & 0.8260 & 0.8381 & 0.8555 & 0.8784 & 00:00:33 & 10.75 \\
DE      & -0.2389 & 0.9201 & 0.9085 & 0.9223 & 0.9188 & 47:50:00 & 14.15 & -1.1150 & 0.7828 & 0.7971 & 0.8430 & 0.8914 & 46:56:40 & 14.35 \\
LLA     & -0.2719 & 0.9110 & 0.8889 & 0.9134 & 0.8936 & 00:00:58 & 11.7 & -1.0700 & 0.7041 & 0.6785 & 0.8418 & 0.8761 & 00:03:24 & 38.58 \\
SWAG    & -1.5630 & 0.4909 & 0.3252 & 0.5900 & 0.5009 & 05:06:50 & 15.14 & -1.3870 & 0.6313 & 0.6316 & 0.8127 & 0.8967 & 05:10:30 & 15.23 \\
MC      & -0.2660 & 0.9069 & 0.8872 & 0.8989 & 0.8839 & 00:01:04 & 11.44 & -1.0490 & 0.8105 & 0.8203 & 0.8247 & 0.8333 & 00:01:02 & 11.57 \\
SMS-UBU & -0.2494 & 0.5608 & 0.6831 & 0.6991 & 0.8162 & 01:13:34 & 16.99 & -1.0550 & 0.2910 & 0.2945 & 0.6012 & 0.7497 & 01:15:10 & 17.12 \\
\bottomrule
\end{tabular}
}

\vspace{2mm}

% ---------------- Line 4 ----------------
\resizebox{\textwidth}{!}{
\begin{tabular}{l|ccccccc|ccccccc}
\toprule
\multirow{2}{*}{\textbf{Method}} &
\multicolumn{7}{c|}{\textbf{ResNet50 SVHN}} &
\multicolumn{7}{c}{\textbf{ResNet50 ImageNet}} \\
& LPPD & V-ID & V-OOD & MI-ID & MI-OOD & Time & Memory &
  LPPD & V-ID & V-OOD & MI-ID & MI-OOD & Time & Memory \\
\midrule
DNN-GLM & -0.4331 & 0.9097 & 0.9218 & 0.9177 & 0.9347 & 01:31:48 & 63.56 & -0.9610 & 0.8130 & 0.7960 & 0.8340 & 0.8950 & 21:58:19 & 66.089 \\
LL-GLM  & -0.4319 & 0.9154 & 0.9043 & 0.9227 & 0.9061 & 00:05:21 & 12.45 & -0.9630 & 0.8110 & 0.7730 & 0.8500 & 0.9180 & 10:43:51 & 14.818 \\
DE      & -0.4533 & 0.8469 & 0.9252 & 0.8699 & 0.9565 & 17:31:40 & 16.06 & - & - & - & - & - & - & - \\
LLA     & -0.4917 & 0.9151 & 0.9068 & 0.9261 & 0.9088 & 00:01:30 & 13.6 & - & - & - & - & - & - & - \\
SWAG    & -0.4427 & 0.8866 & 0.9268 & 0.9031 & 0.9401 & 02:24:09 & 17.03 & - & - & - & - & - & - & - \\
MC      & -0.4904 & 0.9081 & 0.9059 & 0.9139 & 0.9073 & 00:01:55 & 13.34 & -0.9650 & 0.5000 & 0.5000 & 0.5120 & 0.5340 & 01:25:28 & 18.657 \\
SMS-UBU & -0.4815 & 0.5136 & 0.7758 & 0.6357 & 0.8969 & 01:47:10 & 18.9 & - & - & - & - & - & - & - \\
\bottomrule
\end{tabular}
}
\label{table:varroc}
\end{table*}

\begin{table*}[t]
\centering
\caption{Relative difference comparison: each entry is
$\left(\mathrm{DNN\mbox{-}GLM}-\mathrm{LL\mbox{-}GLM}\right)/
\mathrm{DNN\mbox{-}GLM}\times 100$.}
\scriptsize
\setlength{\tabcolsep}{5pt}
\renewcommand{\arraystretch}{0.9}

\resizebox{\textwidth}{!}{
\begin{tabular}{l|rrrrrrr}
\toprule
\textbf{Model / Dataset} &
\textbf{LPPD (\%)} &
\textbf{V-ID (\%)} &
\textbf{V-OOD (\%)} &
\textbf{MI-ID (\%)} &
\textbf{MI-OOD (\%)} &
\textbf{Time (\%)} &
\textbf{Mem (\%)} \\
\midrule
LeNet MNIST        &  0.00 &  0.16 &  3.40 &  0.33 &  5.53 & -37.86 &  58.65 \\
LeNet FMNIST       & -0.13 & -0.16 &  6.23 & -0.41 &  5.31 & -11.24 & -14.99 \\
ResNet9 MNIST      & -1.16 & -0.03 & -0.19 &  0.02 &  0.10 &  93.02 &  76.58 \\
ResNet9 FMNIST     & -3.66 &  0.22 &  1.17 &  0.43 &  0.96 &  93.02 &  76.58 \\
ResNet50 CIFAR-10  & -1.25 & -1.69 &  0.02 & -0.99 &  0.80 &  92.63 &  52.25 \\
ResNet50 SVHN      &  0.28 & -0.63 &  1.90 & -0.54 &  3.06 &  94.17 &  80.41 \\
ResNet50 CIFAR-100 &  0.57 & -7.23 & -5.86 & -1.68 &  0.95 &  92.63 &  51.49 \\
ResNet50 ImageNet  & -0.21 &  0.25 &  2.89 & -1.92 & -2.57 &  51.16 &  77.58 \\
\midrule
\textbf{Average}   & -0.70 & -1.14 &  1.20 & -0.60 &  1.77 &  58.44 &  57.32 \\
\bottomrule
\end{tabular}
}
\label{table:relative_diff}
\end{table*}

\section{Implementation Details}\label{sec:appendix:implementation}
\begin{description}
    \item[BFE]  The implementation details for all figures in \cref{sec:theory} and \cref{sec:appendix:free_energy} can be found in the following \href{https://github.com/josephwilsonmaths/bfe}{code}.
    \item[Toy Regression] We use $n = 20$ training points, and $n = 10000$ test points. The network is a $1-$layer MLP with width $50$ and \textit{SiLU} activation function. We train this network for $10000$ epochs with a learning rate of $0.001$, using an \textit{Adam} optimizer and a \textit{Polynomial decay} learning rate scheduler. For the GLM posteriors, we train with $\eta = 1$, $S = 10$, epochs $= 5000$, learning rate = $0.001$, momentum $\mu = 0.9$. Post-hoc, we scale the variance by $\eta = \sqrt{200}$ for the DNN-GLM, $\eta = \sqrt{2000}$ for the LL-GLM, to be in line with the scale of the problem. We are able to scale $\eta$ post-hoc by the distribution of the linear predictive samples. Note that this post-hoc scaling would usually be achieved through a validation set. 
    \item[UCI Regression] Each dataset used a random $70/15/15$ split for training/test/validation, for each repeat of each experiment. An MLP with \textit{tanh} activation was used. Hyperparameters for experiments can be found in \cref{table:uci_reg_procedure}. Not that for DNN-GLM and LL-GLM, the full Jacobian could be stored and used for all datasets except for \textit{Song}. A \textit{PolyLR} scheduler refers to the \textit{PyTorch} \textit{Polynomial decay} learning rate scheduler. Note that for the GLM posterior inference, the GLMs were trained with $\lambda = 0.01$, to increase convergence; then, a ternary search algorithm was used to find the optimal $\lambda$ post-hoc (which scales the variance) that minimized the ECE on the validation set. See \citet[Appendix F.1]{wilson2025uncertainty} for more details on this approach. 
    \item[Toy Classification] Details on the datasets used can be found in \cref{sec:appendix:distance_aware}. For the \textbf{Three Islands} dataset, a $1-$layer MLP with \textit{SiLU} activation and width $200$ was trained for $1000$ epochs (with SGD w/ momentum $0.9$) with a learning rate of $0.01$. The DNN-GLM and LL-GLM used $100$ samples, and were trained for $1000$ epochs with learning rate $0.01$ and $0.03$, with $\eta = 12$ and $\eta = 27$ respectively. For the \textbf{Two Moons} dataset, a $1-$layer MLP with \textit{SiLU} activation and width $200$ was trained for $3000$ epochs (with SGD w/ momentum $0.9$) with a learning rate of $0.1$. The DNN-GLM and LL-GLM used $100$ samples, and were trained for $1000$ epochs with learning rate $0.01$ and $0.01$, with $\eta = 30$ and $\eta = 80$ respectively. These values of $\eta$ were chosen to display the distance-aware ability visually. For hyper-parameters of competing methods, please refer to the script in the referenced code repository. For plotting, the variance of the logits was passed through a \textit{log} function. For the probabilities, we took the maximum variance of the softmax probabilities over the classes, for each prediction. Before plotting, both variance types were normalized to be in a $[0,1]$ range. 
    \item[Image Classification] Details of the training procedure are found in \cref{table:image_class_procedure}. For CIFAR-10, we used a random horizontal flip and crop as a regularizer for training images. For the GLMs, DE and MC-Dropout, 10 posterior samples were used. A dropout probability of $0.1$ was used for MC-Dropout. For SWAG, the posterior learning rate was kept the same as that for the DNN for the CIFAR-100 and SVHN datasets; otherwise, SWAG used $100\times$ the learning rate. Further, the posterior epochs were kept the same as the DNN epochs, and $100$ posterior samples were used. For LLA, a last-layer approximation was used, as well as a KFAC approximation for all datasets. The hyper-parameters for LLA were chosen by maximizing the marginal likelihood. Note that for ImageNet, neither DE nor LLA were included, as the computational cost was too prohibitive. For SMS-UBU, we use a single chain and the full-subspace feature map. We use the hyperparameters listed in \citet[Section 5 \& Table 3]{paulin2024sampling}. Further, for the SWA \citep{izmailov2018averaging} component of SMS-UBU, we take the pre-trained DNN, run SWA (using Adam with weight decay equal to that for our original DNN training) for $5$ epochs, and then run SMS-UBU from the averaged parameters. We take $40$ posterior samples, and discard the first $10$ as a burn-in. 
    \item[GPT-2] The trained GPT-2 weights were taken from \textit{HuggingFace}; a classification head was then attached, and was trained on the binary IMDB movie dataset ($25000$ training points and $25000$ test points). The model was trained using the full context length for GPT-2, with $10$ epochs, a batch-size of $8$, the \textit{AdamW} optimizer with learning-rate $0.00002$ and $\epsilon=1e-8$, and a linearly decreasing learning rate scheduler. A training accuracy of $0.988$ was achieved. For the GLMs, a learning rate of $0.00001$ and scale $\eta = 0.01$ was used; $10$ posterior epochs were used. Due to memory constraints on the DNN-GLM, a batch size of $4$ was used for both the DNN-GLM and LL-GLM.
    \item[Singular-Value Distribution] For the singular-value histograms in \cref{fig:sv_distribution}, we take the Jacobian $\JJ_{L}(\hat{\btheta}_L, \XX)$, with respect to the parameters in the final connected layer, of a DNN that has been trained. For the UCI datasets, we take the training procedures in \cref{table:uci_reg_procedure}, and train the DNN on $5$ random training splits of the dataset. We then plot the average singular-value histogram of $\JJ_{L}(\hat{\btheta}_L, \XX)^T \JJ_{L}(\hat{\btheta}_L, \XX) \in \real^{p_L \times p_L}$, where the distribution has been normalized such that the maximum singular-value is $1$. For image classification, we take DNNs that have been trained according to \cref{table:image_class_procedure}, and the plot again the normalized singular-value histogram of $\JJ_{L}(\hat{\btheta}_L, \XX)^T \JJ_{L}(\hat{\btheta}_L, \XX) \in \real^{p_L \times p_L}$.
    \item[Low-Rank Approximation] For this experiment, we take the DNN and training procedures for the \textit{Energy} dataset, as detailed in \cref{table:uci_reg_procedure}. We then compute the predictive variance on the test set, for LL-GLM using $S = 100$, epochs = $5000$, learning rate $0.1$, and $\eta = 1$. We then take $10$ evenly-spaced integers between $10$ and $150$ (the numerical rank of $\JJ_{L}(\hat{\btheta}_L, \XX)$ for this dataset). For each integer $j$, we compute LL-GLM, with $\JJ_{L}(\hat{\btheta}_L, \XX)$ replaced by $\JJ_{\hat{\btheta}_L}(\XX ; j)$, i.e. the approximation based off of the SVD decomposition that limits the rank of $\JJ_{L}(\hat{\btheta}_L, \XX)$ to $j$. We compute the predictive posterior for this rank-restriction, and compute the relative $l_2$ norm difference compared to the original full-rank predictive variance. This is repeated for $2$ random repeats for each rank $j$. Importantly, we also compute this for the full-rank LL-GLM again, which gives us an idea of the inherent error in this system. We then plot the mean relative difference for each rank $j$.
    \end{description}

\begin{table}[]
\centering
\caption{Training procedure for UCI regression results in \cref{table:uci_regression}.}
\label{table:uci_reg_procedure}
\vskip 0.15in
\resizebox{\columnwidth}{!}{
\begin{tabular}{lccccccccc}
\toprule
\textbf{DNN} &
\textbf{Energy} &
\textbf{Concrete} &
\textbf{Kin8nm} &
\textbf{Naval} &
\textbf{CCPP} &
\textbf{Wine} &
\textbf{Yacht} &
\textbf{Protein} &
\textbf{Song} \\
\midrule
Learning Rate & $10^{-2}$ & $10^{-2}$ & $10^{-2}$ & $10^{-2}$ & $10^{-2}$ & $10^{-2}$ & $10^{-2}$ & $10^{-2}$ & $10^{-2}$ \\
Epochs        & $1500$    & $1000$    & $500$     & $150$     & $100$     & $100$     & $1000$    & $250$     & $50$      \\
Weight Decay  & $10^{-5}$ & $10^{-5}$ & $10^{-5}$ & $10^{-4}$ & $10^{-5}$ & $10^{-4}$ & $10^{-5}$ & $10^{-4}$ & $0$       \\
Optimizer     & Adam      & Adam      & SGD       & SGD       & Adam      & SGD       & Adam      & SGD       & SGD       \\
Scheduler     & PolyLR    & PolyLR    & None      & None      & PolyLR    & None      & PolyLR    & None      & None      \\
MLP Size      & $[150]$   & $[150]$   & $[100,100]$ & $[150,150]$ & $[100,100]$ & $[100]$ & $[100]$ & $[150,200,150]$ & $[1000,1000$ \\
              &           &           &             &             &             &         &         &                 & $,500,50]$ \\
\midrule
\multicolumn{10}{l}{\textbf{GLMs}} \\
\midrule
Learning Rate & $10^{-2}$ & $10^{-2}$ & $10^{-2}$ & $10^{-2}$ & $10^{-2}$ & $10^{-2}$ & $10^{-2}$ & $10^{-2}$ & $10^{-3}$ \\
Epochs        & $150$     & $100$     & $50$      & $15$      & $10$      & $10$      & $100$     & $25$      & $10$      \\
\midrule
\multicolumn{10}{l}{\textbf{Experiment}} \\
\midrule
No. experiments & $10$ & $10$ & $10$ & $10$ & $10$ & $10$ & $10$ & $10$ & $3$ \\
\bottomrule
\end{tabular}
}
\end{table}

\begin{table}[]
\centering
\caption{Training procedure for image classification results in \cref{fig:varroc_bar_plot} and \cref{table:varroc}.}
\label{table:image_class_procedure}
\vskip 0.15in
\resizebox{\columnwidth}{!}{
\begin{tabular}{lcccccccc}
\toprule
&
\textbf{LeNet5 MNIST} &
\textbf{LeNet5 FMNIST} &
\textbf{ResNet9 MNIST} &
\textbf{ResNet9 FMNIST} &
\textbf{ResNet50 CIFAR10} &
\textbf{ResNet50 CIFAR100} &
\textbf{ResNet50 SVHN} &
\textbf{ResNet50 ImageNet} \\
\midrule
\multicolumn{9}{l}{\textbf{DNN}} \\
\midrule
Learning Rate & $0.005$  & $0.005$  & $0.001$  & $0.001$  & $0.1$    & $0.1$    & $0.01$   & $0.1$ \\
Epochs        & $35$     & $35$     & $10$     & $10$     & $200$    & $200$    & $50$     & $10$ \\
Weight Decay  & $0.0001$ & $0.0001$ & $0.0001$ & $0.0001$ & $0.0005$ & $0.0005$ & $0.0005$ & $0.0005$ \\
Batch Size    & $152$    & $152$    & $100$    & $100$    & $128$    & $128$    & $128$    & $56$ \\
Optimizer     & Adam     & Adam     & Adam     & Adam     & SGD      & SGD      & SGD      & - \\
Scheduler     & Cosine   & Cosine   & Cosine   & Cosine   & Cosine   & Cosine   & Cosine   & - \\
Accuracy      & $99\%$   & $90\%$   & $99.5\%$ & $93.5\%$ & $92.5\%$ & $75\%$   & $87\%$   & $76\%$ \\
\midrule
\multicolumn{9}{l}{\textbf{DNN-GLM}} \\
\midrule
Learning Rate & $0.01$  & $0.01$  & $0.01$ & $0.001$ & $0.001$ & $0.005$ & $0.001$ & $0.001$ \\
Epochs        & $10$    & $10$    & $2$    & $2$     & $2$     & $2$     & $3$     & $2$ \\
Batch Size    & $152$   & $152$   & $152$  & $152$   & $152$   & $152$   & $152$   & $56$ \\
$S$           & $10$    & $10$    & $10$   & $10$    & $10$    & $10$    & $10$    & $10$ \\
$\gamma$      & $0.01$  & $0.01$  & $0.01$ & $0.01$  & $0.01$  & $0.01$  & $0.001$ & $0.001$ \\
\midrule
\multicolumn{9}{l}{\textbf{LL-GLM}} \\
\midrule
Learning Rate & $0.01$ & $0.01$ & $0.01$ & $0.01$ & $0.01$ & $0.01$ & $0.001$ & $0.01$ \\
Epochs        & $20$   & $20$   & $2$    & $2$    & $2$    & $2$    & $3$     & $3$ \\
Batch Size    & $56$   & $152$  & $152$  & $152$  & $152$  & $152$  & $152$   & $56$ \\
$S$           & $10$   & $100$  & $10$   & $10$   & $10$   & $10$   & $10$    & $10$ \\
$\gamma$      & $1.0$  & $1.0$  & $0.5$  & $0.5$  & $0.1$  & $0.1$  & $0.01$  & $0.001$ \\
\bottomrule
\end{tabular}
}
\end{table}

\end{document}

% This document was modified from the file originally made available by
% Pat Langley and Andrea Danyluk for ICML-2K. This version was created
% by Iain Murray in 2018, and modified by Alexandre Bouchard in
% 2019 and 2021 and by Csaba Szepesvari, Gang Niu and Sivan Sabato in 2022.
% Modified again in 2023 and 2024 by Sivan Sabato and Jonathan Scarlett.
% Previous contributors include Dan Roy, Lise Getoor and Tobias
% Scheffer, which was slightly modified from the 2010 version by
% Thorsten Joachims & Johannes Fuernkranz, slightly modified from the
% 2009 version by Kiri Wagstaff and Sam Roweis's 2008 version, which is
% slightly modified from Prasad Tadepalli's 2007 version which is a
% lightly changed version of the previous year's version by Andrew
% Moore, which was in turn edited from those of Kristian Kersting and
% Codrina Lauth. Alex Smola contributed to the algorithmic style files.